    \gdef\node@@on@layer{%
      \setbox\tikz@tempbox=\hbox\bgroup\pgfonlayer{#1}\unhbox\tikz@tempbox\endpgfonlayer\egroup}
\def\node@on@layer{\aftergroup\node@@on@layer}
\newtheorem{definition}{Definition}
\newtheorem{lemma}{Lemma}
\newtheorem{corollary}{Corollary}
\newtheorem{example}{Example}
\newtheorem{remark}{Remark}
\newcommand{\BlackBox}{\rule{1.5ex}{1.5ex}}  % end of proof
\newenvironment{proof}{\par\noindent{\bf Proof\ }}{\hfill\BlackBox\\[2mm]}
\newcommand{\reals}{\mathbb{R}}
\newlength{\dhatheight}
\DeclareMathOperator*{\E}{\mathbb{E}}
\DeclareMathOperator*{\prob}{\mathbb{P}}
\DeclareMathOperator*{\argmin}{argmin} 
\DeclareMathOperator*{\argmax}{argmax}
\newcommand{\figref}[1]{Figure~\ref{#1}}
\newcommand{\secref}[1]{Section~\ref{#1}}
\newcommand{\lemref}[1]{Lemma~\ref{#1}}
\newcommand{\defref}[1]{Definition~\ref{#1}}
\newcommand{\exampleref}[1]{Example~\ref{#1}}
\newcommand{\appref}[1]{Appendix~\ref{#1}}
\tikzset{
  lvl1/.style={draw,fill=blue!50,rounded corners=0.5cm,inner sep=5pt,node on layer=l1},
  lvl2/.style={draw,fill=blue!25,rounded corners=0.5cm,inner sep=5pt,node on layer=l2},
  lvl3/.style={draw=blue,fill=white,dashed,rounded corners=0.25cm,align=flush center,text width=10em,inner sep=4pt,minimum height=1cm},
  title/.style={node font=\LARGE,color=white},
  myarrow/.style={latex-latex,ultra thick,blue!80},
}
\title{On a Formal Model of Safe and Scalable Self-driving Cars}
\author{Shai Shalev-Shwartz, Shaked Shammah, Amnon Shashua}
\date{Mobileye, 2017}
\begin{document}

\maketitle

\begin{abstract}
  In recent years, car makers and tech companies have been racing
  towards self driving cars. It seems that the main parameter in this
  race is who will have the first car on the road. The goal of this
  paper is to add to the equation two additional crucial
  parameters. The first is standardization of safety assurance ---
  what are the minimal requirements that every self-driving car must
  satisfy, and how can we verify these requirements. The second
  parameter is scalability --- engineering solutions that lead to
  unleashed costs will not scale to millions of cars, which will push
  interest in this field into a niche academic corner, and drive the
  entire field into a ``winter of autonomous driving''. In the first
  part of the paper we propose a white-box, interpretable,
  mathematical model for safety assurance, which we call
  Responsibility-Sensitive Safety (RSS). In the second part we
  describe a design of a system that adheres to our safety assurance
  requirements and is scalable to millions of cars.
\end{abstract}

\section{Introduction}

The ``Winter of AI'' is commonly known as the decades long period of
inactivity following the collapse of Artificial Intelligence research
that over-reached its goals and hyped its promise until the inevitable
fall during the early 80s. We believe that the development of
Autonomous Vehicles (AV) is dangerously moving along a similar path
that might end in great disappointment after which further progress
will come to a halt for many years to come.

The challenges posed by most current approaches are centered around
lack of safety guarantees, and lack of scalability. Consider the issue
of guaranteeing a multi-agent safe driving (``Safety''). Given that
society will unlikely tolerate road accident fatalities caused by
machines, guarantee of Safety is paramount to the acceptance of
autonomous vehicles. Ultimately, our desire is to guarantee zero accidents, but this is
impossible since multiple agents are typically involved in an accident
and one can easily envision situations where an accident occurs solely
due to the blame of other agents (see Fig.~\ref{fig:absolute} for
illustration). In light of this, the typical response of practitioners
of autonomous vehicle is to resort to a statistical data-driven approach where Safety
validation becomes tighter as more mileage is collected.

To appreciate the problematic nature of a data-driven approach to
Safety, consider first that the probability of a fatality caused by an
accident per one hour of (human) driving is known to be $10^{-6}$. It
is reasonable to assume that for society to accept machines to replace
humans in the task of driving, the fatality rate should be reduced by
three orders of magnitude, namely a probability of $10^{-9}$ per
hour\footnote{This estimate is inspired from the fatality rate of air
  bags and from aviation standards. In particular, $10^{-9}$ is the
  probability that a wing will spontaneously detach from the aircraft
  in mid air.}.  In this regard, attempts to guarantee Safety using a
data-driven statistical approach, claiming increasing superiority as
more mileage is driven, are naive at best.  The amount of data
required to guarantee a probability of $10^{-9}$ fatality per hour of
driving is proportional to its inverse, $10^9$ hours of data (see
details in the sequel), which is roughly in the order of thirty
billion miles. Moreover, a multi-agent system interacts with its
environment and thus cannot be validated offline\footnote{unless a
  realistic simulator emulating real human driving with all its
  richness and complexities such as reckless driving is available, but
  the problem of validating the simulator is even harder than creating
  a Safe autonomous vehicle agent --- see
  \secref{sec:existing}.}, thus any change to the software of
planning and control will require a new data collection of the same
magnitude --- clearly unwieldy.  Finally, developing a system through
data invariably suffers from lack of transparency, interpretability,
and explainability of the actions being taken --- if an autonomous
vehicle kills someone, we need to know the reason. Consequently, a
model-based approach to Safety is required but the existing
"functional safety" and ASIL requirements in the automotive industry
are not designed to cope with multi-agent environments. Hence the need
for a formal model of Safety which is one of the goals of this paper.

The second area of risk lies with lack of scalability. The difference
between autonomous vehicles and other great science and technology achievements of the
past is that as a ``science project'' the effort is not sustainable
and will eventually lose steam. The premise underlying autonomous vehicles goes beyond
``building a better world'' and instead is based on the premise that
mobility without a driver can be sustained at a lower cost than with a
driver. This premise is invariably coupled with the notion of
scalability --- in the sense of supporting mass production of autonomous vehicles (in
the millions) and more importantly of supporting a negligible
incremental cost to enable driving in a new city. Therefore the cost
of computing and sensing does matter, if autonomous vehicles are to be mass
manufactured, the cost of validation and the ability to drive
``everywhere'' rather than in a select few cities is also a necessary
requirement to sustain a business.

%The issue with most current approaches is centered around a ``brute
%force'' state of mind along three axes: (i) the required ``computing
%density'', (ii) the way high-definition maps are defined and created,
%and (iii) the required specification from sensors. A brute-force
%approach goes against scalability and shifts the weight towards a
%future in which unlimited on-board computing is ubiquitous, somehow
%the cost of building and maintaining HD-maps becomes negligible and
%scalable, and that exotic super advanced sensors would be developed,
%productized to automotive grade, and at a negligible cost. A future
%for which any of the above holds is indeed plausible but having all of
%the above hold becomes a low-probability event.  
The combined issues
of Safety and Scalability contain the risk of ``Winter of autonomous vehicles''. The
goal of this paper is to provide a formal model of how Safety and
Scalability are pieced together into an autonomous vehicles program that society can
accept and is scalable in the sense of supporting millions of cars
driving anywhere in the developed countries.

The contribution of this paper is twofold. On the Safety front we
introduce a model called ``Responsibility Sensitive Safety'' (RSS)
which formalizes an interpretation of ``Duty of Care'' from Tort
law. The Duty of Care states that an individual should exercise
``reasonable care'' while performing acts that could harm others. RSS is
a rigorous mathematical model formalizing an interpretation of the law
which is applicable to self-driving cars. RSS is designed to achieve
three goals: first, the interpretation of the law should be {\it
  sound\/} in the sense that it complies with how humans interpret the
law. While we are at it we would like also to prove ``utopia'' --- meaning
that if all agents follow RSS's interpretation then there will be zero
accidents. Second, the interpretation should lead to a {\it useful\/}
driving policy, meaning it will lead to an agile driving policy rather
than an overly-defensive driving which inevitably would confuse other
human drivers and will block traffic and in turn limit the scalability
of system deployment; third, the interpretation should be {\it
  efficiently verifiable\/} in the sense that we can rigorously prove
that the self-driving car implements correctly the interpretation of
the law. The last property is not obvious at all because there could
be many interpretations which are not analytically verifiable because
of ``butterfly effects'' where a seemingly innocent action could lead to
an accident of the agent's fault in the longer future.
 
As highlighted in Fig.~\ref{fig:absolute}, guaranteeing that an agent
will never be involved in an accident is impossible. Hence, our
ultimate goal is to guarantee that an agent will be careful enough so
as it will never be part of the \emph{cause} of an accident.  In other
words, the agent should never cause an accident and should be cautious
enough so as to be able to compensate for \emph{reasonable} mistakes
of other drivers.  Also noteworthy, is that the definition of RSS is
agnostic to the manner in which it is implemented --- which is a key
feature to facilitate our goal of creating a convincing global safety
model.

Our second contribution evolves around the introduction of a
``semantic'' language that consists of units, measurements, and action
space, and specification as to how they are incorporated into
Planning, Sensing and Actuation of the autonomous vehicles. To get a sense of what we
mean by Semantics, consider how a human taking driving lessons is
instructed to think about ``driving policy''.  These instructions are
not geometric --- they do not take the form ``drive 13.7 meters at the
current speed and then accelerate at a rate of 0.8 $m/s^2$''. Instead,
the instructions are of a semantic nature --- ``follow the car in
front of you'' or ``overtake that car on your left''. The language of
human driving policy is about longitudinal and lateral goals rather
than through geometric units of acceleration vectors. We develop a
formal Semantic language and show that the Semantic model is crucial
on multiple fronts connected to the computational complexity of
Planning that do not scale up exponentially with time and number of
agents, to the manner in which Safety and Comfort interact, to the way
the computation of sensing is defined and the specification of sensor
modalities and how they interact in a fusion methodology. We show how
the resulting fusion methodology (based on the semantic language)
guarantees the RSS model to the required $10^{-9}$ probability of
fatality, per one hour of driving, while performing only offline
validation over a dataset of the order of $10^5$ hours of driving
data.

Specifically, we show that in a reinforcement learning setting we can
define the Q function\footnote{A function evaluating the long term
  quality of performing an action $a\in A$ when the agent is at state
  $s\in S$. Given such a Q-function, the natural choice of an action
  is to pick the one with highest quality,
  $\pi(s) = \argmax_a Q(s, a)$.}  over actions defined over a semantic
space in which the number of trajectories to be inspected at any given
time is bounded by $10^4$ regardless of the time horizon used for
Planning. Moreover, the signal to noise ratio in this space is high,
allowing for effective machine learning approaches to succeed in
modeling the Q function. In the case of computation of sensing,
Semantics allow to distinguish between mistakes that affect Safety
versus those mistakes that affect the Comfort of driving. We define a
PAC model\footnote{Probably Approximate Correct (PAC), borrowing
  Valiant's PAC-learning terminology.} for sensing which is tied to
the Q function and show how measurement mistakes are incorporated into
Planning in a manner that complies with RSS yet allows to optimize the
comfort of driving. The language of semantics is shown to be crucial
for the success of this model as other standard measures of error,
such as error with respect to a global coordinate system, do not
comply with the PAC sensing model. In addition, the semantic language
is also a critical enabler for defining HD-maps that can be
constructed using low-bandwidth sensing data and thus be constructed
through crowd-sourcing and support scalability.
 
To summarize, we propose a formal model that covers all the important
ingredients of an autonomous vehicle: sense, plan and act. The model
guarantees that from a Planning perspective there will be no accidents
which are caused by the autonomous vehicle, and also through a PAC
sensing model guarantees that, with sensing errors, a fusion
methodology we present will require only offline data collection of a
very reasonable magnitude to comply with our Safety
model. Furthermore, the model ties together Safety and Scalability
through the language of semantics, thereby providing a complete
methodology for a safe and scalable autonomous vehicles. Finally, it
is worth noting that developing an accepted safety model that would be
adopted by the industry and regulatory bodies is a necessary condition
for the success of autonomous vehicles --- and it is better to do it
earlier rather than later. An early adoption of a safety model will
enable the industry to focus resources along a path that will lead to
acceptance of autonomous vehicles. Our RSS model contains parameters
whose values need to be determined through discussion with regulatory
bodies and it would serve everyone if this discussion happens early in
the process of developing autonomous vehicles solutions.

\subsection{Safety: Functional versus Nominal}

When dealing about safety it is important to bear in mind the
distinction between \emph{functional} versus \emph{nominal} safety.
Functional Safety (FuSa) refers to the integrity of the operation in
an electrical (i.e. HW/SW) subsystem that is operating in a safety
critical domain.  Functional Safety is concerned with a failure in HW
or bugs in the SW that could lead to a safety hazard.  For the
automotive industry this is well covered by ISO 26262 which defines
different Automotive Safety Integrity Levels (ASIL) that provide
Failure In Time (FIT) targets for HW and also define systematic
processes for how SW should be defined, developed and tested such that
it conforms with good systems engineering practices.  These include
the rigorous maintenance of requirements and traceability from those
requirements to different safety goals of the system.

However, the most Functionally Safe vehicle in the world can still
crash into everyone and everything due to bad logic in the code that
results in an unsafe driving decision.  Functional Safety cannot help
us here; instead this is the domain of Nominal safety.  Nominal safety
is the concern of whether the AV is making safe logical decisions
assuming that the HW and SW systems are operating error free (i.e. are
functionally safe).  Functional Safety then is a necessary, but not
sufficient measure of safety assurance when it comes to evaluating the
safety of an AV.  In fact, there exists no nominal safety standard for
the safe decision making capabilities of an AV.  In the remainder of
this paper, it is the nominal safety we focus on.

\subsection{Safety: Sense/Plan/Act Methodology}

Automated Vehicles are robotic systems and contain three primary
stages of functionality: Sense, Plan and Act.  Sensing is the ability
to accurately perceive the environment around the vehicle. Planning,
commonly referred to as {\it driving policy}, is where decisions are
made about what strategic (i.e. change lanes) and tactical
(i.e. overtake the blue car) decisions to take. Acting is the
issuance of the decision (translated into mathematical trajectories
and velocities) to the various actuators within the vehicle to perform
the driving decision. The focus of the paper is on the sensing
and planning parts (since the acting part is by and large well
understood by control theory).

Mistakes that might lead to accidents can stem from sensing errors or
planning errors.  Validation of Sensing systems can be efficiently
performed offline\footnote{Strictly speaking, the vehicle actions
  might change the distribution over the way we view the
  environment. However, this dependency can be circumvented by data
  augmentation techniques.}  through the use of large ground truth
data sets and sensing errors can be mitigated through redundant
sensing modalities that ensure there are at least two independent
subsystems to detect any one object, which also serves to simplify
validation for each subsystem. A detailed description of the redundant
system approach is given in \secref{sec:sensing-safety}.  Planning on
the other hand, presents unique validation challenges.  Driving a
vehicle is a multi-agent process and decisions should dependen on the
actions and responses of others.  This is why when humans take a
driving test, we do not do so on a closed track but rather in the real
world because it is only there that our multi-agent decision making
capabilities can be sufficiently evaluated. In the next section we
review existing approaches for evaluating the safety of planning, and
in \secref{sec:RSS} we describe our RSS model.

\section{Existing Approaches to Claims on Safe AV Decision Making}\label{sec:existing}

Five approaches for evaluating the safety of AV are currently being
promoted in the industry: miles driven, disengagements, simulation,
scenario based testing and proprietary approaches.

The ``Miles driven'' approach is based on a statistical argument attempting to show
that self-driving cars are statistically better than human
drivers. This approach is problematic because of the sheer amount of miles that would need to be driven to gain enough statistical evidence that a claimed probability of error (i.e. the chance of making an unsafe driving decision) has been met. 
In the following technical lemma, we formally show why a
statistical approach to validation of an autonomous vehicles system is infeasible, even
for validating a simple claim such as ``on average, the system makes
an accident once in $N$ hours''. 
\begin{lemma}\label{lem:statistical_validation_infeasible}
  Let $X$ be a probability space, and $A$ be an event for which
  $\Pr(A)=p_1<0.1$. Assume we sample $m=\frac{1}{p_1}$ i.i.d. samples from
  $X$, and let $Z=\sum_{i=1}^m \mathbf{1}_{[x\in A]}$. Then 
\[
\Pr(Z=0)\geq e^{-2}.
\]
\end{lemma}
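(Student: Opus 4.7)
The plan is to observe that by independence,
\[
\Pr(Z=0) \;=\; \prod_{i=1}^m \Pr(x_i \notin A) \;=\; (1-p_1)^m \;=\; (1-p_1)^{1/p_1},
\]
so the lemma reduces to the purely analytic inequality $(1-p)^{1/p} \geq e^{-2}$ for all $p \in (0, 0.1)$.

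To prove this, I would establish the pointwise bound $1 - p \geq e^{-2p}$ on the range $p \in [0, 0.1]$ and then raise both sides to the power $1/p$. Taking logarithms, this is equivalent to $\ln(1-p) \geq -2p$, or equivalently $g(p) := \ln(1-p) + 2p \geq 0$ on $[0,0.1]$. I would check this via standard calculus: $g(0)=0$ and $g'(p) = -\tfrac{1}{1-p} + 2$, which is nonnegative precisely when $p \geq \tfrac{1}{2}$; on $[0,\tfrac12]$ the function $g$ first decreases and then increases, but a single evaluation at the right endpoint of our interval, $g(0.1) = \ln(0.9) + 0.2 \approx -0.1054 + 0.2 > 0$, combined with a check that $g$ is decreasing on $[0,0.1]$ with a sufficient margin, suffices. (Alternatively, one can simply invoke the Taylor expansion $\ln(1-p) = -p - p^2/2 - p^3/3 - \cdots$ and bound the tail by $-p^2/(1-p) \geq -2p$ for $p \leq 1/2$, giving $\ln(1-p) \geq -p - p \geq -2p$ for $p \leq 0.1$.)

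Once the inequality $1-p_1 \geq e^{-2p_1}$ is in hand, the proof finishes in one line:
\[
(1-p_1)^{1/p_1} \;\geq\; \bigl(e^{-2p_1}\bigr)^{1/p_1} \;=\; e^{-2}.
\]

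I do not expect a real obstacle here; the only subtlety is choosing the cleanest form of the elementary inequality so that the constant $0.1$ in the hypothesis is enough. The bound $1-p \geq e^{-2p}$ is essentially tight for small $p$ (the constant $2$ cannot be replaced by $1$, which is why $e^{-1}$ would not be the right target), and it comfortably covers the regime $p_1 < 0.1$, leaving the constant $e^{-2}$ on the right-hand side as stated.
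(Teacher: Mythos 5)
Your proposal is correct and follows essentially the same route as the paper: both reduce to $\Pr(Z=0)=(1-p_1)^{1/p_1}$ and then invoke the elementary inequality $1-p\geq e^{-2p}$ on $[0,0.1]$, which the paper proves in its appendix by exactly the kind of calculus you sketch. One small sign slip: $g'(p)=-\frac{1}{1-p}+2$ is nonnegative precisely when $p\le\frac{1}{2}$ (not $p\ge\frac{1}{2}$), so $g$ is in fact \emph{increasing} on $[0,0.1]$ and the conclusion follows immediately from $g(0)=0$ without any endpoint evaluation; your alternative Taylor-tail bound is correct as written.
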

\begin{proof}
We use the inequality $1-x\geq
e^{-2x}$ (proven for completeness in \appref{lem:1-x}), to get
\[\Pr(Z=0)=(1-p_1)^m\geq e^{-2p_1 m}=e^{-2}.\]
\end{proof}
\begin{corollary}\label{cor:cannot_validate}
  Assume an autonomous vehicle system $AV_1$ makes an accident with small yet
  insufficient probability $p_1$. Any deterministic validation
  procedure which is given $1/p_1$ samples, will, with constant
  probability, not distinguish between $AV_1$ and a different autonomous vehicle
  system $AV_0$ which never makes accidents.
\end{corollary}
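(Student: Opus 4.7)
The plan is to reduce the corollary directly to Lemma \ref{lem:statistical_validation_infeasible} by constructing $AV_0$ as a coupled version of $AV_1$ that simply never produces the accident event. Concretely, I would take $X$ to be the sample space of a single unit of driving experience (say, one hour), let $A \subseteq X$ be the subset of outcomes in which $AV_1$ is involved in an accident, and define $AV_0$ to have the law of $AV_1$ conditioned on $A^c$. Under this coupling, a per-sample draw from $AV_0$ is indistinguishable in distribution from a draw from $AV_1$ conditioned on no accident, while the \emph{marginal} accident rates of the two systems differ: $p_1>0$ versus $0$.

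The next step is to unpack the phrase ``deterministic validation procedure'' as a fixed measurable function $V : X^m \to \{\text{safe},\text{unsafe}\}$ of the $m = 1/p_1$ observed samples. Let $E$ denote the event that none of the $m$ i.i.d.\ samples drawn from $AV_1$ lies in $A$. By the coupling above, conditional on $E$ the input fed to $V$ has exactly the same distribution as a sample drawn from $AV_0$. Therefore $V$'s output on the $AV_1$-sample agrees in distribution, conditional on $E$, with its output on a genuine $AV_0$-sample, so $V$ returns the same decision under both systems whenever $E$ occurs.

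To close the argument, I would apply Lemma \ref{lem:statistical_validation_infeasible} with this $A$ and $p_1 = \Pr(A)$ to obtain
\[
\Pr(E) \;=\; \Pr(Z = 0) \;\geq\; e^{-2},
\]
which is precisely the constant probability of non-distinguishability asserted by the corollary.

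The main obstacle is purely conceptual rather than computational: one must pin down what ``distinguishes'' should mean for a deterministic validator, and then recognize that on the $e^{-2}$-probability event of an accident-free test run no statistical procedure — however elaborate — can detect the small but nonzero accident rate of $AV_1$. Once the coupling between $AV_0$ and $AV_1$ and the interpretation of $V$ are made explicit, nothing beyond the previous lemma is needed; the work lies entirely in setting up the right modeling picture, not in any nontrivial calculation.
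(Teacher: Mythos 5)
Your argument is correct and is precisely the (implicit) argument of the paper: the corollary is stated there as an immediate consequence of Lemma~\ref{lem:statistical_validation_infeasible}, namely that with probability at least $e^{-2}$ the validator observes zero accidents and hence sees data indistinguishable from that of an accident-free system. Your additional care in constructing $AV_0$ as the conditional law of $AV_1$ on $A^c$ and in formalizing the validator as a measurable map $V:X^m\to\{\text{safe},\text{unsafe}\}$ only makes explicit what the paper leaves to the reader.
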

In order to gain perspective over the typical values for such
probabilities, consider public accident statistics in the United
States. The probability of a fatal accident for a human driver in 1
hour of driving is $10^{-6}$.  From the Lemma above, if we want to
claim that an AV meets the same probability of a fatal accident, one
would need more than $10^6$ hours of driving.  Assuming that the average speed
in 1 hour of driving is 30 miles per hour, the AV would need to drive
30 million miles to have enough statistical evidence that the AV under
test meets the same probability of a fatal accident in 1 hour of
driving as a human driver. However, this would only deliver an AV as
good as a human, whereas the promise of AVs is to deliver a level of
safety above and beyond any human. Hence, if instead our target was to
be three orders of magnitude safer than a human, we have a new
probability target of $10^{-9}$, which would then require us to drive
30 Billion miles to have enough statistical evidence to argue that the
probability of the AV making an unsafe decision that will lead to a
fatality per one hour of driving is $10^{-9}$.  And then what if a
single line of code has changed in the AV's planning software?  Now
the testing must start all over again as it is impossible to know if
that code change has resulted in a new failure that was not present
during the first 30 Billion mile drive.

Further, there is the concern whether the miles driven are meaningful to begin with. One could easily create ``low value'' driving experiences, such as accumulating miles on empty roads without other road users to challenge the driving policy of the system. While a certain amount of meaningful miles is warranted when performing on-road testing (with the specific number and kind of miles to be defined together by government and industry), using a generic miles driven argument to make safety claims on the decision making capabilities of an AV does not provide sufficient assurance of whether an AV knows how to drive safely and so is thus worthy of a license to drive.

Another metric frequently cited as proof of the safety of an AV's decision making is known as a {\it disengagement}.  A disengagement is roughly defined as a situation where a human safety driver had to intervene in the operation of the AV because it made an unsafe decision that was going to lead to an accident. The main rational of the disengagements metric is that it counts for a more frequent statistical event: ``almost an accident'' rather than ``an accident'' or ``an accident leading to a fatality''. The problem, however, is twofold --- first, like in the ``mileage driven'' approach there is an issue with the distribution of ``easy'' versus ``challenging'' cases during testing. An AV that is tested in ``easy'' environments with limited traffic could rack up mileage with zero disengagements yet will not be ``safe'' to drive in real congested traffic. Second, the assumption  
about the relation between ``almost an accident'' and ``an accident'' is likely to not hold for the difficult rare cases.

Another approach to validate safety-claims of an AV is through {\it simulation}. The idea is to build a simulator with a virtual world within which the AV's software will be driven 10s of Billions of miles as a way to achieve the huge ``miles driven'' targets that would be needed to make a statistical claim on the safety of the AV's decision making capabilities.

The problem with this
argument is that validating that the simulator faithfully represents
reality is as hard as validating the driving policy itself. To see why this is
true, suppose that the simulator has been validated in the sense that
applying a driving policy $\pi$ in the simulator leads to a
probability of an accident of $\hat{p}$, and the probability of an
accident of $\pi$ in the real world is $p$, with
$|p - \hat{p}| < \epsilon$. (Say that we need that $\epsilon$ will be
smaller than $10^{-9}$.)  Now we replace the driving policy to be
$\pi'$. Suppose that with probability of $10^{-8}$, $\pi'$ performs a
weird action that confuses human drivers and leads to an accident. It
is possible (and even rather likely) that this weird action is not
modeled in the simulator, without contradicting its superb
capabilities in estimating the performance of the original policy
$\pi$.  This proves that even if a simulator has been shown to reflect
reality for a driving policy $\pi$, it is not guaranteed to reflect
reality for another driving policy.

Another active area of work that claims to provide assurance of an AV's ability to make safe driving decisions is {\it scenario based\/} verification.  The basic idea is that if only we could enumerate all the possible driving scenarios that could exist in the entire world then we could simply expose the AV (via simulation, closed track testing or on-road testing) to all of those scenarios and as a result we can now be confident that the AV will only ever make safe driving decisions.

The challenge with scenario-based approaches has to do with the notion of ``generalization'', in the sense of the underlying assumption that if the AV passes the scenarios successfully then it is likely to pass other similar scenarios as well. The danger, just as in machine learning, is ``overfitting'' the system to pass the test. Even if extra care is taken not to overfit, the arguments of generalization are weak at best.

%\section{Multi-agent Safety}\label{sec:safety}
%
%This section formalizes our arguments with regard to the necessity
%of a thorough safety definition, a minimal standard to which autonomous vehicle
%systems must abide. 
%
%\subsection{Absolute Safety is Impossible}

\begin{figure}
\def\height{2}
\def\width{1}
\begin{center}
\begin{tikzpicture}[scale=0.5, every node/.style={transform shape}]
\fill [gray!80] (-2.5*\width,-2*\height) rectangle
(2.5*\width,2*\height);
\foreach \x in {-0.75*\width, 0.75*\width, -2.25*\width, 2.25*\width}
 \draw [white, densely dashed, very thick] (\x, -2*\height)--(\x,
2*\height);
\node[rotate=90] at (-2*\width+0.5,0) {\includegraphics[width=2cm]{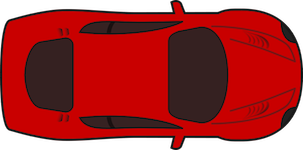}};
\node[rotate=90] at (2*\width-0.5,0) {\includegraphics[width=2cm]{red_car}};
\node[rotate=90] at (0,2.5) {\includegraphics[width=2cm]{red_car}};
\node[rotate=90] at (0,-2.5) {\includegraphics[width=2cm]{red_car}};
\node[rotate=90] at (0,0) {\includegraphics[width=2cm]{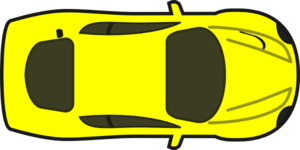}};
\end{tikzpicture}
\caption{The central car can do nothing to ensure absolute safety.}\label{fig:absolute}
\end{center}
\end{figure}

\section{The Responsibility-Sensitive Safety (RSS) model for Multi-agent Safety}\label{sec:RSS}

The discussion above focused on the shortcomings of the existing approaches for validating the nominal safety of an AV agent. Before we proceed, we should rule out what is clearly infeasible which is the naive statement that an AV, sharing the road with human-driven cars, will never be involved in an accident --- a statement we refer to as {\it Utopia}.

We say an action $a$
taken by a car $c$ is \emph{absolutely safe} if no accident can follow
it at some future time. It is easy to see that it is impossible to
achieve absolute safety, by observing simple driving scenarios, for
example, as depicted in \figref{fig:absolute}: from the central car's
perspective, no action can ensure that none of the surrounding cars will
crash into it, and no action can help it escape this potentially
dangerous situation. We emphasize that solving this problem by
forbidding the autonomous car from being in such situations is
completely impossible --- every highway with more than 2 lanes will
lead to it and forbidding this scenario amounts to staying in the parking lot.

Instead we refer to the driving forces underlying human judgement when
sharing the road with other road-users. Traffic laws are well defined
until one encounters the elusive directive, common in Tort law, called
\emph{the Duty of Care}.

The Duty of Care states that an individual should exercise
``reasonable care'' while performing acts that could harm others. What
is meant in ``being careful'' is open for interpretation and must
follow societal norms whose definitions are fluid, change over time,
and gradually get clarified through legal precedents over past
accidents that went through court proceedings for a resolution. A
human driver must exercise care due to the uncertainty regarding the
actions of other road users. If the driver must take into account the
extreme worst case about the actions of other road users then driving
becomes impossible. Hence, the human driver makes some ``reasonable''
assumptions about the worst case scenarios of other road-users. We
refer to the assumptions being made as an ``interpretation'' of the
Duty of Care law.

Responsibility-Sensitive-Safety (RSS) is a rigorous mathematical model
formalizing an interpretation of the Duty of Care law. RSS is designed
to achieve three goals: first, the interpretation of the law should be
{\it sound\/} in the sense that it complies with how humans interpret
the law. While we are at it we would like also to prove ``AI-Utopia''
--- meaning that if all agents follow RSS interpretation then there
will be zero accidents. Second, the interpretation should lead to a
{\it useful\/} driving policy, meaning it will lead to an agile
driving policy rather than an overly-defensive driving which
inevitably would confuse other human drivers and will block traffic
and in turn limit the scalability of system deployment; As an example
of a valid, but not useful, interpretation is to assume that in order
to be ``careful'' our actions should not affect other road
users. Meaning, if we want to change lane we should find a gap large
enough such that if other road users continue their own motion
uninterrupted we could still squeeze-in without a collision. Clearly,
for most societies this interpretation is over-cautious and will lead
the AV to block traffic and be non-useful.

Third, the interpretation should be {\it efficiently verifiable\/} in the sense that we can rigorously prove that the self-driving car implements correctly the interpretation of the law. The last property is not obvious at all because there could be many interpretations which are not analytically verifiable because of ``butterfly effects'' where a seemingly innocent action could lead to an accident of the agent's fault in the longer future. One way to ensure efficient verification is to design the interpretation to follow the inductive principle --- a feature we designed into the RSS.
 
By and large, RSS is constructed by formalizing the following 5
``common sense'' rules:
\begin{enumerate}
\item Do not hit someone from behind.
\item Do not cut-in recklessly.
\item Right-of-way is given, not taken.
\item Be careful of areas with limited visibility
\item If you can avoid an accident without causing another one, you must do it.
\end{enumerate}

The subsections below formalize those rules. We begin below with a simple scenario in order to get used to the concept we will be developing in the remainder of the paper. 
%There are two aspects that a good formal model should satisfy:
%\begin{itemize}
%\item \textbf{Soundness}: when the model says that the self-driving car is not responsible for an
%  accident, it should clearly match the ``common sense'' of human
%  judgement. Note that we allow the model to assign responsibility on the self-driving car in fuzzy
%scenarios, possibly resulting in extra cautiousness, as long as the
%model is still useful.
%\item \textbf{Usefulness}: it is possible to efficiently create a policy that
%  guarantees to never cause accidents while still maintaining normal
%  flow of traffic.
%\end{itemize}
%Satisfying each of these requirements individually is trivial --- a
%model that always assign responsibility to the self-driving car is
%sound but not useful, while a model that never assign responsibility
%to the self-driving car is useful but not sound. The proposed RSS
%model satisfies both soundness (which is the focus of this section)
%and usefulness (which is the focus of the next section). 

\subsection{Gentle Start --- Single Lane Road} \label{sec:safe-distance}

We start with the simplest possible scenario: a single lane road,
where cars cannot perform lateral manoeuvres. This scenario will allow
us to introduce some first, simplistic versions of key concepts such as
\emph{safe distance}, \emph{dangerous situation}, \emph{proper
  response}, and \emph{responsibility}. It also enables us to showcase
our technique for formally proving the safety of a driving policy.

When driving along a single lane road, the common sense rule is ``if someone
hits you from behind it is not your fault''. So, a first try would be
to define that if a rear car, $c_r$, hits a front car, $c_f$, from
behind, then $c_r$ is responsbile for the accident. But, this
definition does not always comply with common sense. For example,
suppose that both $c_r$ and $c_f$ are driving slowly up a hill, and then
$c_f$ slows down and starts rolling backward until it hits $c_r$. Even
though $c_r$ hit $c_f$ from behind, the common sense in this situation
is that $c_f$ should be responsible. 

We will get back to this issue in later subsections, so for the
remainder of this subsection, we assume that cars never drive
backward. We turn to discuss the more interesting question of: ``how
can $c_r$ ensure that it will never hit $c_f$ from
behind''. Intuitively, it is the responsibility of $c_r$ to keep a
``safe distance'' from $c_f$, and this ``safe distance'' should be
large enough so that no matter what, $c_r$ will not hit $c_f$. In our
simple case, the worst case situation is that $c_f$ will suddenly
brake hard, it will take $c_r$ some time to figure this out and to
brake as well, and then both cars will decelerate until reaching a full
stop.  A formalism of this concept is given below.
\begin{definition}[Safe longitudinal distance --- same
  direction] \label{def:dmin} A \emph{longitudinal distance} between a
  car $c_r$ that drives behind another car $c_f$, where both cars are
  driving at the same direction, is \emph{safe} w.r.t. a response time
  $\rho$ if for any braking of at most $a_{\max,\mathrm{brake}}$,
  performed by $c_f$, if $c_r$ will accelerate by at most
  $a_{\max,\mathrm{accel}}$ during the response time, and from there
  on will brake by at least $a_{\min,\mathrm{brake}}$ until a full
  stop then it won't collide with $c_f$.
\end{definition}
\lemref{lem:dmin} below calculates
the safe distance as a function of the velocities of $c_r,c_f$ and the
parameters in the definition.
\begin{lemma} \label{lem:dmin} Let $c_r$ be a vehicle which is behind
  $c_f$ on the longitudinal axis. Let
  $\rho,a_{\max,\mathrm{brake}}, a_{\max,\mathrm{accel}},
  a_{\min,\mathrm{brake}}$ be as in \defref{def:dmin}. Let $v_r,v_f$
  be the longitudinal velocities of the cars. Then, the minimal safe
  longitudinal distance between the front-most point of $c_r$ and the
  rear-most point of $c_f$ is:
\[
d_{\min} = \left[v_r \, \rho + \frac{1}{2}
  a_{\max,\mathrm{accel}}\,\rho^2 + \frac{(v_r + \rho\, a_{\max,\mathrm{accel}})^2}{2
    a_{\min,\mathrm{brake}}}  - \frac{v_f^2}{2 a_{\max,\mathrm{brake}}}\right]_+~,
\]
where we use the notation $[x]_+ := \max\{x,0\}$.
\end{lemma}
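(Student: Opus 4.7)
The plan is to reduce to a worst-case kinematic scenario and then compare total distances traveled by the two cars until each stops. Among the admissible joint behaviors in \defref{def:dmin}, the situation that requires the largest $d$ is: $c_f$ decelerates at constant $a_{\max,\mathrm{brake}}$ (which minimizes $c_f$'s total travel distance for a given $v_f$), $c_r$ accelerates at exactly $a_{\max,\mathrm{accel}}$ throughout the response phase $[0,\rho]$, and $c_r$ then brakes at exactly $a_{\min,\mathrm{brake}}$ until stopping. Safety then reduces to requiring that the gap $g(t) := x_f(t) - x_r(t)$ stays non-negative at every $t \geq 0$.

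Next I would compute, by elementary kinematics, the total distances $D_r$ and $D_f$ each car travels until full stop. For $c_r$ this splits into the response phase, contributing $v_r\rho + \tfrac{1}{2} a_{\max,\mathrm{accel}}\rho^2$ and leaving $c_r$ with velocity $v_r + \rho\,a_{\max,\mathrm{accel}}$, and the subsequent braking phase, contributing $(v_r + \rho\,a_{\max,\mathrm{accel}})^2/(2a_{\min,\mathrm{brake}})$. For $c_f$, constant maximum braking yields $D_f = v_f^2/(2a_{\max,\mathrm{brake}})$. Since both cars eventually come to rest, $\lim_{t \to \infty} g(t) = d + D_f - D_r$.

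The key analytic step is to show $\inf_{t\geq 0} g(t) = \min\{g(0),\, \lim_{t\to\infty} g(t)\} = \min\{d,\; d + D_f - D_r\}$. I would establish this by showing $g$ is unimodal on $[0,\infty)$. Let $T_f$ be $c_f$'s stopping time. On $[0, T_f]$, $g'(t) = v_f(t) - v_r(t)$ is non-increasing because $v_f$ falls at the maximum rate while $v_r$ either rises or falls at rate at most $a_{\max,\mathrm{brake}}$; on $[T_f, \infty)$, $v_f \equiv 0$ and $v_r$ decays monotonically to $0$, so $g'$ rises back to $0$. Hence the sign pattern of $g'$ is non-negative then non-positive (at most one sign change), so $g$ is first non-decreasing and then non-increasing. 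Its infimum is therefore attained either at $t=0$ or in the limit $t \to \infty$, and the safety requirement $\inf g \geq 0$ becomes $d \geq \max\{0,\, D_r - D_f\} = [D_r - D_f]_+$, which matches the stated formula once $D_r$ and $D_f$ are substituted.

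The main obstacle is justifying the two reductions above cleanly. For the worst case on $c_f$, I would argue that among all decelerations bounded by $a_{\max,\mathrm{brake}}$, constant maximum braking minimizes stopping distance and is therefore the hardest case for $c_r$; any smaller deceleration only gives $c_r$ more room. For the unimodality claim, the nontrivial detail is the natural convention $a_{\min,\mathrm{brake}} \leq a_{\max,\mathrm{brake}}$, which makes $v_r$ decrease no faster than $v_f$ in the joint braking phase $[\rho, T_f]$ and is what guarantees that $g'$ stays monotone on $[0, T_f]$. With that convention in place the argument is routine.
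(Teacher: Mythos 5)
Your proof is correct and takes essentially the same route as the paper's: reduce to the pointwise worst-case behaviors, observe that the relative velocity changes sign at most once so the gap is minimized either at $t=0$ or at full stop, and then compare the two stopping distances. The only difference is that you spell out the unimodality of the gap (and the role of $a_{\min,\mathrm{brake}} \le a_{\max,\mathrm{brake}}$) more explicitly than the paper, which simply asserts that the worst case occurs at time zero or at full stop.
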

\begin{proof}
  Let $d_0$ denote the initial distance between $c_r$ and $c_f$.
  Denote $v_{\rho,\max} = v_r + \rho\, a_{\max,\mathrm{accel}}$.  The
  velocity of the front car decreases with $t$ at a rate
  $a_{\max,\mathrm{brake}}$ (until arriving to zero or that a
  collision happens), while the velocity of the rear car increases in
  the time interval $[0,\rho]$ (until reaching $v_{\rho,\max}$) and
  then decreases at a rate
  $a_{\min,\mathrm{brake}} < a_{\max,\mathrm{brake}}$ until arriving
  to zero or to a collision. It follows that if at some point in time
  the two cars have the same velocity, then from there on, the front
  car's velocity will be smaller, and the distance between them will
  be monotonically decreasing until both cars reach a full stop (where
  the ``distance'' can be negative if collision happens). From this it
  is easy to see that the worst-case distance will happen either at
  time zero or when the two cars reach a full stop. In the former case
  we should require $d_0 > 0$. In the latter case, the distances the front and rear
  car will pass until a full stop is
  $\frac{v_f^2}{2 a_{\max,\mathrm{brake}}} $ and
  $ v_r \, \rho + \frac{1}{2} a_{\max,\mathrm{accel}}\,\rho^2 +
  \frac{v_{\rho,\max}^2}{2 a_{\min,\mathrm{brake}}} $. At that point,
  the distance between them should be larger than zero, 
\[
d_0 + \frac{ v_f^2}{2 a_{\max,\mathrm{brake}}} - \left( v_r \, \rho +
  \frac{1}{2} a_{\max,\mathrm{accel}}\,\rho^2 + \frac{
    v_{\rho,\max}^2}{2 a_{\min,\mathrm{brake}}}\right)  > 0 ~.
\]
Rearranging terms, we conclude our proof. 
\end{proof}

Let us now get back to the question of ``how can one make sure to
never hit someone from behind''. The basic idea is as follows.  When
the distance between a rear car, $c_r$, and a front car, $c_f$, is not
safe, we say that the situation is \emph{dangerous}. Let $t_b$ be some
time in which the situation becomes dangerous (that is, $t_b$ is a
dangerous time but immediately before $t_b$ the situation is not
dangerous). We require $c_r$ to \emph{properly respond} to the
dangerous situation as follows: in the time interval $[t_b,t_b+\rho]$,
$c_r$ can apply any acceleration as long as it doesn't exceed
$a_{\max,\mathrm{accel}}$. After that, and as long as the situation is
still dangerous, $c_r$ must brake by at least
$a_{\min,\mathrm{brake}}$. In addition, if $c_r$ is at a full stop and
the situation is dangerous, it must not start driving.

We now claim that this proper response behavior guarantees that $c_r$
would never hit $c_f$. The crux of the proof is an inductive
argument. We will prove that there is a sequence of increasing times,
$0 = t_0 \le t_1 < t_2 < t_3 < \ldots$, such that for every time
$t_i$, $i \ge 1$, there is no collision in the time interval
$[t_{i-1},t_i]$, and at time $t_i$ the situation is not dangerous.
The basis of the induction is the first time, $t_1$, in which $c_r$ is
starting to drive. By the definition of proper response, $t_1$ is not
a dangerous situation, and it is clear that $c_r$ did not hit someone
from behind before it started to drive (recall that for now we assume
that no cars are driving backwards).  Suppose the inductive assumption
holds for $t_1 < \ldots < t_i$. So, at time $t_i$, the situation is
non-dangerous. Let $t_b$ be the earliest $t$ after $t_i$ for which the
situation becomes dangerous. If no such $t_b$ exists, then there will
be no accidents in the time interval $[t_i,\infty)$ (because, before
an accident occurs, the situation must be dangerous), hence we are
done. Otherwise, let $t_e$ be the earliest time after $t_b$ in which
either the situation becomes non-dangerous or $c_r$ arrives to a full
stop. If at $t_e$ the situation becomes non-dangerous, then we set
$t_{i+1} = t_e$. Otherwise, we set $t_{i+1}$ to be the earliest time
after $t_e$ in which $c_r$ starts to drive again. In both cases, by
the definitions of proper response and safe distance, there will be no
accident in the time interval $[t_b,t_e]$. And, clearly, there can be
no accidents in the time intervals $[t_i,t_b]$ and
$[t_e,t_{i+1}]$. Hence, our inductive argument is concluded.

\begin{remark}[No contradictions and star-shape
  calculations] \label{rem:no-contradictions} Our definition of proper
  response enables \emph{star-shape} calculations: we can consider the
  proper response of our car with respect to each other car
  individually, each proper response implies a constraint on the
  maximal acceleration we are allowed to perform, and taking the
  minimum of these constraints is guaranteed to satisfy all the
  constraints. Furthermore, the inductive proof technique relies on
  this pairwise structure. It is important to emphasize that there is
  an intimate relationship between the specific choice of definitions
  (of dangerous situation and proper response) and the ability to
  enable star-shape calculations and to make the inductive
  argument. To illustrate this point, suppose we would slightly change
  the definition of proper response, by also requiring the front car
  to accelerate a little bit when a rear car is approaching towards it
  fast from behind. In this case, we might reach contradictions: on
  one hand we should accelerate because someone approaches fast from
  behind, while on the other hand we need to brake because the car in
  front of us is braking. To resolve such contradictions we will need
  to consider all the vehicles together, which is expensive from
  computational perspective, and requires a different proof technique.
  Maintaining definitions which support efficient star-shape calculations and
  facilitate formal correctness in the full complexity of driving
  scenes (including lateral manoeuvres, junctions, pedestrians,
  and occlusions) is a great challenge that we tackle in this paper.
\end{remark}

\begin{remark}[The parameters control the soundness/usefulness
  tradeoff] \label{rem:sound-useful-parameters} The definitions of
  safe longitudinal distance and proper response depend on
  parameters:
  $\rho, a_{\max,\mathrm{accel}}, a_{\min,\mathrm{brake}},
  a_{\max,\mathrm{brake}}$. These parameters induce \emph{assumptions}
  on the behavior of road agents---for example, the rear car assumes
  that the front car will not brake stronger than
  $a_{\max,\mathrm{brake}}$, even if physically the front car is
  capable of braking stronger than that.  If the front car brakes
  stronger than $a_{\max,\mathrm{brake}}$, then the inductive proof
  breaks and there might be an accident. Since the rear car cannot
  know the exact braking mechanism of the front car, it has no way of
  knowing the exact value of $a_{\max,\mathrm{brake}}$. Setting it to
  a very large value makes the model more sound (the number of cases
  in which the assumptions will not hold, and therefore the model will
  not capture reality, will be much smaller). On the other hand, a
  very large value of $a_{\max,\mathrm{brake}}$ induces an extremely
  defensive driving. In the extreme case, when
  $a_{\max,\mathrm{brake}} = \infty$, the safe distance formula states
  that we should refer to every car in front of us as if it stands
  still, which does not allow a normal flow of traffic.

  We therefore argue that these parameters should be determined to
  some reasonable values by regulation, as they induce a set of
  \emph{allowed assumptions} that a driver (robotic or human) can make
  on the behavior of other road users. 

  Of course, the parameters can be set differently for a robotic car and
  a human driven car. For example, the response time of a robotic car
  is usually smaller than that of a human driver and a robotic car can
  brake more effectively than a typical human driver, hence
  $a_{\min,\mathrm{brake}}$ can set to be larger for a robotic car.
  They can also be set differently for different road conditions (wet
  road, ice, snow).
\end{remark}

\begin{remark}[Utopia is possible] 
  Our inductive proof shows that if a car responds properly to
  dangerous situations then it will not hit another car from behind,
  as long as the front car will not brake stronger than
  $a_{\max,\mathrm{brake}}$ (and will not drive backwards). This
  immediately implies that if all road users will adhere to the
  assumptions, and will respond properly to dangerous situations, then
  utopia is possible, in the sense that there will be no
  accidents. This strengthens the soundness of our definitions.  While
  this claim is trivial for the simplistic case we are considering
  now, we will later show that it holds even when considering a much
  more complicated world (which includes lateral manoeuvres, different
  geometry, pedestrians, and occlusions). 
\end{remark}

Having described the main idea behind our technique, we now turn to
the harder part of constructing adequate definitions for all type of
roads. We start with formally defining the notions of longitudinal
and lateral position/velocity/acceleration.

\subsection{Preliminaries --- A Lane-Based Coordinate System} \label{sec:lane_based}

In the previous subsection we have assumed that the road is
straight. In the next subsections, we would want to keep the
simplicity of definitions by assuming that the road is comprised by
straight lanes of constant width. In that case, there is a clear
meaning to the longitudinal and lateral axes, along with an ordering
of longitudinal position. This distinction plays a significant role in
common-sense driving. However, real road are never perfectly
straight. We therefore propose a generic transformation from (global) positions on the
plane, to a lane-based coordinate system, reducing the problem yet
again to the original, ``straight lane of constant width'', case.

Assume that the lane's center is a smooth directed curve $r$ on the
plane, where all of its pieces, denoted $r^{(1)},\ldots,r^{(k)}$, are
either linear, or an arc. Note that smoothness of the curve implies
that no pair of consecutive pieces can be linear. Formally, the curve
maps a ``longitudinal'' parameter,
$Y\in [Y_{\min},Y_{\max}]\subset \reals$, into the plane, namely, the
curve is a function of the form
$r : [Y_{\min},Y_{\max}] \to \reals^2$. We define a continuous
lane-width function $w: [Y_{\min},Y_{\max}]\to\reals_+$, mapping the
longitudinal position $Y$ into a positive lane width value. For each
$Y$, from smoothness of $r$, we can define the normal unit-vector to
the curve at position $Y$, denoted $r^\perp(Y)$. We naturally define
the subset of points on the plane which reside in the lane as follows:
\[
R = \{r(Y)+\alpha w(Y)r^\perp(Y) ~\mid~Y\in [Y_{\min},Y_{\max}],~\alpha\in[\pm 1/2]\}~.
\]
Informally, our goal is to construct a transformation $\phi$ of $R$
into $\reals^2$,\footnote{Where, as in RSS, we will associate the
  $y$-axis with the ``longitudinal'' axis, and the $x$-axis with the
  ``lateral''.} such that for two cars which are on the lane, their
``logical ordering'' will be preserved:
\begin{itemize}
\item If $c_r$ is ``behind'' $c_f$ on the curve, then $\phi(c_r)_y < \phi(c_f)_y$.
\item If $c_l$ is ``to the left of'' $c_r$ on the curve, then $\phi(c_l)_x < \phi(c_r)_x$.
\end{itemize}
In order to define $\phi$, we rely on the assumption that for all $i$,
if $r^{(i)}$ is an arc of radius $\rho$, then the width of the lane
throughout $r^{(i)}$ is $\le \rho/2$. Note that this assumption holds
for any practical road. The assumption trivially implies that for all
$(x',y')\in R$, there exists a unique pair
$Y'\in [Y_{\min},Y_{\max}]$, $\alpha'\in[\pm 1/2]$,
s.t. $(x',y')=r(Y')+\alpha' w(Y')r^\perp(Y')$. We can now define
$\phi:R\to\reals^2$ to be $\phi(x',y') = (Y',\alpha')$, where
$(Y',\alpha')$ are the unique values that satisfy
$(x',y')=r(Y')+\alpha' w(Y')r^\perp(Y')$.

\begin{figure}
\def\width{1}
\def\height{2}
\def\theta{-5}
\begin{center}
\begin{tikzpicture}[scale=0.5]
\coordinate (C) at (0,0);
\draw [gray,fill=gray] (-3*\width,-1.5*\height) rectangle
(3*\width,1.5*\height);
\draw [->,black, very thick]
(-1*\width,-1.5*\height)--(-1*\width,1.5*\height);
\begin{scope}[rotate=-\theta]
\draw [dashed, white, very thick]
(-2*\width,-1.5*\height)--(-2*\width,1.5*\height);
\end{scope}
% \draw [dashed, white, very thick](2*\width,-1.5*\height)--(2*\width,1.5*\height);
%\begin{scope}[rotate around={\theta:(C)}]
\begin{scope}[rotate=\theta]
\draw [dashed, white, very thick]
(0,-2.5*\height)--(0,2.5*\height);
\draw [blue,fill=blue] (0.2-1.5*\width,-0.5*\height) rectangle
(0.2-0.5*\width,0.5*\height);
\end{scope}
\draw [red,fill=red] (0.5*\width,-0.5*\height) rectangle
(1.5*\width,0.5*\height);
\end{tikzpicture}
\caption{Changing lane width. Although the red car drives in parallel to
  the lane's center (black arrow), it clearly makes lateral movement towards the
  lane. The blue car, although getting further away from the lane's
  center, stays in the same position w.r.t. the lane boundary.}\label{fig:width}
\end{center}
\end{figure}

This definition captures the notion of a ``lateral manoeuvre'' in
lane's coordinate system. Consider, for example, a widening lane, with
a car driving exactly on one of the lane's boundaries (see
\figref{fig:width} for an illustration). The widening of the lane
means that the car is moving away from the center of the lane, and
therefore has lateral velocity w.r.t. it. However, this doesn't mean
it performs a lateral manoeuvre. Our definition of
$\phi(x',y')_x = \alpha'$, namely, the lateral distance to the lane's
center in $w(Y')$-units, implies that the lane boundaries have a fixed
lateral position of $\pm 1/2$, hence, a car which sticks to one of the
lane's boundaries is not considered to perform any lateral
movement. Finally, it is easy to see that $\phi$ is a bijective
embedding. We will use the term \emph{lane-based coordinate system}
when discussing $\phi(R)=[Y_{\min},Y_{\max}]\times[\pm 1/2]$. 

We have thus obtained a reduction from a general lane geometry to a
straight, longitudinal/lateral, coordinate system. The meaning of
longitudinal/lateral position follows immediately and the meaning of
longitudinal/lateral velocity/acceleration follows by taking the
first/second derivatives of the position. This will be the meaning of
longitudinal/lateral position/velocity/acceleration throughout the
rest of the paper.

\subsection{Longitudinal Safe Distance and Proper Response} \label{sec:longit-safe}

The definition of a safe distance from \secref{sec:safe-distance} is
sound for the case that both the rear and front cars are driving at
the same direction.  Indeed, in this case, it is the responsibility of
the rear car to keep a safe distance from the front car, and to be
ready for unexpected, yet reasonable, braking. However, when the two
cars are driving at opposite directions, we need to refine the
definition. Consider for example a car $c_r$ that is currently at a
safe distance from a preceding car, $c_f$, that stands
still. Suddenly, $c_f$ is reversing very fast into a parking spot and $c_r$ hits
it from behind. Depending on the speed of $c_f$'s manoeuvre, the
common sense here may be that the responsibility is
not on the rear car, even though it hits $c_f$ from behind.  To
formalize this common sense, we simply note that the definitions of
``rear and front'' do not apply to scenarios in which vehicles are
moving toward each other (namely, the signs of their longitudinal
velocities are opposite). In such cases we expect both cars to
decrease the \emph{absolute value} of their velocity in order to avoid
a crash.

We could therefore define the safe distance between cars that drive in
opposite directions to be the distance required so as if both cars
will brake (after a response time) then there will be no
crash. However, it makes sense that the car that drives at the
opposite direction to the lane direction should brake harder than the
one who drives at the correct direction. This leads to the following
definition.
\begin{definition}[Safe longitudinal distance --- opposite
  directions] \label{def:dmin_opposite} Consider cars $c_1,c_2$
  driving on a lane with longitudinal velocities $v_1,v_2$, where
  $v_2 < 0$ and $v_1 \ge 0$ (the sign of the longitudinal velocity is
  according to the allowed direction of driving on the lane).  The
  longitudinal distance between the cars is safe w.r.t. a response
  time $\rho$, braking parameters
  $a_{\min,\mathrm{brake}}, a_{\min,\mathrm{brake},\mathrm{correct}}$,
  and an acceleration parameter $a_{\max,\mathrm{accel}}$, if in case
  $c_1,c_2$ will increase the absolute value of their velocities at
  rate $a_{\max,\mathrm{accel}}$ during the response time, and from
  there on will decrease the absolute value of their velocities at
  rate
  $a_{\min,\mathrm{brake},\mathrm{correct}}, a_{\min,\mathrm{brake}}$,
  respectively, until a full stop, then there will not be a collision.
\end{definition}
A calculation of the safe distance for the case of opposite directions is given in the lemma below (whose proof is straightforward, and hence omitted).
\begin{lemma}\label{lem:dmin_two_way}
  Consider the notation given in \defref{def:dmin_opposite}. Define
  $v_{1,\rho} = v_1 + \rho\, a_{\max,\mathrm{accel}}$ and
  $v_{2,\rho} = |v_2| + \rho\,a_{\max,\mathrm{accel}}$.  Then, the
  minimal safe longitudinal distance between $c_1$ and $c_2$ is:
\[
d_{\min} = \frac{v_1 + v_{1,\rho}}{2}\rho + \frac{v_{1,\rho}^2}{ 2 a_{\min,\mathrm{brake},\mathrm{correct}}}
+ \frac{|v_2| + v_{2,\rho}}{2}\rho + \frac{v_{2,\rho}^2}{ 2 a_{\min,\mathrm{brake}}} ~.
\]
\end{lemma}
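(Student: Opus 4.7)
The plan is to mirror the structure of the proof of \lemref{lem:dmin}, but now track both cars symmetrically since both are moving toward each other and both are required to react. I would work in absolute values of the longitudinal velocities, since for the purpose of measuring the closing motion what matters is $v_1 \ge 0$ for $c_1$ and $|v_2|$ for $c_2$ (they are approaching each other, so their displacements toward one another simply add).

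First I would argue, exactly as in \lemref{lem:dmin}, that the worst case is realized at the moment both cars come to a full stop. The reason is that under the worst-case profile each car increases its absolute speed monotonically during $[0,\rho]$ and then decreases it monotonically to zero; so the rate at which the inter-car distance shrinks is positive throughout (until a stop or a collision), and hence the gap is minimized precisely when both vehicles have stopped. Therefore the minimal safe initial distance is exactly the sum of the signed displacements of the two cars toward each other over the full manoeuvre.

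Next I would compute each car's travelled distance by splitting into the two phases. For $c_1$, during the response phase velocity grows linearly from $v_1$ to $v_{1,\rho}=v_1+\rho a_{\max,\mathrm{accel}}$, so the distance covered is the average-velocity times time, $\tfrac{v_1+v_{1,\rho}}{2}\rho$. After the response time, $c_1$ brakes at $a_{\min,\mathrm{brake},\mathrm{correct}}$ from $v_{1,\rho}$ to $0$, covering $\tfrac{v_{1,\rho}^2}{2a_{\min,\mathrm{brake},\mathrm{correct}}}$ (standard kinematics: $v^2=v_0^2-2as$ at $v=0$). The same calculation applied to $c_2$ with $|v_2|$ in place of $v_1$ and $a_{\min,\mathrm{brake}}$ in place of $a_{\min,\mathrm{brake},\mathrm{correct}}$ yields $\tfrac{|v_2|+v_{2,\rho}}{2}\rho + \tfrac{v_{2,\rho}^2}{2 a_{\min,\mathrm{brake}}}$. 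Adding the two total distances gives the claimed formula for $d_{\min}$.

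The only subtle step is the ``worst case is at full stop'' claim; the main obstacle is verifying it holds even with the asymmetry in braking parameters. But since both cars always move toward each other throughout the manoeuvre (the signs of $v_1$ and $v_2$ never reverse before reaching zero, as each speed only decreases monotonically after the response time), the closing speed stays non-negative, so the gap decreases monotonically until both are stopped. Hence no interior minimum can occur and the simple sum-of-displacements argument gives the tight bound, as claimed.
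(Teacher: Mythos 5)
Your proof is correct and is exactly the ``straightforward'' kinematic argument the paper has in mind when it omits the proof of \lemref{lem:dmin_two_way}: since both cars move toward each other throughout the manoeuvre, the gap is monotonically decreasing, so the minimum occurs at the joint full stop, and the safe distance is the sum of the two trapezoid-plus-braking displacements. The monotonicity observation also correctly explains why no $[\cdot]_+$ is needed here, unlike in \lemref{lem:dmin}.
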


Before a collision between two cars, they first need to be at a
non-safe distance.  Intuitively, the idea of the safe distance
definitions is that if both cars will respond ``properly'' to
violations of the safe distance then there cannot be a collision. If
one of them didn't respond ``properly'' then it is responsible for the
accident. To formalize this, it is first important to know the moment
in wich the cars start to be at a non-safe distance.
\begin{definition}[Dangerous Longitudinal Situation and Danger Threshold] \label{def:longitudinal_dangerous}
  We say that time $t$ is \emph{longitudinally dangerous} for cars
  $c_1,c_2$ if the distance between them at time $t$ is non safe
  (according to \defref{def:dmin} or \defref{def:dmin_opposite}).
  Given a longitudinally dangerous time $t$, its \emph{Longitudinal
    Danger Threshold}, denoted $t_b^{\mathrm{long}}$, is the earliest longitudinally
  dangerous time such that all the times in the interval $[t_b^{\mathrm{long}},t]$ are
  longitudinally dangerous. In particular, an accident can only happen
  at time $t$ if it is longitudinally dangerous, and in that case we
  say that the longitudinally Danger Threshold of the accident is the
  longitudinally threshold time of $t$.
\end{definition}
Next, we define what is a ``longitudinal proper response'' to
longitudinal dangerous situations.
\begin{definition}[Longitudinal Proper
  response] \label{def:proper_longitudinal} Let $t$ be a longitudinally dangerous
  time for cars $c_1,c_2$ and let $t_b^{\mathrm{long}}$ be the corresponding longitudinally blame
  time.  The longitudinally proper response of the two cars is to comply with the
  following constraints on the longitudinal speed:
\begin{enumerate}
\item If at the longitudinally Danger Threshold time, the two cars were driving at the same direction, and say that $c_1$ is the rear car, then:
\begin{itemize}
\item $c_1$'s acceleration must be at most $a_{\max,\mathrm{accel}}$
  during the interval $[t_b^{\mathrm{long}},t_b^{\mathrm{long}}+\rho)$ and at most
  $-a_{\min,\mathrm{brake}}$ from time $t_b^{\mathrm{long}}+\rho$ until reaching a
  safe longitudinal situation. After that, any non-positive
  acceleration is allowed.
\item $c_2$ acceleration must be at least $-a_{\max,\mathrm{brake}}$
  until reaching a safe longitudinal situation. After that, any
  non-negative acceleration is allowed.
\end{itemize}
\item If at the longitudinally Danger Threshold time the two cars were driving at opposite
  directions, and say that $c_2$ was driving at the wrong direction
  (negative velocity), then:
\begin{itemize}
\item $c_1$ acceleration must be at most
  $a_{\max,\mathrm{accel}}$ during
  the interval $[t_b^{\mathrm{long}},t_b^{\mathrm{long}}+\rho)$ and at most
  $-a_{\min,\mathrm{brake},\mathrm{correct}}$ from time $t_b^{\mathrm{long}}+\rho$ until reaching a full
  stop. After that, it can apply any non-positive acceleration
\item $c_2$ acceleration must be at least $-a_{\max,\mathrm{accel}}$ during
  the interval $[t_b^{\mathrm{long}},t_b^{\mathrm{long}}+\rho)$ and at least
  $a_{\min,\mathrm{brake}}$ from time $t_b^{\mathrm{long}}+\rho$ until reaching a full
  stop. After that, any non-negative acceleration is allowed. 
\end{itemize} 
\end{enumerate}
\end{definition}

\subsection{Lateral Safe Distance and Proper Response} \label{sec:lateral-safe}

Unlike longitudinal velocity, which can be kept to a value of $0$ for
a long time (the car is simply not moving), keeping lateral velocity
at exactly $0$ is impossible as cars usually perform small lateral
fluctuations.  It is therefore required to introduce a robust notion
of lateral velocity.
\begin{definition}[$\mu$-lateral-velocity]
Consider a point located at a lateral location $l$ at time $t$. Its
$\mu$-lateral velocity at time $t$ is defined as follows. Let $t_{out} > t$ be the
earliest future time in which the point's lateral position, denoted
$l_{out}$, is either $l-\mu/2$ or $l+\mu/2$ (if no such time exists we
set $t_{out}=\infty$). If at some time $t' \in (t,t_{out})$ the
point's lateral position is $l$, then the $\mu$-lateral-velocity is
$0$. Otherwise, the $\mu$-lateral-velocity is
$(l_{out}-l)/(t_{out}-t)$. 
\end{definition}

Roughly speaking, in order to have a collision between two vehicles,
it is required that they will be close both longitudinally and
laterally. For the longitudinal axis, we have already formalized the
notion of ``being close'' using the safe distance. We will now do the
same for lateral distance. 
\begin{definition}[Safe Lateral Distance] \label{def:lateral_safe_distance}
The lateral distance between cars $c_1,c_2$ driving with lateral
velocities $v_1,v_2$ is safe w.r.t. parameters $\rho,
a^{\mathrm{lat}}_{\min,\mathrm{brake}},
a^{\mathrm{lat}}_{\max,\mathrm{accel}},\mu$, if during the time
interval $[0,\rho]$ the two cars will apply lateral acceleration of
$a^{\mathrm{lat}}_{\max,\mathrm{accel}}$ toward each other, and after that
the two cars will apply lateral braking of
$a^{\mathrm{lat}}_{\min,\mathrm{brake}}$, until they reach zero
lateral velocity, then the final lateral distance between them will be
at least $\mu$.  
\end{definition}
A calculation of the lateral safe distance is given in the lemma below (whose proof is straightforward, and hence omitted).
\begin{lemma}\label{lem:lateral_safe_distance}
Consider the notation given in
\defref{def:lateral_safe_distance}. W.l.o.g. assume that $c_1$ is to
the left of $c_2$. Define $v_{1,\rho} = v_1 + \rho\, a^{\mathrm{lat}}_{\max,\mathrm{accel}}$ and $v_{2,\rho} = v_2 - \rho\, a^{\mathrm{lat}}_{\max,\mathrm{accel}}$.  Then, the 
  minimal safe lateral distance between the right side of $c_1$ and
  the left part of $c_2$ is:
\[
d_{\min} = \mu + \left[\frac{v_1 + v_{1,\rho}}{2}\rho +
\frac{v_{1,\rho}^2}{ 2 a^{\mathrm{lat}}_{\min,\mathrm{brake}}} - \left( \frac{v_2 + v_{2,\rho}}{2}\rho - \frac{v_{2,\rho}^2}{ 2
   a^{\mathrm{lat}}_{\min,\mathrm{brake}}} \right) \right]_+~.
\]
\end{lemma}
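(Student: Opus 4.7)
The plan is to set up a signed lateral coordinate (say, rightward positive so that $c_1$ sits to the left of $c_2$) and then compute, under the adversarial motion described by \defref{def:lateral_safe_distance}, how much each car's lateral position changes before both return to zero lateral velocity. The minimal safe initial gap $d_{\min}$ is whatever is needed to guarantee a residual separation of $\mu$.

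For $c_1$, the worst case is constant rightward acceleration $a^{\mathrm{lat}}_{\max,\mathrm{accel}}$ over $[0,\rho]$, taking its velocity from $v_1$ to $v_{1,\rho}=v_1+\rho a^{\mathrm{lat}}_{\max,\mathrm{accel}}$, followed by rightward braking at rate $a^{\mathrm{lat}}_{\min,\mathrm{brake}}$ until it stops. The first segment contributes a rightward displacement of $\tfrac{v_1+v_{1,\rho}}{2}\rho$ (average velocity times duration), and the braking segment contributes the standard stopping distance $\tfrac{v_{1,\rho}^2}{2 a^{\mathrm{lat}}_{\min,\mathrm{brake}}}$. For $c_2$, the mirror computation applies: it accelerates leftward to signed velocity $v_{2,\rho}=v_2-\rho a^{\mathrm{lat}}_{\max,\mathrm{accel}}$ and then brakes (rightward acceleration) back to zero. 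Keeping all quantities signed in the rightward convention, the net signed rightward displacement of $c_2$ is $\tfrac{v_2+v_{2,\rho}}{2}\rho - \tfrac{v_{2,\rho}^2}{2 a^{\mathrm{lat}}_{\min,\mathrm{brake}}}$, where the minus sign comes from the fact that during the braking segment the velocity averages $v_{2,\rho}/2$ (which is negative when the adversarial acceleration has flipped $c_2$'s motion leftward) over a duration $|v_{2,\rho}|/a^{\mathrm{lat}}_{\min,\mathrm{brake}}$.

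Writing $d_0$ for the initial gap between the right side of $c_1$ and the left side of $c_2$, the final gap equals $d_0$ minus $c_1$'s rightward displacement plus $c_2$'s signed rightward displacement. Imposing the safety requirement that this final gap is at least $\mu$ and rearranging yields exactly the quantity inside the brackets of the claimed $d_{\min}$; the outer $[\,\cdot\,]_+$ just records that in configurations where the adversarial profile still results in net separation, no positive lower bound beyond $\mu$ is imposed on $d_0$. There is no real obstacle here—the only mild subtlety is bookkeeping the signs consistently for $c_2$'s braking phase, so that the formula correctly handles the case $v_{2,\rho}<0$ in which the braking segment continues to move $c_2$ leftward in absolute terms before bringing it to rest.
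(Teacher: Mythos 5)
The paper explicitly omits this proof as straightforward, and your computation---the displacement of each car during the response interval (average velocity times $\rho$) plus the signed stopping distance during the braking phase, followed by requiring the final gap to be at least $\mu$ and rearranging---is exactly the intended argument and is correct. Your sign bookkeeping for $c_2$'s braking segment (the case $v_{2,\rho}<0$) and the role of the outer $[\cdot]_+$ are both handled properly, so there is nothing to add.
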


The following definitions are the analogue of \defref{def:longitudinal_dangerous} and
\defref{def:proper_lateral} for the lateral case. 
\begin{definition}[Dangerous Lateral Situation and Danger Threshold time] \label{def:lateral_dangerous}
  We say that time $t$ is \emph{laterally dangerous} for cars
  $c_1,c_2$ if the lateral distance between them at time $t$ is non safe
  (according to \defref{def:lateral_safe_distance}).
  Given a laterally dangerous time $t$, its \emph{Lateral
    Danger Threshold time}, denoted $t_b^{\mathrm{lat}}$, is the earliest laterally
  dangerous time such that all the times in the interval $[t_b^{\mathrm{lat}},t]$ are
  laterally dangerous. In particular, an accident can only happen
  at time $t$ if it is laterally dangerous, and in that case we
  say that the laterally threshold time of the accident is the
  laterally Danger Threshold time of $t$.
\end{definition}
\begin{definition}[Lateral Proper response] \label{def:proper_lateral}
  Let $t$ be a laterally dangerous time for cars $c_1,c_2$, let $t_b^{\mathrm{lat}}$
  be the corresponding laterally Danger Threshold time, and w.l.o.g. assume that
  at that time $c_1$ was to the left of $c_2$.  The laterally proper
  response of the two cars is to comply with the following constraints
  on the lateral speed:
\begin{itemize}
\item If $t \in [t_b^{\mathrm{lat}},t_b^{\mathrm{lat}}+\rho)$ then both cars can do any lateral action as
  long as their lateral
  acceleration, $a$, satisfies $|a| \le
  a^{\mathrm{lat}}_{\max,\mathrm{accel}}$. 
\item Else, if $t \ge t_b^{\mathrm{lat}}+\rho$:
\begin{itemize}
\item Before reaching $\mu$-lateral-velocity of $0$, $c_1$ must apply
  lateral acceleration of at most
  $-a^{\mathrm{lat}}_{\min,\mathrm{brake}}$ and $c_2$ must apply
  lateral acceleration of at least
  $a^{\mathrm{lat}}_{\min,\mathrm{brake}}$
\item After reaching $\mu$-lateral-velocity of $0$, $c_1$ can have any
  non-positive $\mu$-lateral-velocity and $c_2$ can have any
  non-negative $\mu$-lateral-velocity
\end{itemize}
\end{itemize}
\end{definition}

\begin{remark} \label{rem:more_reasonable_lateral_braking} For
  simplicity, we assumed that cars can immediately switch from
  applying a ``lateral braking'' of
  $a^{\mathrm{lat}}_{\min,\mathrm{brake}}$ to being at a
  $\mu$-lateral-velocity of $0$. This might not always be possible due
  to physical properties of the car. But, the important factor in the
  definition of proper response is that from time $t_b+\rho$ to the
  time the car reaches a $\mu$-lateral-velocity of $0$, the total
  lateral distance it will pass will not be larger than the one it
  would have passed had it applied a braking of
  $a^{\mathrm{lat}}_{\min,\mathrm{brake}}$ until a full
  stop. Achieving this goal by a real vehicle is possible by first
  braking at a stronger rate and then decreasing lateral speed more
  gradually at the end. It is easy to see that this change will have
  no effect on the essence of RSS (as well as on the procedure for
  efficiently guaranteeing RSS safety that will be described in the
  next section).
\end{remark}
\begin{remark} \label{rem:all_points_are_considered} The definitions
  hold for vehicles of arbitrary shapes, by taking the worst-case with
  respect to all points of each car.  In particular, this covers
  semi-trailers or a car with an open door.
\end{remark}

\subsection{Combining Longitudinal and Lateral Proper Responses}

We next combine the longitudinal and lateral proper responses into a
single proper response. We start with the case of a multi-lane
road (typical highway situations or rural roads), where all lanes
share the same geometry. In this case we can refer to all lanes as a
single wide lane and the meaning of longitudinal and lateral position
is well defined. Cases of multiple geometries (merges, junctions,
roundabouts, etc.), and unstructured roads, are discussed in the next
subsection, where we introduce the concept of priority.

In order to have a collision between two cars, they must be both at a
non-safe longitudinal distance and at a non-safe lateral
distance. 
\begin{definition}[Dangerous Situation and Danger Threshold time] \label{def:dangerous_and_blame_time}
  We say that time $t$ is \emph{dangerous} for cars $c_1,c_2$ if it is
  both longitudinally and laterally dangerous (according to
  \defref{def:longitudinal_dangerous} and
  \defref{def:lateral_dangerous}). Given a dangerous time $t$, its
  \emph{Danger Threshold time}, denoted $t_b$, is
  $\max\{t_b^{\mathrm{long}}, t_b^{\mathrm{lat}}\}$, where
  $t_b^{\mathrm{long}}, t_b^{\mathrm{lat}}$ are the
  longitudinal/lateral Danger Threshold time, respectively. In particular, an
  accident can only happen at time $t$ if it is dangerous, and in that
  case we say that the Danger Threshold time of the accident is 
  $t$.
\end{definition}
When two cars are driving side-by-side, they are already at a non-safe
longitudinal distance. If the situation becomes dangerous, it means
that they are getting closer laterally, and the lateral situation
becomes dangerous as well. That is, $t_b = t_b^{\mathrm{lat}}$. In
this case, it makes sense that the proper response is to apply the
proper response with respect to laterally dangerous situations.
Similarly, if cars are driving one behind the other, they are already
at a non-safe lateral distance. If the situation becomes dangerous, it
means that they are getting closer longitudinal, and the longitudinal
situation becomes dangerous as well. That is,
$t_b = t_b^{\mathrm{long}}$. In this case, it makes sense that the
proper response is to apply the proper response with respect to
longitudinal dangerous situations.
This is captured in the following definition.
\begin{definition}[Basic Proper response to dangerous
  situations] \label{def:basic_proper_response} Let $t$ be a
  dangerous time for cars $c_1,c_2$ and let
  $t_b, t_b^{\mathrm{long}}, t_b^{\mathrm{lat}}$ be the corresponding
  Danger Threshold time, longitudinal Danger Threshold time, and lateral Danger Threshold time, respectively.  The
  basic proper response of the two cars is to comply with the
  following constraints on the lateral/longitudinal speed:
\begin{enumerate}
\item If $t_b = t_b^{\mathrm{long}}$ then the longitudinal speed is
  constrained according to \defref{def:proper_longitudinal}. 
\item If $t_b = t_b^{\mathrm{lat}}$ then the lateral speed is
  constrained according to \defref{def:proper_lateral}. 
\end{enumerate}
\end{definition}

\figref{fig:examples_same_geometry} illustrates the definition.

\begin{figure}[p]
\begin{center}
\begin{tabular}{ccc}
Before & Danger Threshold time & Proper Response \\ \hline & & \\ 
\begin{tikzpicture}[scale=0.4, every node/.style={transform shape}]

\fill[gray!50] (-5,-5) rectangle (5,5);
\def\x{-1}
\def\y{-3}
\coordinate (yellow) at (\x,\y);
\fill[yellow!10]  ($(-1,0.75) + (yellow)$) rectangle  ($(1,4) + (yellow)$);
\node[rotate=90] at (yellow) {\includegraphics[width=2cm]{yellow_car}};
\foreach \offset in {-1,1}
   \draw[densely dashed,very thick,yellow] (\x+\offset,-5) -- (\x+\offset,5);

\def\x{2}
\def\y{-1.5}
\coordinate (red) at (\x,\y);
\fill[red!10]  ($(-1,0.75) + (red)$) rectangle  ($(1,4) + (red)$);
\node[rotate=90] at (red) {\includegraphics[width=2cm]{red_car}};
\foreach \offset in {-1,1}
   \draw[densely dashed,very thick,red] (\x+\offset,-5) -- (\x+\offset,5);

\end{tikzpicture} & 
\begin{tikzpicture}[scale=0.4, every node/.style={transform shape}]

\fill[gray!50] (-5,-5) rectangle (5,5);
\def\x{-1}
\def\y{-3}
\coordinate (yellow) at (\x,\y);
\fill[yellow!10]  ($(-1,0.75) + (yellow)$) rectangle  ($(1,4) + (yellow)$);
\node[rotate=90] at (yellow) {\includegraphics[width=2cm]{yellow_car}};
\foreach \offset in {-1,1}
   \draw[densely dashed,very thick,yellow] (\x+\offset,-5) -- (\x+\offset,5);

\def\x{1.5}
\def\y{-1.5}
\def\offsetleft{-1.5}
\def\offsetright{0.75}
\coordinate (red) at (\x,\y);
\fill[red!10]  ($(\offsetleft, 0.75) + (red)$) rectangle  ($(\offsetright,4) + (red)$);
\node[rotate=100] at (red) {\includegraphics[width=2cm]{red_car}};
\foreach \offset in {\offsetleft,\offsetright}
   \draw[densely dashed,very thick,red] (\x+\offset,-5) -- (\x+\offset,5);

\end{tikzpicture} & 
\begin{tikzpicture}[scale=0.4, every node/.style={transform shape}]

\fill[gray!50] (-5,-5) rectangle (5,5);
\def\x{-1}
\def\y{-3}
\coordinate (yellow) at (\x,\y);
\fill[yellow!10]  ($(-1,0.75) + (yellow)$) rectangle  ($(1,4) + (yellow)$);
\node[rotate=90] at (yellow) {\includegraphics[width=2cm]{yellow_car}};
\foreach \offset in {-1,1}
   \draw[densely dashed,very thick,yellow] (\x+\offset,-5) -- (\x+\offset,5);

\def\x{1}
\def\y{-1.5}
\coordinate (red) at (\x,\y);
\fill[red!10]  ($(-1,0.75) + (red)$) rectangle  ($(1,4) + (red)$);
\node[rotate=90] at (red) {\includegraphics[width=2cm]{red_car}};
\foreach \offset in {-1,1}
   \draw[densely dashed,very thick,red] (\x+\offset,-5) -- (\x+\offset,5);

\end{tikzpicture}
\\ & & \\ 
\begin{tikzpicture}[scale=0.4, every node/.style={transform shape}]

\fill[gray!50] (-5,-5) rectangle (5,5);
\def\x{-1}
\def\y{0}
\coordinate (yellow) at (\x,\y);
\fill[yellow!10]  ($(-1,0.75) + (yellow)$) rectangle  ($(1,4) + (yellow)$);
\node[rotate=90] at (yellow) {\includegraphics[width=2cm]{yellow_car}};
\foreach \offset in {-1,1}
   \draw[densely dashed,very thick,yellow] (\x+\offset,-5) -- (\x+\offset,5);

\def\x{2}
\def\y{-1}
\coordinate (red) at (\x,\y);
\fill[red!10]  ($(-1,0.75) + (red)$) rectangle  ($(1,4) + (red)$);
\node[rotate=90] at (red) {\includegraphics[width=2cm]{red_car}};
\foreach \offset in {-1,1}
   \draw[densely dashed,very thick,red] (\x+\offset,-5) -- (\x+\offset,5);

\end{tikzpicture} & 
\begin{tikzpicture}[scale=0.4, every node/.style={transform shape}]

\fill[gray!50] (-5,-5) rectangle (5,5);
\def\x{-1}
\def\y{0}
\coordinate (yellow) at (\x,\y);
\fill[yellow!10]  ($(-1,0.75) + (yellow)$) rectangle  ($(1,4) + (yellow)$);
\node[rotate=90] at (yellow) {\includegraphics[width=2cm]{yellow_car}};
\foreach \offset in {-1,1}
   \draw[densely dashed,very thick,yellow] (\x+\offset,-5) -- (\x+\offset,5);

\def\x{1.5}
\def\y{-1}
\def\offsetleft{-1.5}
\def\offsetright{0.75}
\coordinate (red) at (\x,\y);
\fill[red!10]  ($(\offsetleft, 0.75) + (red)$) rectangle  ($(\offsetright,4) + (red)$);
\node[rotate=100] at (red) {\includegraphics[width=2cm]{red_car}};
\foreach \offset in {\offsetleft,\offsetright}
   \draw[densely dashed,very thick,red] (\x+\offset,-5) -- (\x+\offset,5);

\end{tikzpicture} & 
\begin{tikzpicture}[scale=0.4, every node/.style={transform shape}]

\fill[gray!50] (-5,-5) rectangle (5,5);
\def\x{-1}
\def\y{0}
\coordinate (yellow) at (\x,\y);
\fill[yellow!10]  ($(-1,0.75) + (yellow)$) rectangle  ($(1,4) + (yellow)$);
\node[rotate=90] at (yellow) {\includegraphics[width=2cm]{yellow_car}};
\foreach \offset in {-1,1}
   \draw[densely dashed,very thick,yellow] (\x+\offset,-5) -- (\x+\offset,5);

\def\x{1}
\def\y{-1}
\coordinate (red) at (\x,\y);
\fill[red!10]  ($(-1,0.75) + (red)$) rectangle  ($(1,4) + (red)$);
\node[rotate=90] at (red) {\includegraphics[width=2cm]{red_car}};
\foreach \offset in {-1,1}
   \draw[densely dashed,very thick,red] (\x+\offset,-5) -- (\x+\offset,5);

\end{tikzpicture}
\\ & & \\ 
\begin{tikzpicture}[scale=0.4, every node/.style={transform shape}]

\fill[gray!50] (-5,-5) rectangle (5,5);
\def\x{-1}
\def\y{-3}
\coordinate (yellow) at (\x,\y);
\fill[yellow!10]  ($(-1,0.75) + (yellow)$) rectangle  ($(1,4) + (yellow)$);
\node[rotate=90] at (yellow) {\includegraphics[width=2cm]{yellow_car}};
\foreach \offset in {-1,1}
   \draw[densely dashed,very thick,yellow] (\x+\offset,-5) -- (\x+\offset,5);

\def\x{0}
\def\y{1}
\coordinate (red) at (\x,\y);
\fill[red!10]  ($(-1,0.75) + (red)$) rectangle  ($(1,4) + (red)$);
\node[rotate=90] at (red) {\includegraphics[width=2cm]{red_car}};
\foreach \offset in {-1,1}
   \draw[densely dashed,very thick,red] (\x+\offset,-5) -- (\x+\offset,5);

\end{tikzpicture} & 
\begin{tikzpicture}[scale=0.4, every node/.style={transform shape}]

\fill[gray!50] (-5,-5) rectangle (5,5);
\def\x{-1}
\def\y{-3}
\coordinate (yellow) at (\x,\y);
\fill[yellow!10]  ($(-1,0.75) + (yellow)$) rectangle  ($(1,4) + (yellow)$);
\node[rotate=90] at (yellow) {\includegraphics[width=2cm]{yellow_car}};
\foreach \offset in {-1,1}
   \draw[densely dashed,very thick,yellow] (\x+\offset,-5) -- (\x+\offset,5);

\def\x{0}
\def\y{0.25}
\coordinate (red) at (\x,\y);
\fill[red!10]  ($(-1,0.75) + (red)$) rectangle  ($(1,4) + (red)$);
\node[rotate=90] at (red) {\includegraphics[width=2cm]{red_car}};
\foreach \offset in {-1,1}
   \draw[densely dashed,very thick,red] (\x+\offset,-5) -- (\x+\offset,5);

\end{tikzpicture} & 
\begin{tikzpicture}[scale=0.4, every node/.style={transform shape}]

\fill[gray!50] (-5,-5) rectangle (5,5);
\def\x{-1}
\def\y{-3.75}
\coordinate (yellow) at (\x,\y);
\fill[yellow!10]  ($(-1,0.75) + (yellow)$) rectangle  ($(1,4) + (yellow)$);
\node[rotate=90] at (yellow) {\includegraphics[width=2cm]{yellow_car}};
\foreach \offset in {-1,1}
   \draw[densely dashed,very thick,yellow] (\x+\offset,-5) -- (\x+\offset,5);

\def\x{0}
\def\y{0.25}
\coordinate (red) at (\x,\y);
\fill[red!10]  ($(-1,0.75) + (red)$) rectangle  ($(1,4) + (red)$);
\node[rotate=90] at (red) {\includegraphics[width=2cm]{red_car}};
\foreach \offset in {-1,1}
   \draw[densely dashed,very thick,red] (\x+\offset,-5) -- (\x+\offset,5);

\end{tikzpicture}
\\ & & \\ 
\begin{tikzpicture}[scale=0.4, every node/.style={transform shape}]

\fill[gray!50] (-5,-5) rectangle (5,5);
\def\x{-1}
\def\y{-4}
\coordinate (yellow) at (\x,\y);
\fill[yellow!10]  ($(-1,0.75) + (yellow)$) rectangle  ($(1,3.5) + (yellow)$);
\node[rotate=90] at (yellow) {\includegraphics[width=2cm]{yellow_car}};
\foreach \offset in {-1,1}
   \draw[densely dashed,very thick,yellow] (\x+\offset,-5) -- (\x+\offset,5);

\def\x{0}
\def\y{4}
\coordinate (red) at (\x,\y);
\fill[red!10]  ($(-1,-3.5) + (red)$) rectangle  ($(1,-0.75) + (red)$);
\node[rotate=-90] at (red) {\includegraphics[width=2cm]{red_car}};
\foreach \offset in {-1,1}
   \draw[densely dashed,very thick,red] (\x+\offset,-5) -- (\x+\offset,5);

\end{tikzpicture} & 
\begin{tikzpicture}[scale=0.4, every node/.style={transform shape}]

\fill[gray!50] (-5,-5) rectangle (5,5);
\def\x{-1}
\def\y{-3.5}
\coordinate (yellow) at (\x,\y);
\fill[yellow!10]  ($(-1,0.75) + (yellow)$) rectangle  ($(1,3.5) + (yellow)$);
\node[rotate=90] at (yellow) {\includegraphics[width=2cm]{yellow_car}};
\foreach \offset in {-1,1}
   \draw[densely dashed,very thick,yellow] (\x+\offset,-5) -- (\x+\offset,5);

\def\x{0}
\def\y{3.5}
\coordinate (red) at (\x,\y);
\fill[red!10]  ($(-1,-3.5) + (red)$) rectangle  ($(1,-0.75) + (red)$);
\node[rotate=-90] at (red) {\includegraphics[width=2cm]{red_car}};
\foreach \offset in {-1,1}
   \draw[densely dashed,very thick,red] (\x+\offset,-5) -- (\x+\offset,5);

\end{tikzpicture} & 
\begin{tikzpicture}[scale=0.4, every node/.style={transform shape}]

\fill[gray!50] (-5,-5) rectangle (5,5);
\def\x{-1}
\def\y{-3}
\coordinate (yellow) at (\x,\y);
\fill[yellow!10]  ($(-1,0.25) + (yellow)$) rectangle  ($(1,3) + (yellow)$);
\node[rotate=90] at (yellow) {\includegraphics[width=2cm]{yellow_car}};
\foreach \offset in {-1,1}
   \draw[densely dashed,very thick,yellow] (\x+\offset,-5) -- (\x+\offset,5);

\def\x{0}
\def\y{3}
\coordinate (red) at (\x,\y);
\fill[red!10]  ($(-1,-3) + (red)$) rectangle  ($(1,-0.25) + (red)$);
\node[rotate=-90] at (red) {\includegraphics[width=2cm]{red_car}};
\foreach \offset in {-1,1}
   \draw[densely dashed,very thick,red] (\x+\offset,-5) -- (\x+\offset,5);

\end{tikzpicture}
\end{tabular}
\end{center}
\caption{\small This figure illustrates the basic proper response as
  in \defref{def:basic_proper_response}.  The vertical lines around
  each car show the possible lateral position of the car if it will
  accelerate laterally during the response time and then will brake
  laterally. Similarly, the rectangles show the possible longitudinal
  positions of the car (it it'll either brake by
  $a_{\max,\mathrm{brake}}$ or will accelerate during the response
  time and then will brake by $a_{\min,\mathrm{brake}}$). In the top
  two rows, before the Danger Threshold time there was a safe lateral distance,
  hence the proper response is to brake laterally. The yellow car is
  already at $\mu$-lateral-velocity of zero, hence only the red car
  brakes laterally. Third row: before the Danger Threshold time there was a safe
  longitudinal distance, hence the proper response is for the car
  behind to brake longitudinally. Fourth row: before the Danger Threshold time
  there was a safe longitudinal distance, in an oncoming scenario,
  hence both car should brake
  longitudinally.} \label{fig:examples_same_geometry}
\end{figure}

To strengthen the soundness of the above definition, the lemma below
shows that if all cars will respond properly then there will be no
accidents.
\begin{lemma} \label{lem:basic_utopia}
Consider a multi-lane road where all lanes share the same
geometry. Suppose that at all times, all cars on the road comply with
the basic proper response as given in
\defref{def:basic_proper_response}. Then, there will be no collisions.
\end{lemma}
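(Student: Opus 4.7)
The plan is to generalize the inductive argument from \secref{sec:safe-distance}, leveraging the star-shape structure noted in \remref{rem:no-contradictions} to reduce the claim to a purely pairwise analysis. Fix any two cars $c_1,c_2$ on the road; it suffices to show that this pair never collides, since collisions are pairwise events and the basic proper response constraints are defined per-pair, with the effective constraint on each car being the (feasible) intersection of its pairwise constraints.

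First, I would set up the same inductive skeleton as in the single-lane proof: construct an increasing sequence of times $0 = t_0 \le t_1 < t_2 < \cdots$ such that on each interval $[t_{i-1},t_i]$ there is no collision between $c_1$ and $c_2$, and at $t_i$ the situation is not dangerous in the sense of \defref{def:dangerous_and_blame_time}. The base case is the first time the cars exist on the shared lane geometry, where they are by assumption not dangerous. For the step, assume non-dangerous at $t_i$ and let $t_b > t_i$ be the earliest subsequent time at which the situation becomes dangerous (if no such $t_b$ exists, no collision can occur on $[t_i,\infty)$, as a collision requires a dangerous situation). By \defref{def:dangerous_and_blame_time}, $t_b = \max\{t_b^{\mathrm{long}},t_b^{\mathrm{lat}}\}$, so exactly one of the two distances was safe just before $t_b$ while the other was already non-safe since at least $\min\{t_b^{\mathrm{long}},t_b^{\mathrm{lat}}\}$.

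Next I would split on which component triggers danger at $t_b$. If $t_b = t_b^{\mathrm{long}}$, \defref{def:basic_proper_response} forces both cars to follow the longitudinal proper response of \defref{def:proper_longitudinal}, which is precisely the acceleration profile used to derive the safe distance in \lemref{lem:dmin} (or \lemref{lem:dmin_two_way} in the oncoming case). The worst-case analysis inside those lemmas shows that from $t_b$ onward the longitudinal gap stays strictly positive until both cars stop, and in particular until the earliest time $t_e$ at which longitudinal safety is re-established (which must occur because both cars are decelerating). Since a collision requires zero longitudinal separation, none can happen on $[t_b,t_e]$; I set $t_{i+1} = t_e$. The case $t_b = t_b^{\mathrm{lat}}$ is symmetric, invoking \defref{def:proper_lateral} and \lemref{lem:lateral_safe_distance}, where the lateral response guarantees the lateral gap stays at least $\mu > 0$, again precluding collision.

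The main obstacle, and the real subtlety beyond the single-lane case, is that a dangerous episode may never be restored to safe in \emph{both} dimensions simultaneously — the basic proper response only governs the dimension that most recently became unsafe, and the other dimension may remain non-safe throughout the episode. What rescues the argument is that the safe-distance derivations yield strict positive margins: longitudinally, the gap between $a_{\max,\mathrm{brake}}$ (assumed of the other car) and $a_{\min,\mathrm{brake}}$ (applied by the responding car) keeps the minimum longitudinal separation bounded away from zero; laterally, the explicit buffer $\mu$ plays the same role. Thus in the case $t_b = t_b^{\mathrm{long}}$, even though lateral non-safety persists, the longitudinal response alone forces positive longitudinal separation, so no car body can overlap; symmetrically for the lateral case. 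A small additional care is needed to handle the ``boundary'' cases noted in \secref{sec:safe-distance} where a car reaches a full stop and later starts moving again — these are absorbed into the choice of $t_{i+1}$ exactly as in the single-lane inductive proof.
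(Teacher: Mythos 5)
Your proposal is correct and follows essentially the same route as the paper's proof: the same pairwise (star-shape) reduction, the same inductive sequence of non-dangerous times, and the same appeal to the safe-distance definitions to rule out collision during each dangerous episode, with you merely making explicit the case split on whether $t_b = t_b^{\mathrm{long}}$ or $t_b = t_b^{\mathrm{lat}}$ that the paper leaves implicit. One small imprecision: longitudinal (or lateral) safety need not ever be ``re-established'' --- both cars may come to rest while still at a non-safe distance --- which is why the paper defines $t_e$ as the time at which either the situation becomes non-dangerous \emph{or} the relative velocity becomes non-positive; your closing remark about the full-stop boundary cases absorbs this.
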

\begin{proof}
  The definitions have been carefully crafted so that a similar
  inductive argument to the one given in \secref{sec:safe-distance}
  would hold.  We will prove that for any pair of cars, $c_1,c_2$,
  there is a sequence of increasing times,
  $0 = t_0 \le t_1 < t_2 < t_3 < \ldots$, such that for every time
  $t_i$, $i \ge 1$, there is no collision between $c_1$ and $c_2$ in
  the time interval $[t_{i-1},t_i]$, and at time $t_i$ the situation
  between $c_1$ and $c_2$ is not dangerous.  The basis of the
  induction is the earliest time, $t_1$, in which one of the cars is
  starting to drive. By the definition of proper response, $t_1$ is
  not a dangerous situation, and it is clear that there cannot be an
  accident from $t_0$ to $t_1$.  Suppose the inductive assumption
  holds for $t_1 < \ldots < t_i$. So, at time $t_i$, the situation is
  non-dangerous. Let $t_b$ be the earliest time after $t_i$ for which
  the situation becomes dangerous. If no such $t_b$ exists, then there
  will be no accidents in the time interval $[t_i,\infty)$ (because,
  before an accident occurs, the situation must be dangerous), hence
  we are done.  Otherwise, the definition of proper response implies
  that if both agents apply the proper response then there is a time
  $t_e \ge t_b$ for which either the situation becomes non-dangerous,
  or the relative longitudinal velocity of the two cars is
  non-positive, or the relative lateral velocity of the two cars is
  non-positive. In the former case, we set $t_{i+1} = t_e$. In the
  latter cases, we set $t_{i+1}$ to be the earliest time after $t_e$
  in which the cars are again not at a dangerous situation, and if no
  such time exists it must mean that the cars would never collide in
  the time interval $[t_e,\infty)$. In all cases, by the definitions
  of proper response and safe distance, there will be no accident in
  the time interval $[t_b,t_e]$. And, clearly, there can be no
  accidents in the time intervals $[t_i,t_b]$ and
  $[t_e,t_{i+1}]$. Hence, our inductive argument is concluded.
  Finally, it is crucial to note that the definitions of proper
  response imply no contradictions between the proper response of
  $c_1$ relatively to $c_i$ and relatively to $c_j$, for any other two
  cars $c_i,c_j$.
\end{proof}

\subsection{Compensating for improper behavior of others}

In the previous subsection we have shown that if all cars respond
properly, according to the definition of basic proper response, then
there will be no collisions. But, it may be the case that some agent
does not respond properly, yet the other agent can prevent an
accident. In such a case, it is reasonable to require that agents will
do their ``best effort'' in order to avoid dangerous situations. On the other
hand, we do not want that an attempt to avoid one accident would lead
to another accident and we do not want that a requirement to avoid all
accidents will severely harm the usefulness of the model (e.g., if the
implication would be to always be at an extremely low speed). 

We tackle the tradeoff by introducing another layer of protection as
follows. Suppose we are already at a dangerous situation with respect
to some other agent and we figure out that if the other agent will
keep its current behavior while we will keep applying proper response,
there will be a collision. For example, consider the top row of
\figref{fig:examples_same_geometry}, where we are the yellow car. Our
basic proper response is to not move laterally toward the red car and
the red car's proper response is to stop its lateral movement toward
us. Suppose the red car does not stop its lateral manoeuvre. To avoid
a collision we can either brake longitudinally or move laterally to
the left (or both). Any action for which there will be no collision,
assuming the red car will keep its current behavior, is a fine evasive
manoeuvre. Furthermore, even if a collision is not going to happen,
we still don't want to be at a dangerous situation for a long
time. Therefore, we should plan an action that will take us back to a
non-dangerous situation.

To make this formal we need some additional definitions.
\begin{definition}[Naive Prediction] \label{def:naive_prediction} The
  longitudinal or lateral state of a road agent is defined by its
  position, velocity, and acceleration, denoted by $p_0,v_0,a_0$.  The
  future state, assuming a naive prediction, is as follows. Let
  $\tau = -v_0/a_0$ in case the signs of $v_0$ and $a_0$ are opposite,
  or $\tau = \infty$ otherwise. The acceleration at time $t$ is $a_0$
  if $t \in [0,\tau]$ and $0$ otherwise. The velocity at time $t$ is
  $v_0$ plus the integral of the velocity, and the position is $p_0$
  plus the integral of the velocity. 
\end{definition}

\begin{definition}[Evasive Manoeuvre] \label{def:evasive}
Suppose that at time $t_0$ the situation is dangerous with respect to
two road agents $c_1,c_2$. An evasive manoeuvre for $c_1$, with
respect to a plan time parameter $T$, is 
a set of two functions $m^{\mathrm{long}} : [t_0,t_1]
\to \reals$ and $m^{\mathrm{lat}} : [t_0,t_1] \to \reals$, where $t_1 =
t_0+T$, such that:
\begin{itemize}
\item For $t \in [t_0,t_1]$, the longitudinal and lateral positions of
  $c_1$ is given by $m^{\mathrm{long}}(t)$ and
  $m^{\mathrm{lat}}(t)$. As a result, the longitudinal/lateral
  velocities/accelerations of $c_1$ are given by the first/second
  derivatives of the above.
\item Assume that during the time interval $[t_0,t_1]$, $c_2$ will
  drive according to its naive prediction (as given in
  \defref{def:naive_prediction}) and $c_1$ will drive according to
  $m^{\mathrm{long}}(t)$ and $m^{\mathrm{lat}}(t)$, then the two cars
  will not collide in the time interval $[t_0,t_1]$ and $t_1$ will no
  longer be dangerous time.
\end{itemize} 
We say that the evasive manoeuvre is legal if in addition to the
above, it holds that:
\begin{itemize}
\item The longitudinal and lateral accelerations of $c_1$ at time
$t_0$ satisfy the basic proper response constraints of $c_1$ (as given
in \defref{def:basic_proper_response}) with respect to all other
agents. 
\item The absolute value of the lateral acceleration according to $m^{\mathrm{lat}}$ is
  always at most $a^{\mathrm{lat}}_{\max,\mathrm{accel}}$ and the
  longitudinal acceleration according to $m^{\mathrm{long}}$ is always in the interval
  $[-a_{\max,\mathrm{brake}}, a_{\max,\mathrm{accel}}]$. 
\end{itemize} 
\end{definition}

We now extend \defref{def:basic_proper_response} to include the common
sense rule of ``if you can avoid an accident without causing another accident, you
must do it''.
\begin{definition}[Proper Response with Extra Evasive
  Effort] \label{def:proper_response_evasive} Let $t$ be a dangerous
  time for cars $c_1,c_2$ and let $t_b$ be the corresponding blame
  time. Assume that during the time interval $[t_b,t]$, car $c_2$ did
  not comply with the proper response constraints as given in
  \defref{def:basic_proper_response}. Assume also that there exists a
  legal evasive manoeuvre for $c_1$ as defined in
  \defref{def:evasive}. Then, the proper response of
  $c_1$ is to apply a legal evasive manoeuvre (in addition to the
  proper response constraints as given in
  \defref{def:basic_proper_response}). 
\end{definition}

By requiring that the evasive manoeuvres would never contradict those of
\defref{def:basic_proper_response}, we make sure that they would not
cause another accident. Clearly, the proof of \lemref{lem:basic_utopia} still holds
even if cars apply proper response as in \defref{def:proper_response_evasive}.

\subsection{Multiple Geometry and Right-of-Way Rules} \label{sec:lane-priority}

\begin{figure}
\begin{center}
\begin{subfigure}[t]{0.22\textwidth}
\includegraphics[width=0.8\textwidth, height=2cm]{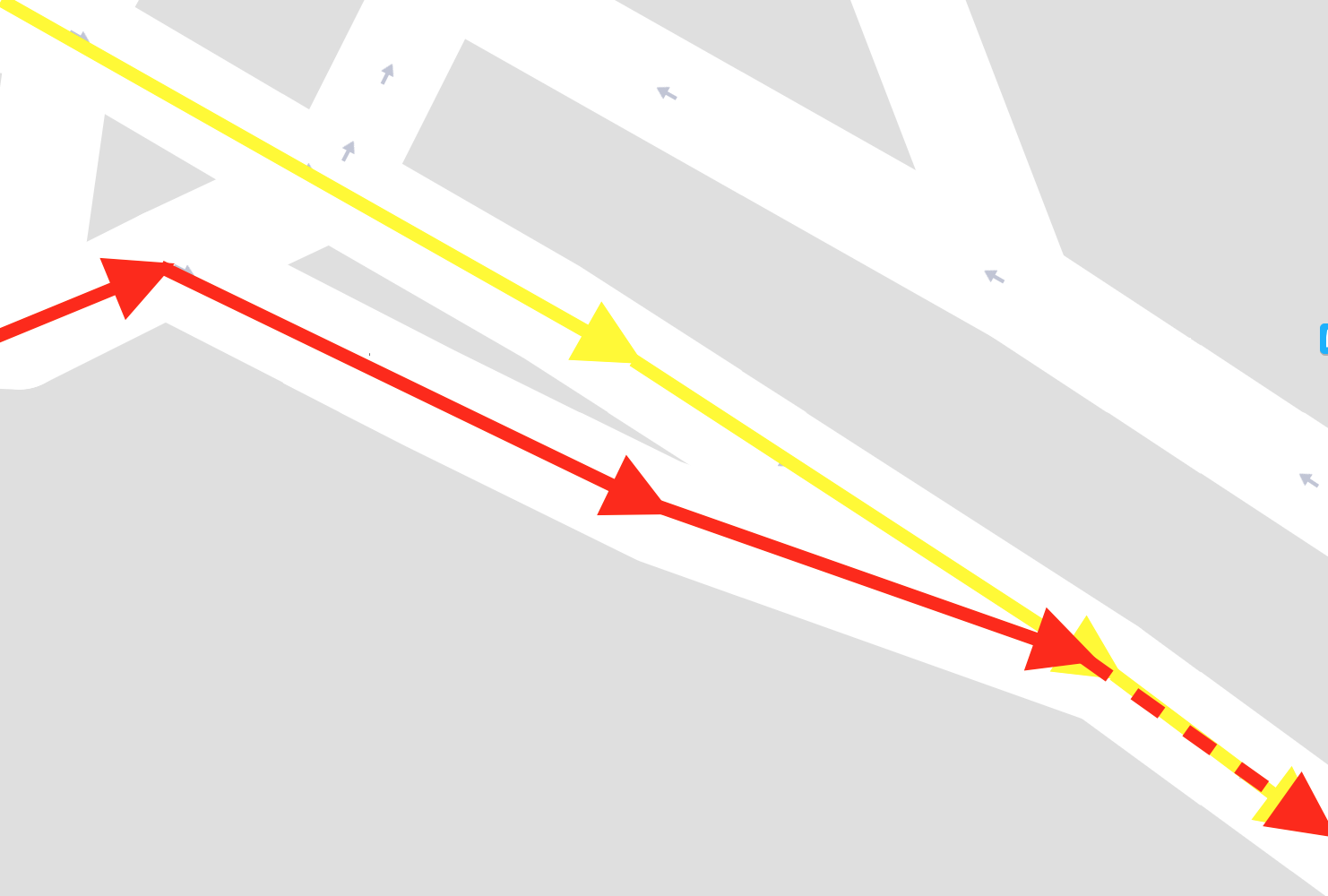}
\caption{}\label{fig:scenarios:merge}
\end{subfigure}
~
\begin{subfigure}[t]{0.22\textwidth}
\includegraphics[width=0.8\textwidth, height=2cm]{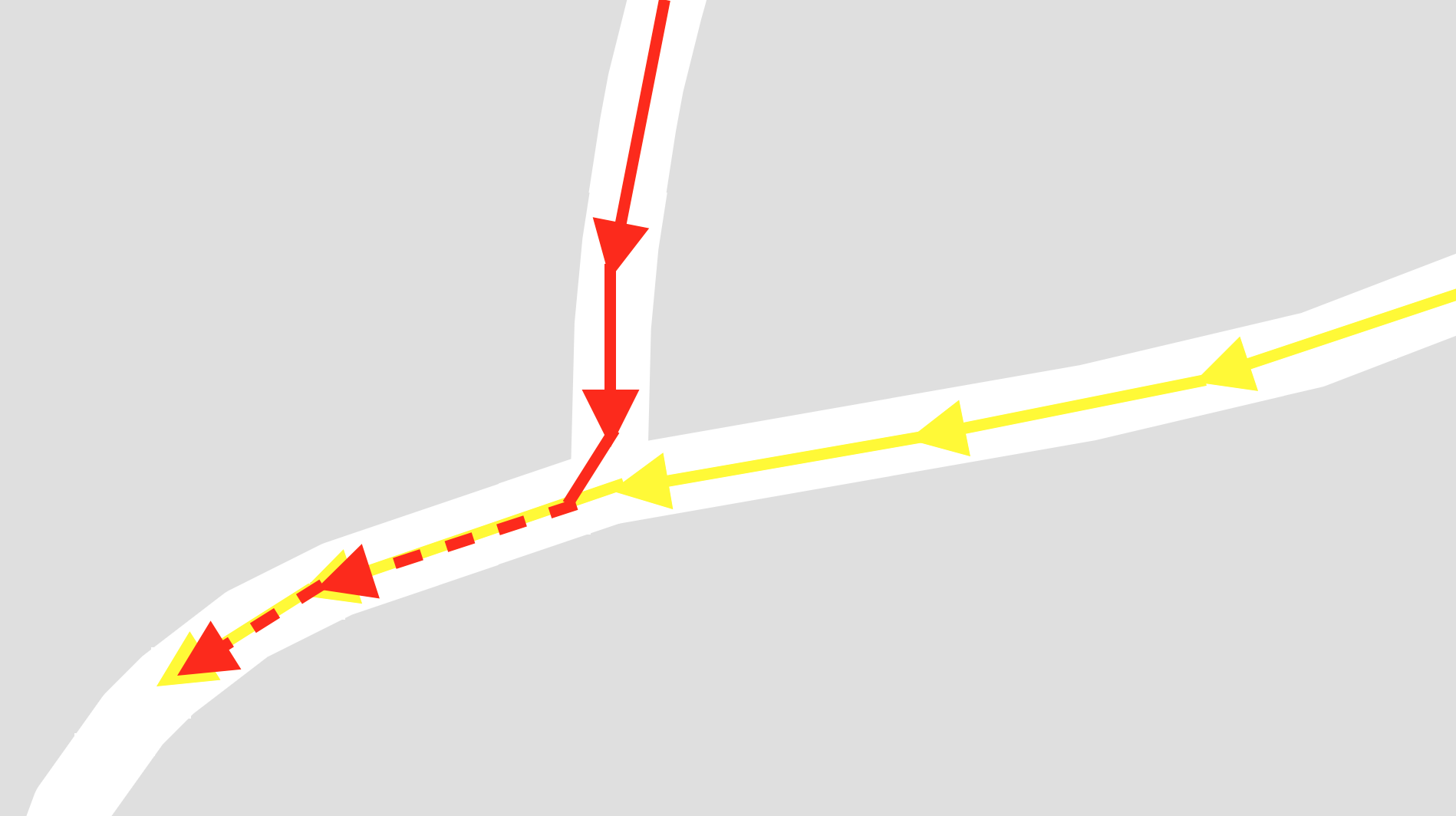}
\caption{}\label{fig:scenarios:t}
\end{subfigure}
~
\begin{subfigure}[t]{0.22\textwidth}
\includegraphics[width=0.8\textwidth, height=2cm]{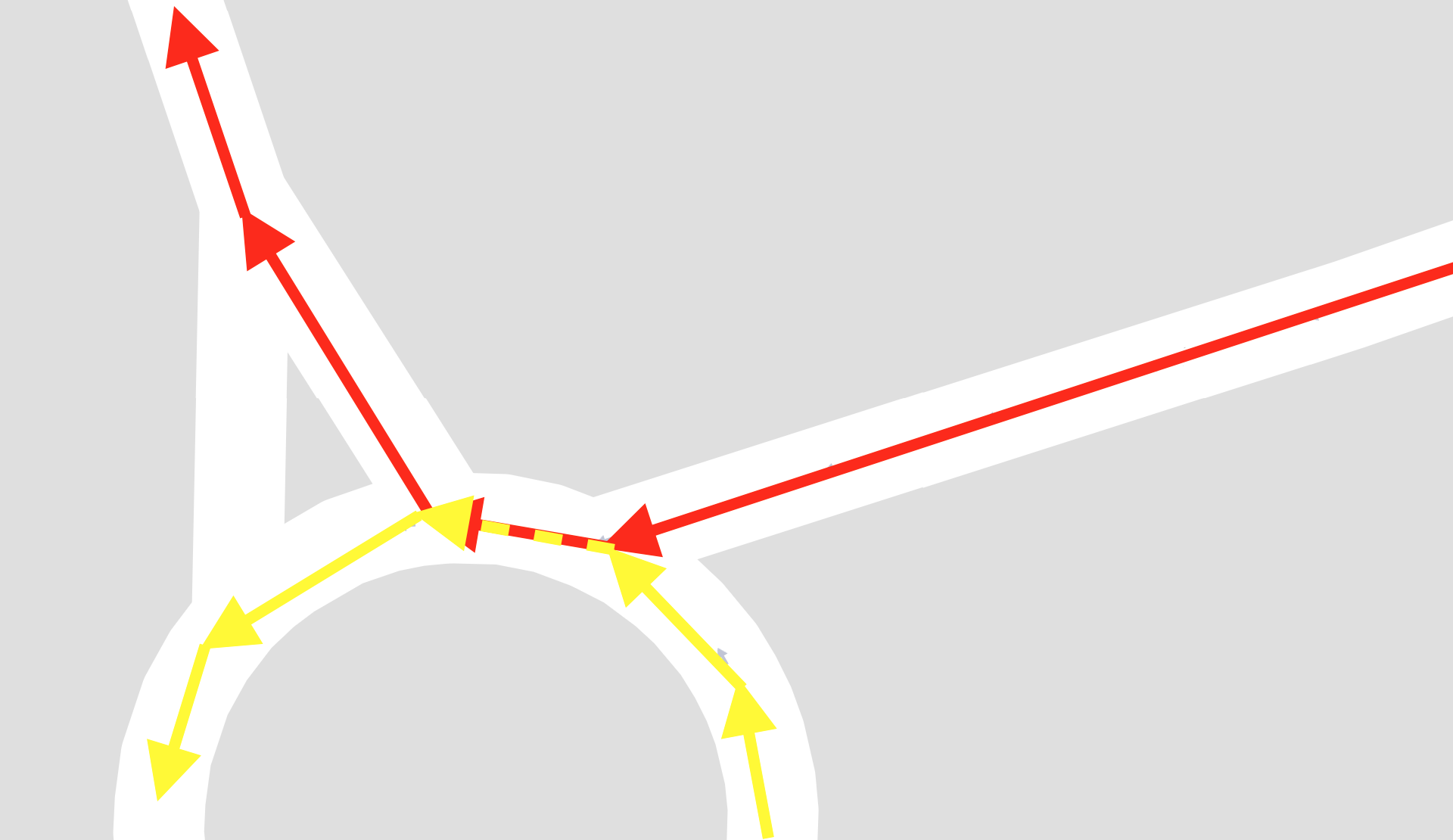}
\caption{}
\end{subfigure}
~
\begin{subfigure}[t]{0.22\textwidth}
\includegraphics[width=0.8\textwidth, height=2cm]{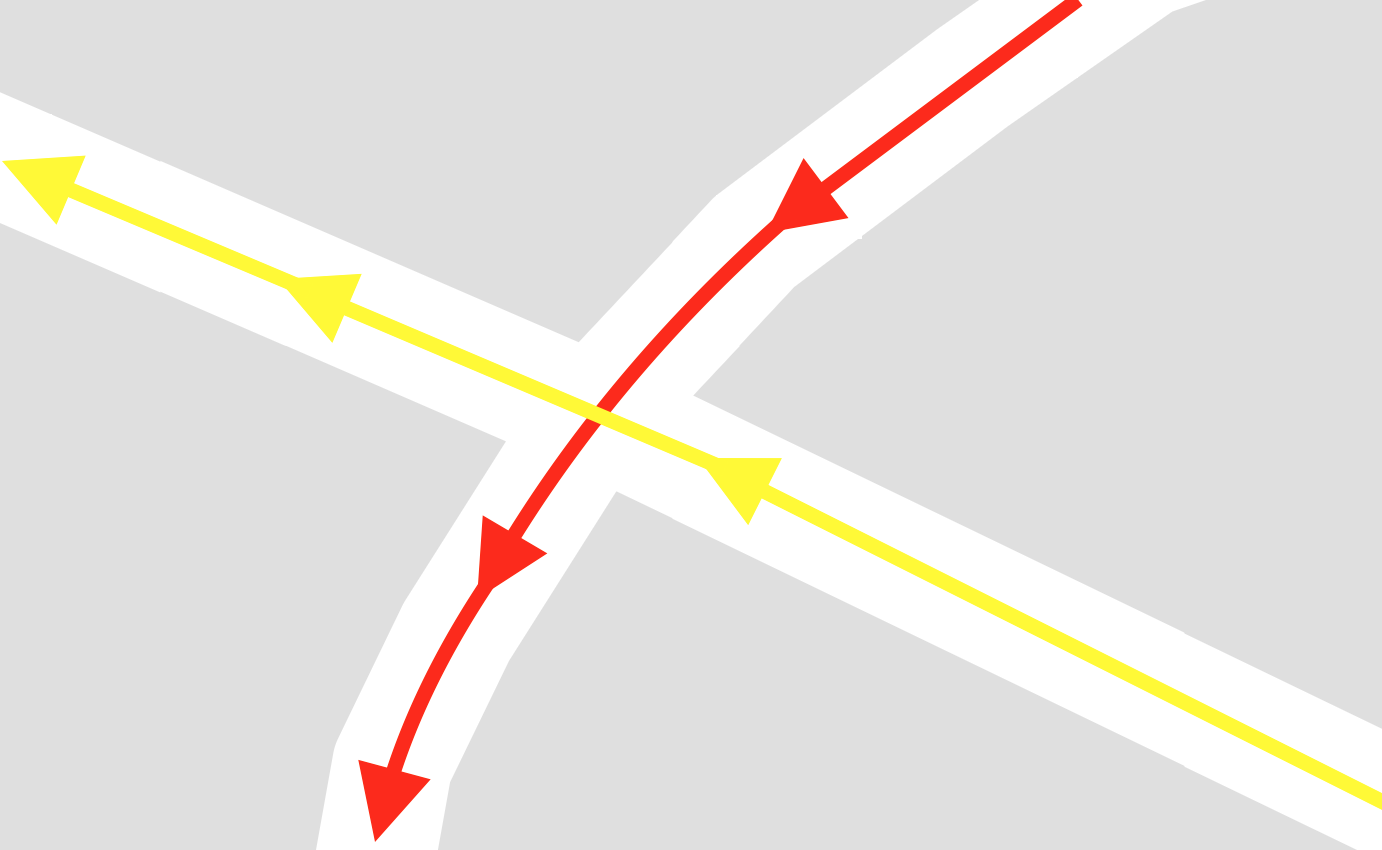}
\caption{}
\end{subfigure}
\end{center}
\caption{Different examples for multiple routes scenarios. In yellow, the prioritized route. In red, the secondary route.}\label{fig:scenarios}
\end{figure}

We next turn to deal with scenarios in which there are multiple
different road geometries in one scene that overlap in a certain
area. Examples include roundabouts, junctions, and merge into
highways. See \figref{fig:scenarios} for illustration. In many such
cases, one route has priority over others, and vehicles riding on it
have the \emph{right of way}. 

In the previous subsections we could assume that the route is straight, by
relying on \secref{sec:lane_based} that shows how to construct a
bijection between a general lane geometry and a straight road, with
a coherent meaning for longitudinal and lateral axes. When facing
scenarios of multiple route geometries, the definitions should be
adjusted. First, two routes with different geometries can yield
conflicts in the constraints of the proper response. An example is
given in \figref{fig:proper_conflict}. 

\begin{figure}
\begin{center}
\begin{tabular}{cc} 
Different geometry & Same geometry \\
\begin{minipage}{0.5\linewidth}
\begin{center}
\begin{tikzpicture}[scale=0.4, every node/.style={transform shape}]

\fill[gray!50] (-5,-5) rectangle (5,5);

\draw[white, line width=0.75cm] (0,-4.5) -- (0,4.5);
\draw[white, line width=0.5cm] (0,-4.5) arc (0:60:9);

\begin{scope}[very thick,decoration={markings,mark=at position 0.25 with {\arrow{>}},mark=at position 0.5 with {\arrow{>}},mark=at position 0.75 with {\arrow{>}},mark=at position 0.99 with {\arrow{>}}}] 
\draw[red,densely dashed,very thick, postaction={decorate}] (0,-4.5) arc (0:60:9);
\draw[yellow,densely dashed,very thick, postaction={decorate}] (0,-4.5) -- (0,4.5);
\end{scope}
\node[rotate=100] at (1.3,-3.5) {\includegraphics[width=2cm]{red_car}};
\node[rotate=80] at (-1.3,-3.5) {\includegraphics[width=2cm]{yellow_car}};
\node[rotate=90] at (0,-3.5) {\includegraphics[width=2cm]{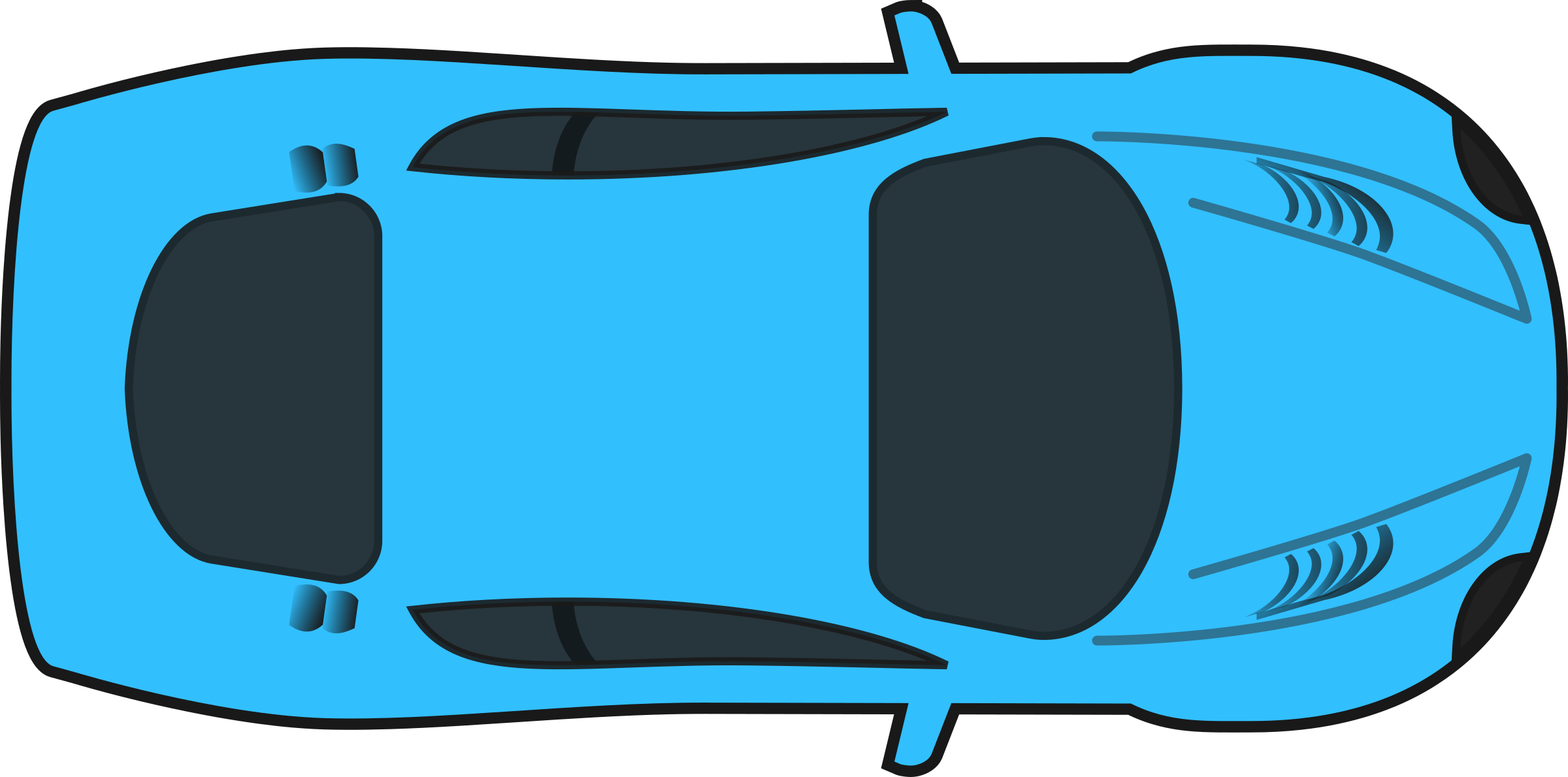}};

\end{tikzpicture} 
\end{center}
\end{minipage}
  & 
\begin{minipage}{0.5\linewidth}
\begin{center}
\begin{tikzpicture}[scale=0.4, every node/.style={transform shape}]

\fill[gray!50] (-5,-5) rectangle (5,5);

\draw[white, line width=1.5cm] (0,-4.5) -- (0,4.5);

\begin{scope}[very thick,decoration={markings,mark=at position 0.25 with {\arrow{>}},mark=at position 0.5 with {\arrow{>}},mark=at position 0.75 with {\arrow{>}},mark=at position 0.99 with {\arrow{>}}}] 
\draw[red,densely dashed,very thick, postaction={decorate}] (1,-4.5) -- (1,4.5);
\draw[yellow,densely dashed,very thick, postaction={decorate}] (-1,-4.5) -- (-1,4.5);
\end{scope}
\node[rotate=100] at (1.3,-3.5) {\includegraphics[width=2cm]{red_car}};
\node[rotate=80] at (-1.3,-3.5) {\includegraphics[width=2cm]{yellow_car}};
\node[rotate=90] at (0,-3.5) {\includegraphics[width=2cm]{blue_car}};

\end{tikzpicture} 
\end{center}
\end{minipage}
\end{tabular}
\end{center}
\caption{A proper response conflict due to multiple geometry. Right:
  if all routes share the same geometry, there are no conflicts
  between the proper response constraints with regard to different
  cars. Left: the proper response of the blue car with respect to the
  yellow car and yellow route is to continue straight according to the
  yellow route, while the proper response of the blue car with respect
  to the red car and the red route is to continue driving on the
  center of the red route. These two constraints contradict each
  other.} \label{fig:proper_conflict}
\end{figure}

Moreover, consider the T-junction depicted on
\figref{fig:scenarios:t}, and suppose that there is a stop sign for
the red route. Suppose that $c_1$ is approaching the intersection on
the yellow route and at the same time $c_2$ is approaching the
intersection on the red route. According to the yellow route's
coordinate system, $c_2$ has a very large lateral velocity, hence
$c_1$ might deduce that $c_2$ is already at a non-safe lateral
distance, which implies that $c_1$, driving on the prioritized route,
must reduce speed in order to maintain a safe longitudinal distance to
$c_2$. This means that $c_2$ should be very conservative
w.r.t. traffic that coming from the red route. This is of course an
unnatural behavior, as cars on the yellow route have the right-of-way
in this case. Furthermore, even $c_2$, who doesn't have the priority,
should be able to merge into the junction as long as $c_1$ can stop in
time (this will be crucial in dense traffic). This example shows that
when $c_1$ drives on $r_1$, it doesn't make sense to consider its
position and velocity w.r.t the coordinate system of $r_2$. As a
result, we need to generalize basic notions from previous subsections
such as ``what does it mean that $c_1$ is in front of $c_2$'', and
what does it mean to be at a non safe distance.

\begin{remark}
  The definitions below assume that two cars, $c_1,c_2$ are driving on
  different routes, $r_1,r_2$. We emphasize that in some situations
  (for example, the T-junction given in \figref{fig:scenarios:t}),
  once there is exactly a single route $r_1$ such that both cars are
  assigned to it, and the time is not dangerous, then from that moment
  on, the definitions are solely w.r.t. $r_1$.
\end{remark}

We start with generalizing the definition of safe lateral distance and
lateral proper response. It is not hard to verify that applying the
definition below to two routes of the same geometry indeed yields the
same definition as in \defref{def:lateral_safe_distance}. Throughout
this section, we sometimes refer to a route as a subset of $\reals^2$.
\begin{definition}[Lateral Safe Distance for Two Routes of Different
  Geometry] \label{def:safe_lat_geometry} Consider vehicles $c_1,c_2$
  driving on routes $r_1,r_2$ that intersect.  For every
  $i \in\{1,2\}$, let $[x_{i,\min},x_{i,\max}]$ be the minimal and
  maximal lateral positions in $r_i$ that $c_i$ can be in, if during
  the time interval $[0,\rho)$ it will apply a lateral acceleration
  (w.r.t. $r_i$) s.t.
  $|a^{\mathrm{lat}}| \le a^{\mathrm{lat}}_{\max,\mathrm{accel}}$, and
  after that it will apply a lateral braking of at least
  $a^{\mathrm{lat}}_{\min,\mathrm{brake}}$ (again w.r.t. $r_i$), until
  reaching a zero lateral velocity (w.r.t. $r_i$).  The lateral
  distance between $c_1$ and $c_2$ is safe if the
  restrictions\footnote{The restriction of $r_i$ to the lateral intervals
  $[x_{i,\min},x_{i,\max}]$ is the subset of $\reals^2$ obtained by
  all points $(x,y) \in r_i$ for which the lateral position in lane coordinates
  of $(x,y)$ (as defined in \secref{sec:lane_based}) is in the interval $[x_{i,\min},x_{i,\max}]$.} of
  $r_1,r_2$ to the lateral intervals
  $[x_{1,\min},x_{1,\max}], [x_{2,\min},x_{2,\max}]$ are at a
  distance\footnote{The distance between sets $A,B$ is $\min\,\{\|a-b\|
      : a \in A, b \in B\}$.}
  of at least $\mu$. 
\end{definition}

The definition of laterally dangerous time and Danger Threshold time is exactly
as in  \defref{def:lateral_dangerous}, except that safe lateral
distance is according to \defref{def:safe_lat_geometry}. We next
define lateral proper response. 

\begin{definition}[Lateral Proper response for Two Routes of Different
  Geometry] \label{def:proper_lateral_geometry}
  Let $t$ be a laterally dangerous time for cars $c_1,c_2$, driving on
  routes $r_1,r_2$, let $t_b^{\mathrm{lat}}$
  be the corresponding laterally Danger Threshold time, and let $x_1,x_2$ be the
  lateral positions of $c_1,c_2$ at time $t_b^{\mathrm{lat}}$
  w.r.t. routes $r_1,r_2$, respectively. The laterally proper
  response of the two cars is to comply with the following constraints
  on the lateral speed:
\begin{itemize}
\item If $t \in [t_b^{\mathrm{lat}},t_b^{\mathrm{lat}}+\rho)$ then both cars can do any lateral action as
  long as their lateral
  acceleration, $a$, satisfies $|a| \le
  a^{\mathrm{lat}}_{\max,\mathrm{accel}}$. 
\item Else, if $t \ge t_b^{\mathrm{lat}}+\rho$, for every $i \in \{1,2\}$:
\begin{itemize}
\item Before reaching $\mu$-lateral-velocity of $0$, $c_i$ must apply
  lateral acceleration that will take it toward $x_i$ whose magnitude
  value is at least
  $|a^{\mathrm{lat}}_{\min,\mathrm{brake}}|$
\item After reaching $\mu$-lateral-velocity of $0$, $c_i$ can have any
  lateral acceleration that will take it away from the other car
\end{itemize}
\end{itemize}
\end{definition}

Before we define longitudinal safe distance, we need to quantify
ordering between cars when no common longitudinal axis exists.
\begin{definition}[Longitudinal Ordering for Two Routes of Different
  Geometry] \label{def:in_front}
Consider $c_1,c_2$ driving on routes $r_1,r_2$ that intersect. We say
that $c_1$ is longitudinally in front of $c_2$ if either of the
following holds:
\begin{enumerate}
\item For every $i$, if both vehicles are on $r_i$ then $c_1$ is in front
  of $c_2$ according to $r_i$
\item $c_1$ is outside $r_2$ and $c_2$ is outside $r_1$, and the
  longitudinal distance from $c_1$ to the set $r_1 \cap r_2$,
  w.r.t. $r_1$, is smaller than the longitudinal distance from $c_2$ to
  the set $r_1 \cap r_2$, w.r.t. $r_2$.
\end{enumerate}
\end{definition}
\begin{remark}
One may worry that the longitudinal ordering definition is not robust,
for example, in item (2) of the definition, suppose that $c_1,c_2$ are at 
distances of $20,20.1$ meters, respectively, from the
intersection. This is not an issue as this definition is effectively
being used only when there is a safe longitudinal distance between the
two cars, and in that case the ordering between the cars will be
obvious. Furthermore, this is exactly analogous to the non-robustness of
ordering when two cars are driving side by side on a multi-lane
highway road. 
\end{remark}
An illustration of the ordering definition is given in
\figref{fig:ordering}.
\begin{figure}[t]
\begin{center}
\begin{tabular}{cc}
Yellow is in front & Yellow is in front \\
\begin{tikzpicture}[scale=0.4, every node/.style={transform shape}]

\fill[gray!50] (-5,-5) rectangle (7,3);

\draw[white, line width=0.5cm] (7,0) -- (0,0);
\draw[white, line width=0.5cm] (7,-4) .. controls (3,-4) and (2,0) .. (0,0) -- (-4.5,0);

\begin{scope}[very thick,decoration={markings,mark=at position 0.25 with {\arrow{>}},mark=at position 0.5 with {\arrow{>}},mark=at position 0.75 with {\arrow{>}},mark=at position 0.99 with {\arrow{>}}}] 
\draw[red,densely dashed,very thick, postaction={decorate}] (7,-4) .. controls (3,-4) and (2,0) .. (0,0) -- (-4.5,0);
\draw[yellow,densely dashed,very thick, postaction={decorate}] (7,0) -- (0,0);
\end{scope}

\node[rotate=155] at (5,-3.5) {\includegraphics[width=2cm]{red_car}};
\node[rotate=180] at (4,0) {\includegraphics[width=2cm]{yellow_car}};

\end{tikzpicture} & 
\begin{tikzpicture}[scale=0.4, every node/.style={transform shape}]

\fill[gray!50] (-5,-5) rectangle (7,3);

\draw[white, line width=0.5cm] (7,0) -- (0,0);
\draw[white, line width=0.5cm] (7,-4) .. controls (3,-4) and (2,0) .. (0,0) -- (-4.5,0);

\begin{scope}[very thick,decoration={markings,mark=at position 0.25 with {\arrow{>}},mark=at position 0.5 with {\arrow{>}},mark=at position 0.75 with {\arrow{>}},mark=at position 0.99 with {\arrow{>}}}] 
\draw[red,densely dashed,very thick, postaction={decorate}] (7,-4) .. controls (3,-4) and (2,0) .. (0,0) -- (-4.5,0);
\draw[yellow,densely dashed,very thick, postaction={decorate}] (7,0) -- (0,0);
\end{scope}

\node[rotate=155] at (5,-3.5) {\includegraphics[width=2cm]{red_car}};
\node[rotate=180] at (0,0) {\includegraphics[width=2cm]{yellow_car}};

\end{tikzpicture} \\ & \\
Yellow is in front & Nobody is in front \\
\begin{tikzpicture}[scale=0.4, every node/.style={transform shape}]

\fill[gray!50] (-5,-5) rectangle (7,3.5);

\draw[white, line width=0.5cm] (7,-1) -- (-4.5,-1);
\draw[white, line width=0.5cm] (7,-4) .. controls (3,-4) and (2,0) .. (-4,3);

\begin{scope}[very thick,decoration={markings,mark=at position 0.25 with {\arrow{>}},mark=at position 0.5 with {\arrow{>}},mark=at position 0.75 with {\arrow{>}},mark=at position 0.99 with {\arrow{>}}}] 
\draw[red,densely dashed,very thick, postaction={decorate}] (7,-4) .. controls (3,-4) and (2,0) .. (-4,3);
\draw[yellow,densely dashed,very thick, postaction={decorate}] (-4.5,-1) -- (7,-1);
\end{scope}
\node[rotate=155] at (5,-3.5) {\includegraphics[width=2cm]{red_car}};
\node[rotate=0] at (0,-1) {\includegraphics[width=2cm]{yellow_car}};

\end{tikzpicture}  &
\begin{tikzpicture}[scale=0.4, every node/.style={transform shape}]

\fill[gray!50] (-5,-5) rectangle (7,3.5);

\draw[white, line width=0.5cm] (7,-1) -- (-4.5,-1);
\draw[white, line width=0.5cm] (7,-4) .. controls (3,-4) and (2,0) .. (-4,3);

\begin{scope}[very thick,decoration={markings,mark=at position 0.25 with {\arrow{>}},mark=at position 0.5 with {\arrow{>}},mark=at position 0.75 with {\arrow{>}},mark=at position 0.99 with {\arrow{>}}}] 
\draw[red,densely dashed,very thick, postaction={decorate}] (7,-4) .. controls (3,-4) and (2,0) .. (-4,3);
\draw[yellow,densely dashed,very thick, postaction={decorate}] (-4.5,-1) -- (7,-1);
\end{scope}
\node[rotate=140] at (2.75,-2) {\includegraphics[width=2cm]{red_car}};
\node[rotate=0] at (0,-1) {\includegraphics[width=2cm]{yellow_car}};

\end{tikzpicture}  
\end{tabular}
\end{center}
\caption{Illustration of \defref{def:in_front}.} \label{fig:ordering}
\end{figure}

\begin{definition}[Longitudinal Safe Distance for Two Routes of
  Different Geometry] \label{def:safe_long_geometry}
Consider $c_1,c_2$ driving on routes $r_1,r_2$ that intersect. The
longitudinal distance between $c_1$ and $c_2$ is safe if one of the
following holds:
\begin{enumerate}
\item If for all $i \in \{1,2\}$ s.t. $r_i$ has no priority, if $c_i$
  will accelerate by $a_{\max,\mathrm{accel}}$ for $\rho$ seconds, and
  will then brake by $a_{\min,\mathrm{brake}}$ until reaching zero
  longitudinal velocity (all w.r.t. $r_i$), then during this time
  $c_i$ will remain outside of the other route.
\item Otherwise, if $c_1$ is in front of $c_2$ (according to
  \defref{def:in_front}), then they are at a safe longitudinal distance if in case
  $c_1$ will brake by $a_{\max,\mathrm{brake}}$ until reaching a zero
  velocity (w.r.t. $r_1$), and $c_2$ will accelerate by at most
  $a_{\max,\mathrm{accel}}$ for $\rho$ seconds and then will brake by
  at least $a_{\min,\mathrm{brake}}$ (w.r.t. $r_2$) until reaching a
  zero velocity, then $c_1$ will remain in front of $c_2$ (according
  to \defref{def:in_front}).
\item Otherwise, consider a point $p \in r_1 \cap r_2$ s.t. for
  $i \in \{1,2\}$, the lateral position of $p$ w.r.t. $r_i$ is in
  $[x_{i,\min},x_{i,\max}]$ (as defined in
  \defref{def:safe_lat_geometry}). Let $[t_{i,\min},t_{i,\max}]$ be
  all times s.t. $c_i$ can arrive to the longitudinal position of $p$
  w.r.t. $r_i$ if it will apply longitudinal accelerations in the
  range $[-a_{\max,\mathrm{brake}}, a_{\max,\mathrm{accel}}]$ during
  the first $\rho$ seconds, and then will apply longitudinal braking
  in the range $[a_{\min,\mathrm{brake}}, a_{\max,\mathrm{brake}}]$
  until reaching a zero velocity. Then, the vehicles are at a safe
  longitudinal distance if for every such $p$ we have that
  $[t_{1,\min},t_{1,\max}]$ does not intersect
  $[t_{2,\min},t_{2,\max}]$.
\end{enumerate}
\end{definition}
Illustrations of the definition is given in
\figref{fig:safe_long_geometry}.

The definition of longitudinally dangerous time and Danger Threshold time is exactly
as in  \defref{def:longitudinal_dangerous}, except that safe longitudinal
distance is according to \defref{def:safe_long_geometry}.

\begin{definition}[Longitudinal Proper Response for Routes of Different
  Geometry] \label{def:proper_multi_geometry} Suppose that time $t$ is
  longitudinally dangerous for vehicles
  $c_1,c_2$ driving on routes $r_1,r_2$ The longitudinal proper
  response depends on the situation
  immediately before the Danger Threshold time:
\begin{itemize}
\item If the longitudinal distance was safe according to item
  (1) in \defref{def:safe_long_geometry}, then if a vehicle is on the
  prioritized route it can drive normally, and otherwise it must brake
  by at least $a_{\min,\mathrm{brake}}$ if $t-t_b \ge \rho$. 
\item Else,  if the longitudinal distance was safe according to item
  (2) in \defref{def:safe_long_geometry}, then $c_1$ can drive
  normally and $c_2$ must brake by at least $a_{\min,\mathrm{brake}}$
  if $t-t_b \ge \rho$. 
\item Else, if the longitudinal distance was safe according to item
  (3) in \defref{def:safe_long_geometry}, then both cars can drive
  normally if $t-t_b < \rho$, and otherwise, both cars should brake
  laterally and longitudinally by at least
  $a^{\mathrm{lat}}_{\min,\mathrm{brake}}, a_{\min,\mathrm{brake}}$
  (each one w.r.t. its own route).
\end{itemize}
\end{definition}

Finally, the analogues of \defref{def:basic_proper_response}, 
\defref{def:dangerous_and_blame_time}, and
\defref{def:proper_response_evasive} are straightforward.  

\begin{figure}[t]
\begin{center}
\begin{subfigure}[b]{0.28\textwidth}
\begin{tikzpicture}[scale=0.4, every node/.style={transform shape}]

\fill[gray!50] (-5,-5) rectangle (7,3);

\draw[white, line width=0.5cm] (7,0) -- (0,0);
\draw[white, line width=0.5cm] (7,-4) .. controls (3,-4) and (2,0) .. (0,0) -- (-4.5,0);

\begin{scope}[very thick,decoration={markings,mark=at position 0.25 with {\arrow{>}},mark=at position 0.5 with {\arrow{>}},mark=at position 0.75 with {\arrow{>}},mark=at position 0.99 with {\arrow{>}}}] 
\draw[red,densely dashed,very thick, postaction={decorate}] (7,-4) .. controls (3,-4) and (2,0) .. (0,0) -- (-4.5,0);
\draw[yellow,densely dashed,very thick, postaction={decorate}] (7,0) -- (0,0);
\end{scope}

\node[rotate=155] at (5,-3.5) {\includegraphics[width=2cm]{red_car}};
\node[rotate=180] at (5.5,0) {\includegraphics[width=2cm]{yellow_car}};

\node[regular polygon, regular polygon sides=8,
      draw=red, double, double distance=0.5mm, ultra thick,
      fill=red,text=white,inner sep=0mm,text width=8mm] at (0.25,-1.5)
      {STOP};

\end{tikzpicture}
\caption{Safe because yellow has priority and red can stop before
  entering the intersection.} \label{fig:A}
\end{subfigure} 
\hspace{1cm}
\begin{subfigure}[b]{0.28\textwidth}
\begin{tikzpicture}[scale=0.4, every node/.style={transform shape}]

\fill[gray!50] (-5,-5) rectangle (7,3);

\draw[white, line width=0.5cm] (7,0) -- (0,0);
\draw[white, line width=0.5cm] (7,-4) .. controls (3,-4) and (2,0) .. (0,0) -- (-4.5,0);

\begin{scope}[very thick,decoration={markings,mark=at position 0.25 with {\arrow{>}},mark=at position 0.5 with {\arrow{>}},mark=at position 0.75 with {\arrow{>}},mark=at position 0.99 with {\arrow{>}}}] 
\draw[red,densely dashed,very thick, postaction={decorate}] (7,-4) .. controls (3,-4) and (2,0) .. (0,0) -- (-4.5,0);
\draw[yellow,densely dashed,very thick, postaction={decorate}] (7,0)
-- (0,0);
\end{scope}

\node[rotate=140] at (3.25,-2.5) {\includegraphics[width=2cm]{red_car}};
\node[rotate=180] at (0,0) {\includegraphics[width=2cm]{yellow_car}};

\end{tikzpicture}
\caption{Safe because yellow is in front of red, and if yellow will
  brake, red can brake as well and avoid a collision.} \label{fig:B}
\end{subfigure} \hspace{1cm}
\begin{subfigure}[b]{0.28\textwidth}
\begin{tikzpicture}[scale=0.4, every node/.style={transform shape}]

\fill[gray!50] (-5,-5) rectangle (7,3);

\draw[white, line width=0.5cm] (7,-1) -- (-4.5,-1);
\draw[white, line width=0.5cm] (7,-4) .. controls (3,-4) and (0,0) .. (0,2.8);

\begin{scope}[very thick,decoration={markings,mark=at position 0.25 with {\arrow{>}},mark=at position 0.5 with {\arrow{>}},mark=at position 0.75 with {\arrow{>}},mark=at position 0.99 with {\arrow{>}}}] 
\draw[red,densely dashed,very thick, postaction={decorate}] (7,-4) .. controls (3,-4) and (0,0) .. (0,2.8);
\draw[yellow,densely dashed,very thick, postaction={decorate}]
(-4.5,-1) -- (7,-1);
\end{scope}

\node[rotate=140] at (2.85,-2) {\includegraphics[width=2cm]{red_car}};
\node[rotate=0] at (-0.25,-1) {\includegraphics[width=2cm]{yellow_car}};

\end{tikzpicture}  
\caption{If yellow is at a full stop and red is at a full lateral stop, safe by item (3) of \defref{def:safe_long_geometry}.} \label{fig:C}
\end{subfigure}
\end{center}
\caption{Illustration of safe longitudinal distance (\defref{def:safe_long_geometry})} \label{fig:safe_long_geometry}
\end{figure}

\begin{remark}[No contradictions and star-shape calculations
  Revisited] \label{rem:no-contradictions_geometry} The updated
  definition of proper response is with respect to a pair of cars
  riding on a pair of routes. As before, we need to consider the
  proper response of our car with respect to each other car
  individually. That is, here again we adopt a \emph{star-shape}
  calculation. Since the proper response with respect to every other
  car is translated to a lateral and longitudinal braking constraints
  with respect to \emph{our route}, there can be no conflicts between
  proper responses with respect to different agents. For example, in
  the situation depicted on the left of \figref{fig:proper_conflict},
  the proper response with respect to the red car is a longitudinal
  brake, and hence it does not contradict the proper response of
  lateral brake with respect to the yellow car.  As a result, it is
  easy to verify that our inductive proof sill holds.  Finally, note
  that there are cases where the route used by another agent is
  unknown: for example, see \figref{fig:routes}. In such case, every
  agent should comply with the proper response obtained by checking
  all possibilities.
\end{remark}

\begin{figure}
\begin{center}
\def\RL{4}
\def\LW{2}
\def\height{2}
\def\width{1}
\def\theta{5}
\begin{tikzpicture}[scale=0.6]
\coordinate (C) at (1.5*\LW, 0.5*\LW);
%\draw [gray,fill=gray] (-\LW,-\RL) rectangle (2*\LW,\RL);
\fill[gray!50] (-\LW,-\RL) rectangle (2*\LW,\RL); 
\draw [white, very thick] (-0.5*\LW,-\RL)--(-0.5*\LW,\RL);
\draw [white, very thick] (0.5*\LW,-\LW)--(0.5*\LW,-\RL);
\draw [white, very thick] (0.5*\LW,\LW)--(0.5*\LW,\RL);
\draw [white, very thick] (0.5*\LW,\LW)--(2*\LW,\LW);
\draw [dashed,white, very thick] (0.5*\LW,0)--(2*\LW,0);
\draw [white, very thick] (0.5*\LW,-\LW)--(2*\LW,-\LW);
\node[rotate=90] at (0,-3) {\includegraphics[width=1.0cm]{red_car}}; 
\begin{scope}[shift={(C)}]
\begin{scope}[rotate around={90:(0,0)}]
\node[rotate=180] at (0,0) {\includegraphics[width=1.0cm]{yellow_car}}; 
\end{scope}
\end{scope}
\draw [->, dashed, thick, red] (0,-\LW)--(0,\RL);
\draw [->, dashed, thick, red] (0,-\LW) to [out=90, in=180](0.5*\LW,-0.5*\LW)--(2*\LW, -0.5*\LW);
\end{tikzpicture}
\end{center}
\caption{The yellow car cannot know for sure what is the route of the red one. }\label{fig:routes}
\end{figure}

\begin{figure}
\def\LW{2}
\def\height{2}
\def\width{1}
\def\theta{5}
\begin{center}
\begin{tikzpicture}[scale=0.4, every node/.style={transform shape}]
\draw [gray,fill=gray!70] (-3*\LW,-3*\LW) rectangle (3*\LW,3*\LW);
\draw [white, very thick] (-3*\LW,-\LW) -- (-\LW,-\LW) -- (-\LW,-3*\LW);
\draw [white, very thick] (-3*\LW,\LW) -- (-\LW,\LW) -- (-\LW,3*\LW);
\draw [white, very thick] (3*\LW,-\LW) -- (\LW,-\LW) -- (\LW,-3*\LW);
\draw [white, very thick] (3*\LW,\LW) -- (\LW,\LW) -- (\LW,3*\LW);
\draw [white, dashed, very thick] (0,-3*\LW) -- (0,-\LW);
\draw [white, dashed, very thick] (0,3*\LW) -- (0,\LW);
\draw [white, dashed, very thick] (-3*\LW,0) -- (-\LW,0);
\draw [white, dashed, very thick] (3*\LW,0) -- (\LW,0);
\draw [red, dashed, very thick, ->] (3*\LW,0.5*\LW) -- (-2*\LW,0.5*\LW);
\draw [yellow, dashed, very thick, ->] (0.5*\LW,-3*\LW) -- (0.5*\LW,2*\LW);
\node[rotate=180] at (0.5,1) {\includegraphics[width=2cm]{red_car}};
\node[rotate=90] at (0.5*\LW, -2*\LW) {\includegraphics[width=2cm]{yellow_car}};
\end{tikzpicture}
\end{center}
\caption{``Right of way is given, not taken'': The red car's route has a red light and it is stuck in the intersection. Even though the yellow car's route has a green light, since it has enough distance, it should brake so as to avoid an accident.} \label{fig:red-light-violation}
\end{figure}

\subsubsection{Traffic Lights} \label{sec:traffic-lights}

We next discuss intersections with traffic lights. One might think
that the simple rule for traffic lights scenarios is ``if one car's
route has the green light and the other car's route has a red light,
then the blame is on the one whose route has the red light''. However,
this is not the correct rule. Consider for example the scenario
depicted in \figref{fig:red-light-violation}. Even if the yellow car's
route has a green light, we do not expect it to ignore the red car
that is already in the intersection. The correct rule is that the
route that has a green light have a priority over routes that have a
red light. Therefore, we obtain a clear reduction from traffic lights
to the route priority concept we have described previously. The above
discussion is a formalism of the common sense rule of \textbf{right of
  way is given, not taken}.

\subsubsection{Unstructured Road} \label{sec:unstructured}
\begin{figure}
\begin{center}
\begin{subfigure}[t]{0.4\textwidth}
\includegraphics[width=0.8\textwidth, height=2cm]{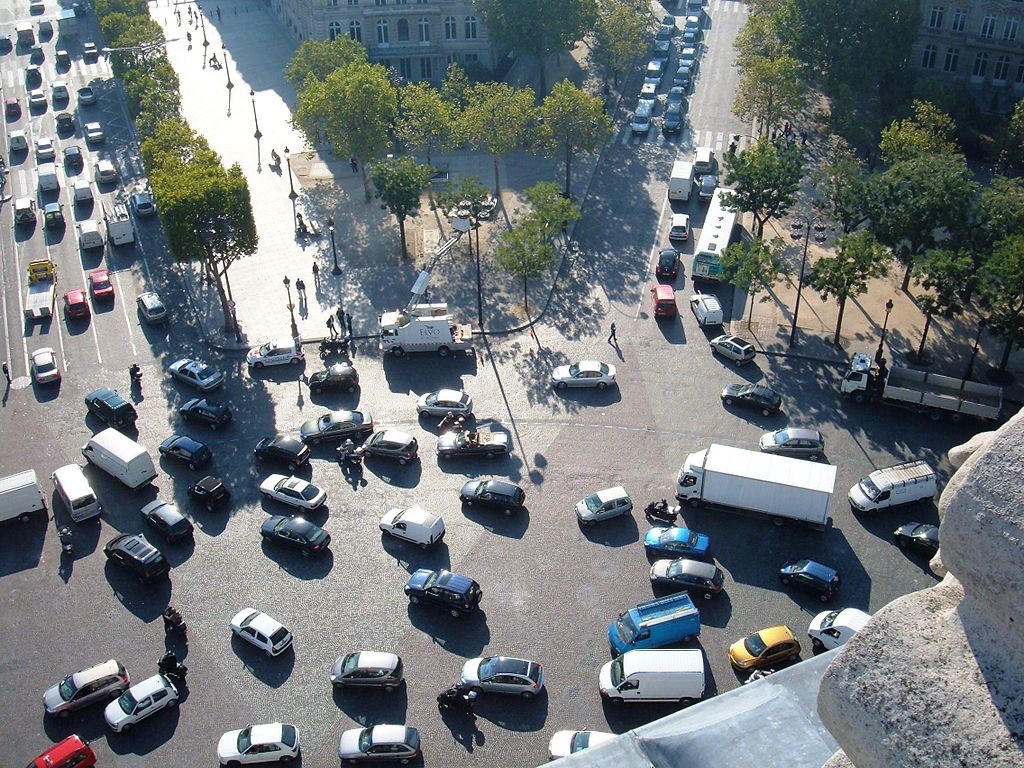}
\caption{}\label{fig:arc_de_triomphe}
\end{subfigure}
~
\begin{subfigure}[t]{0.4\textwidth}
\includegraphics[width=0.8\textwidth, height=2cm]{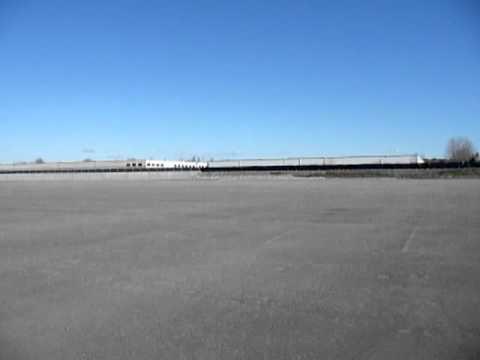}
\caption{}\label{fig:parking_lot}
\end{subfigure}
\end{center}
\caption{Unstructured roads. (a) a wide roundabout around arc-de-triomphe. (b) a parking lot.}\label{fig:unstructured}
\end{figure}

We next turn to consideration of unstructured roads, for example, see
\figref{fig:unstructured}.  Consider first the scenario given in
\figref{fig:arc_de_triomphe}. Here, while the partition of the road
area to lanes is not well defined, the partition of the road to
multiple routes (with a clear geometry for every route) is well
defined. Since our definitions of proper response only depend on the
route geometry, they apply as is to such scenarios.

%
% \begin{figure}
% \begin{center}
% \begin{tabular}{c@{\hskip 2cm}c}
% \begin{tikzpicture}[scale=0.4, every node/.style={transform shape}]

% \fill[gray!30] (0,-3) rectangle (7,5);

% \foreach \i [evaluate=\i] in {1,...,40}
%  \draw[white,line width=1.5+0.025*\i cm] (\i:5) arc (\i:\i+1:5);

% \node[rotate=90] at (0:5) {\includegraphics[width=2cm]{yellow_car}};
% \draw[densely dashed,thick] (-30:5) arc (-30:40:5);

% \end{tikzpicture} &
% \begin{tikzpicture}[scale=0.4, every node/.style={transform shape}]

% \fill[gray!30] (-3.5,-3) rectangle (3.5,5);
% \fill[white] (0,0) -- (-3,2) -- (-2,3) -- cycle;
% \node[rotate=135] at (0,0) {\includegraphics[width=2cm]{yellow_car}};
% \draw[densely dashed,thick] (2,-2) -- (-2.5,2.5);

% \end{tikzpicture}
% \end{tabular}
% \end{center}
% \caption{The white beam shows the possible future positions of a
%   vehicle if it will drive normally for $\rho$ seconds and then brake
%   (both according to \defref{def:brake_unstructured}). The dashed
%   curve shows the trajectory of the car if it will maintain its
%   current constant speed and constant yaw rate.} \label{fig:best_fit}
% \end{figure}

Next, consider the scenario where there is no route geometry at all
(e.g. the parking lot given in \figref{fig:parking_lot}). 
Unlike the structured case, in which we separated the lateral and
longitudinal directions, here we need two dimensional trajectories. 
\begin{definition}[Trajectories] \label{def:general_trajectories}
  Consider a vehicle $c$ riding on some road. A future trajectory of
  $c$ is a function $\tau :\reals_+ \to \reals^2$, where $\tau(t)$ is
  the position of $c$ in $t$ seconds from the current time. The
  tangent vector to the trajectory at $t$, denoted $\tau'(t)$, is the
  Jacobian of $\tau$ at $t$. We denote
  $t_s(\tau) = \sup \{t : \forall t_1 \in [0,t),~ \|\tau'(t_1)\|>0\}$,
  namely, $t_s$ is the first time in which the vehicle will arrive to
  a full stop, where if no such $t$ exists we set $t_s(\tau)=\infty$.
\end{definition}

Dangerous situations will depend on the possibility of a collision
between two trajectories. This is formalized below. 
\begin{definition}[Trajectory Collision] \label{def:trajectory_collision}
  Let $\tau_1,\tau_2$ be two future trajectories of $c_1,c_2$, with
  corresponding stopping times $t_1 = t_s(\tau_1),t_2 =
  t_s(\tau_2)$. Given parameters $\epsilon,\theta$, we say that
  $\tau_1$ and $\tau_2$ do not collide, and denote it by
  $\tau_1 \cap \tau_2 = \emptyset$, if either of the following holds:
\begin{enumerate}
\item For every $t \in [0,\max(t_1,t_2)]$ we have that
  $\|\tau_1(t)-\tau_2(t)\| > \epsilon$.
\item For every $t \in [0,t_1]$ we have that
  $\|\tau_1(t)-\tau_2(t)\| > \epsilon$ and the absolute value of the angle between the vectors 
  $(\tau_2(t_1) - \tau_1(t_1))$ and $\tau'_2(t_1)$ is at most
  $\theta$.
\end{enumerate}
Given a set of trajectories for $c_1$, denoted $\mathcal{T}_1$, and a
set of trajectories for $c_2$, denoted $\mathcal{T}_2$, we say that
$\mathcal{T}_1 \cap \mathcal{T}_2 = \emptyset$ if for every
$(\tau_1,\tau_2) \in \mathcal{T}_1 \times \mathcal{T}_2$ we have that
$\tau_1 \cap \tau_2 = \emptyset$. 
\end{definition}
The first item states that both vehicles will be away from
each other until they are both at a full stop. The second item states
that the vehicles will be away from each other until the
first one is at a full stop, and at that time, the velocity vector of
the second one points away from the first vehicle. 

Note that the collision operator we have defined is not commutative
--- think about two cars currently driving on a very large circle at
the same direction, where $c_1$ is closely behind $c_2$, and consider
$\tau_1$ to be the trajectory in which $c_1$ brakes strongly and $\tau_2$
is the trajectory in which $c_2$ continues at the same speed
forever. Then, $\tau_1 \cap \tau_2 = \emptyset$ while
$\tau_2 \cap \tau_1 \neq \emptyset$.

We continue with a generic approach, that relies on abstract notions 
of ``braking'' and ``continue forward'' behaviors. In the structured case the meanings of
these behaviors were defined based on allowed intervals
for lateral and longitudinal accelerations. We will later specify the
meanings of these behaviors in the unstructured case, but for now we
proceed with the definitions while relying on the abstract notions. 
\begin{definition}[Possible Trajectories due to Braking and Normal
  Driving] \label{def:gen_possible_trajectories} Consider a vehicle
  $c$ riding on some road.  Given a set of constraints, $C$, on the
  behavior of the car, we denote by $\mathcal{T}(C,c)$ the
  set~\footnote{A superset is also allowed. We will use supersets when
    it makes the calculation of the collision operator more easy.} of
  possible future trajectories of $c$ if it will comply with the
  constraints given in $C$. Of particular interest are
  $\mathcal{T}(C_b,c), \mathcal{T}(C_f,c)$ representing the future
  trajectories due to constraints on braking behavior and constraints
  on continue forward behavior.
\end{definition}

We can now refine the notions of safe distance, dangerous situation,
Danger Threshold time, proper response, and responsibility.
\begin{definition}[Safe Distance, Dangerous Situation, Danger Threshold time,
  and Proper Response, in Unstructured Roads] \label{def:safe_distance_generic}
  The distance between $c_0,c_1$ driving on an unstructured road is
  safe if either of the following holds:
  \begin{enumerate}
    \item For some $i \in \{0,1\}$ we have $\mathcal{T}(C_b,c_i) \cap \mathcal{T}(C_f,c_{1-i}) = \emptyset$
      and  $\mathcal{T}(C_b,c_{1-i}) \cap \mathcal{T}(C_f,c_i) \neq \emptyset$
    \item $\mathcal{T}(C_b,c_0) \cap \mathcal{T}(C_b,c_1) = \emptyset$
  \end{enumerate}
  We say that time $t$ is dangerous w.r.t. $c_0,c_1$ if the distance
  between them is non safe. The corresponding Danger Threshold time is the
  earliest dangerous time $t_b$ s.t. during the entire time
  interval $[t_b,t]$ the situation was dangerous. The proper response
  of car $c_j$ at a dangerous time $t$ with corresponding Danger Threshold time
  $t_b$ is as follows:
  \begin{itemize}
  \item If both cars were already at a full stop, then $c_j$ can drive
    away from $c_{1-j}$ (meaning that the absolute value of the angle
    between its velocity vector and the vector of the difference
    between $c_j$ and $c_{1-j}$ should be at most $\theta$, where
    $\theta$ is as in \defref{def:trajectory_collision})
  \item Else, if $t_b$ was safe due to item (1) above and $j = 1-i$,
    then $c_j$ should comply with the constraints of ``continue forward'' behavior,
    as in $C_f$, as long as $c_{1-j}$ is not at a full stop, and after
    that it should behave as in the case that both cars are at a full stop.
  \item Otherwise, the proper response is to brake, namely, to comply
    with the constraints $C_b$.
  \end{itemize}
\end{definition}

Finally, to make this generic approach concrete, we need to specify
the braking constraints, $C_b$, and the continue forward behavior
constraints, $C_f$.  Recall that there are two main things that a
clear structure gives us. Firstly, vehicles can predict what other
vehicles will do (other vehicles are supposed to drive on their route,
and change lateral/longitudinal speed at a bounded rate). Secondly,
when a vehicle is at a dangerous time, the proper response is defined
w.r.t. the geometry of the route (``brake'' laterally and
longitudinally). It is very important that the proper response is not
defined w.r.t. the other vehicle from which we are at a non-safe
distance, because had this been the case, we could have conflicts when
a vehicle were at a non-safe distance w.r.t. more than a single other
vehicle. Therefore, when designing the definitions for unstructured
scenarios, we must make sure that the aforementioned two properties
will still hold.

The approach we take relies on a basic kinematic model of
vehicles. For the speed, we take the same approach as we have taken
for longitudinal velocity (bounding the range of allowed
accelerations). For lateral movements, observe that when a vehicle
maintains a constant angle of the steering wheel and a constant speed,
it will move (approximately) on a circle. In other words, the heading
angle of the car will change at a constant rate, which is called the
yaw rate of the car. We denote the speed of the car by $v(t)$, the
heading angle by $h(t)$, and the yaw rate by $h'(t)$ (as it is the
derivative of the heading angle).  When $h'(t)$ and $v(t)$ are
constants, the car moves on a circle whose ``radius'' is $v(t)/h'(t)$
(where the sign of the ``radius'' determines clockwise or counter
clockwise and the ``radius'' is $\infty$ if the car moves on a line,
i.e. $h'(t)=0$). We therefore denote $r(t)=v(t)/h'(t)$. We will make
two constraints on normal driving. The first is that the inverse of
the radius changes at a bounded rate The second is that $h'(t)$ is
bounded as well.  The expected braking behavior would be to change
$h'(t)$ and $1/r(t)$ in a bounded manner during the response time, and
from there on continue to drive on a circle (or at least be at a
distance of at most $\epsilon/2$ from the circle).  This behavior
forms the analogue of accelerating by at most
$a^{\mathrm{lat}}_{\max,\mathrm{accel}}$ during the response time and
then decelerating until reaching a lateral velocity of zero.

To make efficient calculations of the safe distance, we construct the
superset $\mathcal{T}(C_b,c)$ as follows. W.l.o.g., lets call the
Danger Threshold time to be $t=0$, and assume that at the Danger Threshold time the heading
of $c$ is zero. By the constraint of $|h'(t)| \le h'_{\max}$ we know
that $|h(\rho)| \le \rho\,h'_{\max}$. In addition, the inverse of the
radius at time $\rho$ must satisfy
\begin{equation} \label{eqn:inv_radius}
\frac{1}{r(0)} - \rho\,r^{-1'}_{\max} \le \frac{1}{r(\rho)} \le
\frac{1}{r(0)} + \rho\, r^{-1'}_{\max} ~,
\end{equation}
where $r(0) = v(0)/h'(0)$. All in all, we define  the
superset $\mathcal{T}(C_b,c)$ to be all trajectories such that the
initial heading (at time $0$) is in the range $[-\rho\,h'_{\max},
\rho\,h'_{\max}]$, the trajectory is always on a circle whose inverse
radius is according to \eqref{eqn:inv_radius}, and the longitudinal
velocity on the circle is as in the structured case. For the continue
forward trajectories, we perform the same except that the allowed
longitudinal acceleration even after the response time is in 
$[-a_{\max,\mathrm{brake}},a_{\max,\mathrm{accel}}]$. An illustration
of the extreme radiuses is given in \figref{fig:manifolds}. 

\begin{figure}
\begin{center}
\begin{tabular}{c@{\hskip 5cm}c}
\begin{tikzpicture}[scale=0.4]
\fill (5,0) circle (0.02);
\draw[thick] (-30:5) arc (-30:35:5);
\draw[dashed,red,thick] (5,0) arc (-10:23:6);
\draw[dashed,blue,thick] (5,0) arc (10:50:4);
\end{tikzpicture} &
\begin{tikzpicture}[scale=0.4]
\fill (5,0) circle (0.02);
\draw[thick] (5,-2) -- (5,7);
\draw[dashed,red,thick] (5,0) arc (170:150:20);
\draw[dashed,blue,thick] (5,0) arc (10:30:20);
\end{tikzpicture} 
\end{tabular}
\end{center}
\caption{Illustration of the lateral behavior in unstructured
  scenes. The black line is the current trajectory. The blue and red
  lines are the extreme arcs.} \label{fig:manifolds}
\end{figure}

Finally, observe that these proper responses satisfy the
aforementioned two properties of the proper response for structured
scenarios: it is possible to bound the future positions of other
vehicles in case an emergency will occur, and the same proper response
can be applied even if we are at a dangerous situation w.r.t. more
than a single other vehicle.

\subsection{Pedestrians} \label{sec:pedestrians}

The proper response rules for avoiding collisions involving
pedestrians (or other road users) follow the same ideas described in
previous subsections, except that we need to adjust the parameters in
the definitions of safe distance and proper response, as well as to
specify pedestrians' routes (possibly unstructured routes) and their
priority w.r.t. vehicles' routes. In some cases, a pedestrian's route
is well defined (e.g. a zebra crossing or a sidewalk on a fast road).
In other cases, like a typical residential street, we follow the
approach we have taken for unstructured roads except that unlike
vehicles that typically ride on circles, for pedestrians we constrain the change of heading, $|h'(t)|$,
and assume that at emergency, after the response time, the pedestrian
will continue at a straight line. If the
pedestrian is standing, we assign it to all possible lines
originating from his current position. The priority is set according
to the type of the road and possibly based on traffic lights. For
example, in a typical residential street, a pedestrian has the
priority over the vehicles, and it follows that vehicles must yield and
be cautious with respect to pedestrians. In contrast, there are roads
with a sidewalk where the common sense behavior is that vehicles
should not be worried that a pedestrian on the sidewalk will suddenly
start running into the road. There, cars have the priority.  Another
example is a zebra crossing with a traffic light, where the priority
is set dynamically according to the light.  Of course, priority is
given not taken, hence even if pedestrians do not have priority, if
they entered the road at a safe distance, cars must brake and let them
pass.

Let us illustrate the idea by some examples. The first example is a
pedestrian that stands on a residential road. The pedestrian is
assigned to all routes obtained by rays originating from its current
position. Her safe longitudinal distance w.r.t. each of these virtual
routes is quite short. For example, setting a delay of $500$ ms, and
maximal acceleration\footnote{As mentioned in \cite{lin2016discrete},
  the estimated acceleration of Usain Bolt is $3.09~m/s^2$.} and
braking of $2~ m/s^2$, yields that her part of the safe longitudinal
distance is $50cm$. It follows that a vehicle must be in a kinematic
state such that if it will apply a proper response (acceleration for
$\rho$ seconds and then braking) it will remain outside of a ball of
radius $50cm$ around the pedestrian. 

A second example is a pedestrian standing on the sidewalk right in
front of a zebra crossing, the pedestrian has a red light, and a
vehicle approaches the zebra crossing at a green light. Here, the
vehicle route has the priority, hence the vehicle can assume that the
pedestrian will stay on the sidewalk. If the pedestrian enters the
road while the vehicle is at a safe longitudinal distance (w.r.t. the
vehicle's route), then the vehicle must brake (``right of way is
given, not taken''). However, if the pedestrian enters the road while
the vehicle is not at a safe longitudinal distance, and as a result
the vehicle hits the pedestrian, then the vehicle is not
responsible. It follows that in this situation, the vehicle can drive
at a normal speed, without worrying about the pedestrian.

The third example is a pedestrian that runs on a residential road at
10 km per hour (which is $\approx 2.7 m/s$). The possible future
trajectories of the pedestrian form an isosceles triangle shape. Using
the same parmeters as in the first example, the height of this
triangle is roughly $15 m$. It follows that cars should not enter the
pedestrian's route at a distance smaller than $15 m$. But, if the car
entered the pedestrian's route at a distance larger than $15 m$, and
the pedestrian didn't stop and crashed into the car, then the
responsibility is of the pedestrian.

\subsection{Cautiousness with respect to Occlusion}

A very common human response, when blamed for an accident, falls into
the ``but I couldn't see him'' category. It is, many times,
true. Human sensing capabilities are limited, sometimes because of an
unaware decision to focus on a different part of the road, sometimes
because of carelessness, and sometimes because of physical limitations
- it is impossible to see a little kid hidden behind a parked car.
While advanced automatic sensing systems are never careless, and have
a $360^\circ$ view of the road, they might still suffer from limited
sensing due to physical occlusions or range of sensor detection.  Few
examples are:
\begin{example}
\label{occlusion:example_building} A junction in which a building or a
fence occludes traffic approaching the junction from a different route
(see illustration in \figref{fig:building}).
\end{example}
\begin{example}
 \label{occlusion:example_lane_change} When changing lanes on a highway, there is a limited view range
  for detecting cars arriving from behind (see illustration in \figref{fig:bus}).
\end{example}
\begin{example} \label{occlusion:example_kid} A kid that might be occluded behind a parked car.
\end{example}
\begin{example} \label{occlusion:example_swerve} An obstacle ahead which is occluded by the car in
  front of us (see illustration in \figref{fig:swerve}).
\end{example}

The examples above show that we might be in a dangerous situation
without knowing it (due to the occlusion), and as a result we will not
respond properly. When a human driver claims ``but I couldn't see
him'', a counter argument is often ``well, you should've been more
careful''. Analogously, we should formalize what does it mean to be
careful.

The extreme form of ``being careful'' is to assume the worst possible
scenario. That is, we should assume that every occluded position in
the world is occupied by a vehicle whose velocity is the worst
possible. Unfortunately, without additional assumptions over the
occluded object, this results in an over defensive, non natural
driving. To see this, consider the simplest case of an occlusion by a
static object, as given in \exampleref{occlusion:example_building},
and depicted in \figref{fig:building}. In any position the yellow car 
is, there can be a very far red car which is occluded by the
building. When the yellow car enters the corridor of the red car,
there exists a high enough speed for the red car, for which this will
be considered a non-safe cut-in. This results in inability of the
yellow car to merge into the main road. Of course, it is unreasonable
to limit such manoeuvres. 

This motivates us to formalize several additional ``reasonable
assumptions'' that a driver may make with regard to the behavior of
other road users. Throughout this section, we implicitly defined
``unreasonable
situations'' and allowed vehicles to make the ``reasonable
assumption'' that such cases will not happen. For example, our basic
definition of self longitudinal distance assumes that the other car
will not brake stronger than $a_{\max,\mathrm{brake}}$. By making this
assumption, we implicitly say that a situation in which a vehicle
brakes stronger than $a_{\max,\mathrm{brake}}$ is an unreasonable
situation, and we allow ourselves to not fear from such a case. Of
course, we also required vehicles not to cause ``unreasonable
situations'', for example, we should never brake stronger than
$a_{\max,\mathrm{brake}}$. To tackle occlusions, we follow the exact
same rational, by defining situations that are unreasonable, and allowing the vehicle
  to assume they will not happen. So, the vehicle should plan for the
  worst case, except these unreasonable situations. As we will
  see, we also require the vehicle to take into account that other
  vehicles may not fully observe it, hence it should be careful not to
  cause unreasonable situations. 

  To formalize the above, as a preliminary statement, it is clear that
  once an object becomes observed, we should act according to the
  regular proper response rules. We denote this point in time by the
  \emph{Exposure Time}.

\begin{definition}[Exposure Time] \label{def:exposure_time}
The \emph{Exposure Time} of an object is the first time in which we
can see it (meaning that nothing blocks visibility along the line from the
object to the ego vehicle).
\end{definition}

We continue to define two types of unreasonable situations.  The first
deals with unreasonable situations due to vehicles that drive
unreasonably fast. Consider again the occlusion by a static object, as
given in \exampleref{occlusion:example_building}, and depicted in
\figref{fig:building}. As discussed above, if the yellow car merges
into the road carelessly, it may be that the exposure time comes after
the Danger Threshold time - namely, a dangerous cut-in was performed
unknowingly. However, had there been a limit $v_{\mathrm{limit}}$ on
the speed of the occluded car (the red car), the yellow car would have
been able to approach the merge point slow enough so as to make sure
that it will not enter a dangerous situation w.r.t. any car whose
speed is at most $v_{\mathrm{limit}}$. This leads to the following
definition.

\begin{definition}[Unreasonable Situation due to Unreasonable
  Speed] \label{def:occlusion_RSS} Consider two vehicles, $c_0,c_1$
  driving on routes $r_1,r_2$, where the two vehicles are occluded
  from each other until the exposure time $t$. Assume that $t$ is a
  dangerous time and let $t_b$ be its corresponding blame
  time. For $i \in \{1,2\}$, let
  $v^{\mathrm{lat}}_i, v^{\mathrm{long}}_i$ be the average
  lateral/longitudinal speed from $t_b$ until $t$ of vehicle $i$.  We
  say that this situation is unreasonable w.r.t. parameters
  $v^{\mathrm{lat,limit,low}}_i$, $v^{\mathrm{lat,limit,high}}$, 
  $v^{\mathrm{long,limit,low}}_i$, $v^{\mathrm{long,limit,high}}$, if
  $v^{\mathrm{lat}}_i$ is not in
  $[v^{\mathrm{lat,limit,low}}_i, v^{\mathrm{lat,limit,high}}]$ or
  $ v^{\mathrm{long}}_i$ is not in
  $[v^{\mathrm{long,limit,low}}_i, v^{\mathrm{long,limit,high}}]$.
  The parameters are associated with the position of each $c_i$ on the
  map, priority rules, and possibly on other scene structure and
  conditions.
\end{definition}

\begin{figure}[ht]
\begin{center}
\begin{tabular}{c@{\hskip 2cm}c}
Danger Threshold time & Exposure Time \\ \vspace{2cm}
\begin{tikzpicture}[scale=0.4, every node/.style={transform shape}]

\fill[gray!10] (-5,-5) rectangle (5,2.3);
\draw[densely dashed,thick] (-5,0) -- (0,0) -- (0,-5);
\draw[densely dashed,thick] (2,-5) -- (2,0) -- (5,0);
\draw[densely dashed,thick] (-5,2) -- (5,2);

\draw [fill=red] (-4,-4) rectangle node[color=white,font=\relsize{3}] {Building} (-0.3,-0.3);

\node at (-4,1) {\includegraphics[width=2cm]{red_car}};
\node[rotate=90] at (1,-3) {\includegraphics[width=2cm]{yellow_car}};

\draw[blue] (1,-2.5) -- (-1.5,2);

\end{tikzpicture} &
\begin{tikzpicture}[scale=0.4, every node/.style={transform shape}]

\fill[gray!10] (-5,-5) rectangle (5,2.3);
\draw[densely dashed,thick] (-5,0) -- (0,0) -- (0,-5);
\draw[densely dashed,thick] (2,-5) -- (2,0) -- (5,0);
\draw[densely dashed,thick] (-5,2) -- (5,2);

\draw [fill=red] (-4,-4) rectangle node[color=white,font=\relsize{3}] {Building} (-0.3,-0.3);

\node at (-2.9,1) {\includegraphics[width=2cm]{red_car}};
\node[rotate=90] at (1,-2) {\includegraphics[width=2cm]{yellow_car}};

\draw[blue] (1,-1.5) -- (-2.8,2);

\end{tikzpicture}
\end{tabular}
\end{center}

\caption{Illustration of the blame and exposure times. During the
  Danger Threshold time, the red car is unobserved by the yellow car, as depicted
  by the blue line bounding its view range. At the
  exposure time, the corner of the red car is firstly observed.}\label{fig:building}
\end{figure}

For example, in \figref{fig:building}, both the red and yellow car may
assume that the other car will approach the intersection slowly
enough, where the upper bound on the velocity depends on the priority
between the two routes. This will allow the yellow car to merge into the main road safely, in the
same manner a human driver does. 

Few of the factors which should be included in determining the
constraints on the speed are given below.  First, the definition takes
into account priority rules - the upper bound on the maximal speed is
higher for the route with priority.  Second, the definition also
considers different scene structure rules. Of particular interest is
the case given in \exampleref{occlusion:example_lane_change}, with its
Danger Threshold time depicted in \figref{fig:bus}. The red car, denoted $c_2$,
is occluded from the yellow car, denoted $c_0$. The lateral distance
was unsafe for some time already, due to the lateral velocity of
$c_2$. The longitudinal distance becomes unsafe in the blame
time. $c_0$ does not have priority due to the route
structure. However, it is clear that the ``longitudinally safe cut
in'' which was performed by $c_2$ during its occlusion, would not be
considered ``safe'' by common sense - as $c_2$ did not make sure to be
seen. It is therefore reasonable to assume a bound on the lateral
velocity of an occluded object. Using this assumption, $c_0$ does not
have to slow down very much when passing near the parked bus, as it
can assume that no cut-in has been performed.

The case given in \exampleref{occlusion:example_kid}, where an
occluded pedestrian may jump into the road, is dealt with in the same
manner - the car can assume that the occluded pedestrian will not
perform a cut-in.

\begin{figure}[ht]
\begin{center}
\begin{tikzpicture}[scale=0.4, every node/.style={transform shape}]
\fill[gray!10] (-5,-2) rectangle (5,2);
\draw[densely dashed,thick] (-5,1.5) -- (5,1.5);
\draw[densely dashed,thick] (-5,0) -- (5,0);
\draw[densely dashed,thick] (-5,-1.5) -- (5,-1.5);
\draw [fill=red] (-2,-1.5+0.1) rectangle node[color=white,font=\relsize{3}]
{Parked Bus} (2,-0.1);
\node at (-4,0.75) {\includegraphics[width=2cm]{yellow_car}};
\node[rotate=25] at (3.5,-1.2) {\includegraphics[width=2cm]{red_car}};
\draw[blue] (-3.5,0.75) -- (5,-0.55);
\end{tikzpicture}
\end{center}

\caption{Example for Danger Threshold time, where different velocity constraint
  assumptions should be made by the different cars. }\label{fig:bus}
\end{figure}

Though explicitly dealing with static occluders, the reasonable speed
assumptions are also applicable to moving occluders. For example, had
the bus in \figref{fig:bus} been moving slowly, $c_0$ will still be
able to assume a low lateral velocity for $c_1$.  

The second type of unreasonable situations is a one that stems from
improper behavior of other agents. Consider
\exampleref{occlusion:example_swerve}, as depicted in
\figref{fig:swerve}. The yellow car $c_0$ follows the red car $c_1$,
at the same speed, and at a safe distance. The blue car $c_2$ is at a
complete stop. At the last moment, $c_1$ swerves to the side and
avoids hitting $c_2$. If $c_0$ can also swerve to the side and avoid
hitting $c_2$ it must do so (as follows from
\defref{def:proper_response_evasive}). However, it is possible that
$c_0$ cannot avoid an accident (because there may be a car on its
side), and will end up crashing into $c_2$ from behind. Seemingly,
this accident is $c_0$'s responsibility.

\begin{figure}[ht]
\begin{center}
\begin{tabular}{c@{\hskip 2cm}c}
Danger Threshold time & Exposure Time \\ \vspace{2cm}
\begin{tikzpicture}[scale=0.4, every node/.style={transform shape}]

\fill[gray!10] (-7,-2) rectangle (7,2);
\draw[densely dashed,thick] (-7,1.5) -- (7,1.5);
\draw[densely dashed,thick] (-7,0) -- (7,0);
\draw[densely dashed,thick] (-7,-1.5) -- (7,-1.5);
\node at (-6,-0.75) {\includegraphics[width=2cm]{yellow_car}};
\node at (1,-0.75) {\includegraphics[width=2cm]{red_car}};
\node at (6,-0.75) {\includegraphics[width=2cm]{blue_car}};
\draw[<->](-5,0.2)--(0, 0.2);
\node[above] at (-2.5, 0.2) {Safe};
\draw[<->](2, 0.2)--(5, 0.2);
\node[above] at (3.5, 0.2) {Unsafe};
\draw[<->](-5,0.7)--(5,0.7);
\node[above] at (0,0.7) {Unsafe};

\end{tikzpicture} &
\begin{tikzpicture}[scale=0.4, every node/.style={transform shape}]

\fill[gray!10] (-7,-2) rectangle (7,2);
\draw[densely dashed,thick] (-7,1.5) -- (7,1.5);
\draw[densely dashed,thick] (-7,0) -- (7,0);
\draw[densely dashed,thick] (-7,-1.5) -- (7,-1.5);
\node at (-4,-0.75) {\includegraphics[width=2cm]{yellow_car}};
\node[rotate=25] at (3,-0.2) {\includegraphics[width=2cm]{red_car}};
\node at (6,-0.75) {\includegraphics[width=2cm]{blue_car}};
\draw[<->](-3,0.7)--(5,0.7);
\node[above] at (1,0.7) {Unavoidable Accident};

\end{tikzpicture}
\end{tabular}
\end{center}

\caption{Danger Threshold time and exposure time. At the Danger Threshold time, the distance between the yellow
  car and the red car is safe due to the high velocity of the red car.
  However, the distance to the blue car is not so, due to it being at
  full stop. No proper response is performed by the yellow car until
  the exposure time, which can result in an accident.}\label{fig:swerve}
\end{figure}

The first attempt one may take in order to deal with this situation is
to indeed require $c_0$ to always assume the worst case situation ---
there might be a stopped car, $c_2$, in front of $c_1$.  Unfortunately,
one cannot drive normally on a highway while assuming this. To
illustrate the defensiveness of the driving that will result of taking
this assumption, let us consider the braking distance of $c_0$, which
is roughly the same as the safe distance w.r.t. a stopped car. This is
the distance it should keep from $c_1$, in order to be safe w.r.t. a
potential car stopped right in front of it. Let us assume that $c_0$
can brake at $a=10m/s^2$, and that it is driving at $v=30m/s$, very
common in a highway. The braking distance is then
$d=\frac{v^2}{2a}=\frac{30^2}{20}=45$.  This means that keeping a
distance of less than $\approx 45m$ from $c_1$ (corresponding to $1.5$
seconds at $v$), is unsafe. Clearly, common highway driving does not
refrain from keeping even much less of a distance from the front car.

Careful examination of the given example reveals the following
observation: had $c_1$ kept a safe (longitudinal) distance from $c_2$,
there would be no problem; $c_1$ would always be able to brake before
hitting $c_2$. $c_0$, properly responding to $c_1$, is able to stop
before hitting $c_1$ - and therefore before hitting $c_2$. This
motivates the definition of our second type of unreasonable
scenarios. 
\begin{definition}[Unreasonable Situation due to Improper
  Behaviour] \label{def:dangerous_improper} A situation at time $t$
  between $c,c'$ is \emph{Unreasonable Situation due to Improper
  Behaviour} if it is impossible to reach it by always
  adhering to proper response rules, that is, there exists no sequence
  of states for the time range $(-\infty,t)$ which is:
\begin{itemize}
\item Physically possible (in the sense it abides to reasonable
velocity and acceleration bounds),
\item Not containing any improper response by either $c$ or $c'$.
\end{itemize}
\end{definition}

An example of an unreasonable situation due to improper behaviour is
the situation between the red and blue cars depicted on the left hand
side of \figref{fig:swerve}. In this situation, the red and blue cars
are at a dangerous situation, but it is impossible to reach this
situation if they would both adhere to proper response.  By allowing
the yellow car to assume that unreasonable situation due to improper
behavior will not happen, when it follows the red car and might fear
there is a car $c_2$ at a non-safe distance in front of the red car,
it can assume that its velocity is reasonably large - otherwise, it
must imply that the the red car did not perform a proper response
w.r.t. $c_2$. Of course, this is a reasonable assumption, but it may
not hold, and in this case, at the exposure time, the yellow car must
apply proper response (including evasive manoeuvres if possible). 

The following definition summarizes the discussion.
\begin{definition}[Proper Response in the presence of
  Occlusions] \label{def:occlusion_RSS_dangerous} A vehicle must
  perform proper response with respect to all road agents it
  observes. It also must perform proper response with respect to
  occluded ones, by assuming that at any occluded position there might
  be an occluded object with any possible speed, unless this yields an
  unreasonable situation as defined in \defref{def:occlusion_RSS} and
  \defref{def:dangerous_improper}. Finally, a vehicle must not cause
  an unreasonable situation (as defined in
  \defref{def:occlusion_RSS}). 
\end{definition}

\subsection{Responsibility}

We have defined the notion of proper response to dangerous
situations. Before an accident occurs, the situation must be
dangerous. We say that an agent is \emph{responsible} for the accident
if it did not comply with the proper response contraints. 

It is not hard to see that if there are no occlusions, if two agents
collide then it must be the case that at least one of them did not
comply with the proper response contraints, and this agent is
responsible for the accident. However, when there are occlusions,
there may be an accident with no clear responsibility. Consider for
example \figref{fig:swerve}. On one hand, the yellow car is allowed to
assume that if there is a vehicle $c_2$ in front of the red car, then
the red car is applying proper response on it. So, the yellow car is
not responsible for the accident. It is true that the red car did not
respond properly on the blue car, but the red car was not involved in
the accident at all so it is not clear if we can blame it for an
accident between the yellow car and the blue car.  In the next section
we prove that if all the agents adhere to proper response rules, then
no accidents would happen.

\subsection{Utopia is Possible}

In \lemref{lem:basic_utopia} we have shown that if all cars on the
road comply with the basic proper response rules, then there will be
no collisions.  However, in the previous subsection we have shown that
agents can make some reasonable assumptions on what happens at
occluded areas, and as a result, there may be an accident between two
agents where none of them is directly responsible. We now prove that
even when considering occlusions, if all agents adhere to the proper
response rules as given in \defref{def:occlusion_RSS_dangerous}, then
no accidents will happen.

The proof technique relies on the lemma below, which shows that if
\emph{all} agents adhere to the proper
response rules as given in \defref{def:occlusion_RSS_dangerous}, they
in fact also adhere to the basic proper response rules. 

\begin{lemma}
If all road agents comply with the proper response rules given in
\defref{def:occlusion_RSS_dangerous}, they also comply with the \emph{basic} proper
response rules. In other words, each agent will behave as if it
observes all other agents. 
\end{lemma}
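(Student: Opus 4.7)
The plan is to argue that if every agent complies with Definition~\ref{def:occlusion_RSS_dangerous}, then the hypotheses under which an occluded object's behavior is \emph{assumed} to lie (the reasonable speed bounds of Definition~\ref{def:occlusion_RSS} and the reasonability-by-proper-behavior condition of Definition~\ref{def:dangerous_improper}) are in fact \emph{satisfied} by every real agent on the road. Consequently, an agent's worst-case planning over the occluded region is no weaker than the planning it would have done with full observation, so the basic proper response constraints of Definition~\ref{def:basic_proper_response} are automatically met.

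First, I would fix an agent $c$ and partition the other agents into those $c$ currently observes and those occluded from $c$. For observed agents, Definition~\ref{def:occlusion_RSS_dangerous} reduces verbatim to the basic proper response, so nothing is to prove. For an occluded agent $c'$, the occlusion-aware rule forces $c$ to act as if, at every occluded cell along every route, there could be a vehicle whose longitudinal/lateral speed lies in the reasonable intervals of Definition~\ref{def:occlusion_RSS}, excluding only states not reachable via proper responses per Definition~\ref{def:dangerous_improper}. The key step is then to show that $c'$'s true state is always in this "reasonable" set. For the speed bounds this follows because the clause "a vehicle must not cause an unreasonable situation" in Definition~\ref{def:occlusion_RSS_dangerous} directly forbids $c'$ from driving at a speed that would make the first-exposure situation unreasonable. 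For the improper-behavior clause, since by assumption $c'$ has always complied with proper response, no history leading to its current state contains an improper response, so its state is never "unreasonable due to improper behavior."

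Having established that $c'$'s actual state lies in the set of states over which $c$ takes a worst case, I would conclude by a monotonicity argument: the safe-distance and proper-response constraints of Sections~\ref{sec:longit-safe}--\ref{sec:lane-priority} are determined by a pointwise worst case over the other agent's state, so taking a worst case over a superset yields constraints at least as tight. Therefore the acceleration/deceleration profile that $c$ actually executes under Definition~\ref{def:occlusion_RSS_dangerous} satisfies the basic proper response of Definition~\ref{def:basic_proper_response} with respect to every $c'$, observed or not. Since $c$ was arbitrary, every agent satisfies the basic rules.

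The main obstacle is handling Definition~\ref{def:dangerous_improper}, which is semi-recursive: it defines "unreasonable" in terms of histories that themselves contain proper responses. I would dispose of this by induction in time: assuming inductively that all agents have complied with proper response up to time $t$, the set of states declared "unreasonable due to improper behavior" genuinely excludes only histories that some agent would have had to violate, so no compliant agent's state is ever excluded from $c$'s worst-case set. A secondary subtlety is the exposure time itself --- one must check that the transition from "occluded planning" to "basic planning" at exposure is continuous, which follows because the occluded worst case already dominates the just-exposed actual state, so no sudden tightening of constraints is needed.
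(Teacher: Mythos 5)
Your argument is essentially correct, but it takes a genuinely different route from the paper. You break the circularity (the occlusion-aware rule excludes only ``unreasonable'' states, but unreasonableness is itself defined via compliance) by induction on \emph{time}, and then conclude via a domination argument: the actual state of a compliant occluded agent is never excluded from the worst-case set the observer quantifies over, and by the star-shape structure of proper response, satisfying the constraint for every hypothetical in that set in particular satisfies it for the real agent. The paper instead argues by induction on the pairwise \emph{distance} between agents, via contradiction: if $c_1,c_2$ at distance $d$ violate basic proper response, they must be mutually occluded by some $c_3$, which is geometrically between them and hence at distance $<d$ from each; the inductive hypothesis then forces $c_3$ to respond properly to both, so by \defref{def:dangerous_improper} neither pairwise situation is unreasonable, and $c_1,c_2$ cannot have excluded each other. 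Your version has the advantage of making explicit two things the paper leaves implicit: the role of the clause ``a vehicle must not cause an unreasonable situation'' in guaranteeing that the speed-based exclusions of \defref{def:occlusion_RSS} never exclude a compliant agent, and the monotonicity step that turns worst-case planning over a superset into compliance with the basic rules. The paper's version buys a cleaner localization of the circularity to a single geometric fact (the occluder is strictly closer to both parties). Both proofs are comparably informal about the continuous-induction base case and about matching the Danger Threshold bookkeeping of the hypothetical object to that of the real one at exposure time; your closing remark about continuity at exposure gestures at the latter but, like the paper, does not fully discharge it.
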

\begin{proof}
  The proof is by induction on the distance~\footnote{This is an
    example of induction over the reals (see for example
    \cite{clark2012instructor}). We note that since a road agent has a
    minimal size (say, of $1cm$), it is easy to construct an analogue
    inductive argument over the natural numbers, while discretizing
    the distance.} between every two agents. The inductive claim is
  that ``If all road agents comply with the proper response rules
  given in \defref{def:occlusion_RSS_dangerous}, then every pair of
  road agents whose distance is at most $d$ comply with the
  \emph{basic} proper response rules''. The claim clearly holds for $d
  = 0.1$ (as an object at distance of 10cm must be observed). Consider
  some $d$ and assume that the claim holds for every $d' < d$. Assume
  by contradiction that some two agents $c_1,c_2$ of distance $d$ do
  not comply with the basic proper response rules. It must be the case
  that $c_1,c_2$ are occluded from each other by another agent
  $c_3$. But then, $c_3$ must perform basic proper response
  w.r.t. both $c_1$ and $c_2$. In this case, by
  \defref{def:dangerous_improper}, the situation between $c_1,c_3$ and
  $c_2,c_3$ is not unreasonable, from which we deduce that $c_1,c_2$
  must perform proper response with respect to each other, and we
  reached a contradiction. This completes the inductive argument. 
\end{proof}

Combining the above lemma with the argument in
\lemref{lem:basic_utopia} we conclude that:
\begin{corollary}[Utopia is possible]
If all road agents comply with the proper response rules given in
\defref{def:occlusion_RSS_dangerous} then no
collisions can occur.
\end{corollary}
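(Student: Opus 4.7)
The plan is to obtain the corollary by chaining the preceding lemma with \lemref{lem:basic_utopia}. The preceding lemma establishes the key reduction: compliance with the occlusion-aware proper response rules of \defref{def:occlusion_RSS_dangerous} by \emph{every} agent implies compliance with the basic proper response rules (each agent behaves as if it observes every other agent). Once that reduction is in hand, the conclusion is immediate from \lemref{lem:basic_utopia}, which asserts that universal compliance with the basic proper response rules precludes collisions.

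Concretely, first I would fix an arbitrary pair of agents $c_1,c_2$ and apply the preceding lemma to deduce that their joint trajectory satisfies the basic proper response constraints with respect to each other at every dangerous time. Next I would invoke \lemref{lem:basic_utopia}, whose inductive argument constructs a sequence $0 = t_0 \le t_1 < t_2 < \ldots$ of increasing times along which the pair is never in collision and is non-dangerous at every $t_i$; since collisions can only occur in dangerous intervals, and each dangerous interval $[t_b,t_e]$ arising in the induction is collision-free by the pairwise star-shape structure of proper response, no collision between $c_1$ and $c_2$ is possible. Since the pair was arbitrary, no collision occurs anywhere.

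The only subtlety worth flagging is that \lemref{lem:basic_utopia} was stated for a single multi-lane road of shared geometry, whereas \defref{def:occlusion_RSS_dangerous} covers the full setting including multiple geometries, unstructured roads, and pedestrians. I would therefore briefly note (as the paper does after \remref{rem:no-contradictions_geometry} and in \secref{sec:unstructured}) that the inductive, star-shape argument of \lemref{lem:basic_utopia} carries over verbatim to these richer settings: each proper response constraint is expressed against the agent's own route (or, in the unstructured case, against the agent's braking and continue-forward trajectory sets), so pairwise constraints never conflict and the inductive construction goes through unchanged.

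The main obstacle, such as it is, lies not in a hard computation but in making the reduction argument airtight in the presence of occlusions. One must check that the ``unreasonable situation'' escape clauses of \defref{def:occlusion_RSS} and \defref{def:dangerous_improper} are never actually triggered when \emph{all} agents comply: this is precisely the content of the preceding lemma's contradiction step, where an occluder $c_3$ itself performing basic proper response rules out the improper behavior that would be required to realize an unreasonable situation between $c_1$ and $c_2$. Once that is accepted, the corollary is a one-line composition.
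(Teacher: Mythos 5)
Your proposal matches the paper's own proof: the corollary is obtained exactly by composing the preceding lemma (occlusion-aware proper response implies basic proper response) with \lemref{lem:basic_utopia}, which is all the paper does in one line. Your additional remarks on extending the inductive star-shape argument beyond the shared-geometry setting are consistent with the paper's earlier discussion and do not change the approach.
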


\section{Driving Policy} \label{sec:driving-policy}

A driving policy is a mapping from a sensing state (a description of
the world around us) into a driving command (e.g., the command is
lateral and longitudinal accelerations for the coming second, which
determines where and at what speed should the car be in one second
from now). The driving command is passed to a controller, that aims at
actually moving the car to the desired position/speed.
 
In the previous sections we described a formal safety model and
proposed constraints on the commands issued by the driving policy that
guarantee safety. The constraints on safety are designed for extreme
cases. Typically, we do not want to even need these constraints, and
would like to construct a driving policy that leads to a
comfortable ride. The focus of this section is on how to build an
efficient driving policy, in particular, one that requires
computational resources that can scale to millions of cars.  For now,
we ignore the issue of how to obtain the sensing state and assume an
utopic sensing state, that faithfully represents the world around us
without any limitations. In later sections we will discuss the effect
of inaccuracies in the sensing state on the driving policy.  

We can cast the problem of defining a driving policy in the language
of Reinforcement Learning (RL). At each iteration of RL, an agent
observes a state describing the world, denoted $s_t$, and should pick
an action, denoted $a_t$, based on a policy function, $\pi$, that maps
states into actions. As a result of its action and other factors out
of its control (such as the actions of other agents), the state of the
world is changed to $s_{t+1}$.  We denote a (state,action) sequence by
$\bar{s} =
((s_1,a_1),(s_2,a_2),\ldots,(s_{\textrm{len}(\bar{s})},a_{\textrm{len}(\bar{s})}))$.
Every policy induces a probability function over (state,action)
sequences. This probability function is affected by the actions taken
by the agent, but also depends on the environment (and in particular,
on how other agents behave). We denote by $P_{\pi}$ the probability
over (state,action) sequences induced by $\pi$. The quality of a
policy is defined to be $\E_{\bar{s} \sim P_{\pi}} [ \rho(\bar{s})]$,
where $\rho(\bar{s})$ is a reward function that measures how good the
sequence $\bar{s}$ is. In most case, $\rho(\bar{s})$ takes the form
$\rho(\bar{s}) = \sum_{t=1}^{\textrm{len}(\bar{s})} \rho(s_t,a_t)$,
where $\rho(s,a)$ is an instantaneous reward function, that measures
the immediate quality of being at state $s$ and performing action
$a$. For simplicity, we stick to this simpler case.

To cast the driving policy problem in the above RL language, let $s_t$
be some representation of the road, and the positions, velocities, and
accelerations, of the ego vehicle as well as other road users. Let
$a_t$ be a lateral and longitudinal acceleration command. The next
state, $s_{t+1}$, depends on $a_t$ as well as on how the other agents
will behave. The instantaneous reward, $\rho(s_t,a_t)$, may depend on
the relative position/velocities/acceleration to other cars, the
difference between our speed and the desired speed, whether we follow
the desired route, whether our acceleration is comfortable etc.

The main difficulty of deciding what action should the policy take at
time $t$ stems from the fact that one needs to estimate the long term
effect of this action on the reward. For example, in the context of
driving policy, an action that is taken at time $t$ may seem a good
action for the present (that is, the reward value $\rho(s_t,a_t)$ is
good), but might lead to an accident after $5$ seconds (that is, the
reward value in $5$ seconds would be catastrophic). We therefore need
to estimate the long term quality of performing an action $a$ when the
agent is at state $s$. This is often called the $Q$ function, namely,
$Q(s,a)$ should reflect the long term quality of performing action $a$
at state $s$. Given such a $Q$ function, the natural choice of an
action is to pick the one with highest quality,
$\pi(s) = \argmax_{a} Q(s,a)$.

The immediate questions are how to define $Q$ and how to evaluate $Q$
efficiently. Let us first make the (completely non-realistic)
simplifying assumption that $s_{t+1}$ is some deterministic function
of $(s_t,a_t)$, namely, $s_{t+1} = f(s_t,a_t)$. The reader familiar
with Markov Decision Processes (MDPs), will quickly notice that this
assumption is even stronger than the Markovian assumption of MDPs
(i.e., that $s_{t+1}$ is conditionally independent of the past given
$(s_t,a_t)$). As noted in \cite{shalev2016safe}, even the Markovian
assumption is not adequate for multi-agent scenarios, such as driving,
and we will therefore later relax the assumption.

Under this simplifying assumption, given $s_t$, for every sequence of
decisions for $T$ steps, $(a_t,\ldots,a_{t+T})$, we can calculate
exactly the future states $(s_{t+1},\ldots,s_{t+T+1})$ as well as the
reward values for times $t,\ldots,T$. Summarizing all these reward
values into a single number, e.g. by taking their sum
$\sum_{\tau=t}^T \rho(s_\tau,a_\tau)$, we can define $Q(s,a)$ as
follows:
\[
Q(s,a) = \max_{(a_t,\ldots,a_{t+T})} \sum_{\tau=t}^T
\rho(s_\tau,a_\tau)  ~~~\textrm{s.t.}~~~ 
s_t = s,~a_t = a,~ \forall \tau,~s_{\tau+1} = f(s_\tau,a_\tau)
\]
That is, $Q(s,a)$ is the best future we can hope for, if we are
currently at state $s$ and immediately perform action $a$. 

Let us discuss how to calculate $Q$. The first idea is to discretize
the set of possible actions, $A$, into a finite set $\hat{A}$, and
simply traverse all action sequences in the discretized set. Then, the
runtime is dominated by the number of discrete action sequences,
$|\hat{A}|^T$. If $\hat{A}$ represents $10$ lateral accelerations and
$10$ longitudinal accelerations, we obtain $100^T$ possibilities,
which becomes infeasible even for small values of $T$. While there are
heuristics for speeding up the search (e.g. coarse-to-fine search),
this brute-force approach requires tremendous computational power.

The parameter $T$ is often called the ``time horizon of planning'',
and it controls a natural tradeoff between computation time and
quality of evaluation --- the larger $T$ is, the better our evaluation
of the current action (since we explicitly examine its effect deeper into the
future), but on the other hand, a larger $T$ increases the computation
time exponentially. To understand why we may need a large value of
$T$, consider a scenario in which we are 200 meters before a highway
exit and we should take it. When the time horizon is long enough, the
cumulative reward will indicate if at some time $\tau$ between $t$ and
$t+T$ we have arrived to the exit lane. On the other
hand, for a short time horizon, even if we perform the right immediate
action we will not know if it will lead us eventually to the exit lane.

A different approach attempts to perform offline calculations in order
to construct an approximation of $Q$, denoted $\hat{Q}$, and then
during the online run of the policy, use $\hat{Q}$ as an
approximation to $Q$, without explicitly rolling out the future.  One
way to construct such an approximation is to discretize both the
action domain and the state domain. Denote by $\hat{A},\hat{S}$ these
discretized sets. We can perform an offline calculation for evaluating
the value of $Q(s,a)$ for every $(s,a) \in \hat{S} \times
\hat{A}$. Then, for every $a \in \hat{A}$ we define $\hat{Q}(s_t,a)$
to be $Q(s,a)$ for $s = \argmin_{s \in \hat{S}}
\|s-s_t\|$. Furthermore, based on the pioneering work of Bellman
\cite{bellman1956dynamic,bellman1971introduction}, we can calculate
$Q(s,a)$ for every $(s,a) \in \hat{S} \times \hat{A}$, based on
dynamic programming procedures (such as the Value Iteration
algorithm), and under our assumptions, the total runtime is order of
$T\,|\hat{A}|\,|\hat{S}|$.  The main problem with this approach is
that in any reasonable approximation, $\hat{S}$ is extremely large
(due to the curse of dimensionality). Indeed, the sensing state should
represent $6$ parameters for every other relevant vehicle in the sense
--- the longitudinal and lateral position, velocity, and
acceleration. Even if we discretize each dimension to only $10$ values
(a very crude discretization), since we have $6$ dimensions, to
describe a single car we need $10^6$ states, and to describe $k$ cars
we need $10^{6k}$ states. This leads to unrealistic memory
requirements for storing the values of $Q$ for every $(s,a)$ in
$\hat{S} \times \hat{A}$.

A popular approach to deal with this curse of dimensionality is to
restrict $Q$ to come from a restricted class of functions (often
called a hypothesis class), such as linear functions over manually
determined features or deep neural networks. For example,
\cite{mnih2015human} learned a deep neural network that approximates 
$Q$ in the context of playing Atari games. This leads to a
resource-efficient solution, provided that the class of functions that
approximate $Q$ can be evaluated efficiently. However, there are
several disadvantages of this approach. First, it is not known if the
chosen class of functions contain a good approximation to the desired
$Q$ function. Second, even if such function exists, it is not known if
existing algorithms will manage to learn it efficiently. So far, there
are not many success stories for learning a $Q$ function for
complicated multi-agent problems, such as the ones we are facing in
driving. There are several theoretical reasons why this task is
difficult. We have already mentioned that the Markovian assumption,
underlying existing methods, is problematic. But, a more severe
problem is that we are facing a very small signal-to-noise ratio
due to the time resolution of decision making, as we
explain below. 

Consider a simple scenario in which we need to change lane in order to
take a highway exit in $200$ meters and the road is currently
empty. The best decision is to start making the lane change. We are
making decisions every $0.1$ second, so at the current time $t$, the
best value of $Q(s_t,a)$ should be for the action $a$ corresponding to
a small lateral acceleration to the right. Consider the action $a'$
that corresponds to zero lateral acceleration. Since there is a very
little difference between starting the change lane now, or in $0.1$
seconds, the values of $Q(s_t,a)$ and $Q(s_t,a')$ are almost the
same. In other words, there is very little \emph{advantage} for
picking $a$ over $a'$. On the other hand, since we are using a
function approximation for $Q$, and since there is noise in measuring
the state $s_t$, it is likely that our approximation to the $Q$ value
is noisy. This yields a very small signal-to-noise ratio, which leads
to an extremely slow learning, especially for stochastic learning
algorithms which are heavily used for the neural networks
approximation class. However, as noted in
\cite{baird1994reinforcement}, this problem is not a property of any
particular function approximation class, but rather, it is inherent in
the definition of the $Q$ function.

In summary, existing approaches can be roughly divided into two
camps. The first one is the brute-force approach which includes
searching over many sequences of actions or discretizing the sensing
state domain and maintaining a huge table in memory. This approach can
lead to a very accurate approximation of $Q$ but requires unleashed
resources, either in terms of computation time or in terms of memory.
The second one is a resource efficient approach in which we either
search for short sequences of actions or we apply a function
approximation to $Q$. In both cases, we pay by having a less accurate
approximation of $Q$, that might lead to poor decisions.

Our approach to constructing a $Q$ function that is both
resource-efficient and accurate is to depart from geometrical actions
and to adapt a semantic action space, as described in the next
subsection.

\subsection{Semantics to the rescue}

To motivate our semantic approach, consider a teenager that just got
his driving license. His father seats next to him and gives him
``driving policy'' instructions. These instructions are not geometric
--- they do not take the form ``drive 13.7 meters at the current speed
and then accelerate at a rate of 0.8 $m/s^2$''. Instead, the
instructions are of semantic nature --- ``follow the car in front of
you'' or ``quickly overtake that car on your left''. We formalize a
semantic language for such instructions, and use them as a semantic
action space. We then define the $Q$ function over the semantic action
space. We show that a semantic action can have a very long time
horizon, which allows us to estimate $Q(s,a)$ without planning for
many future semantic actions. Yet, the total number of semantic
actions is still small. This allows us to obtain an accurate
estimation of the $Q$ function while still being resource
efficient. Furthermore, as we show later, we combine learning
techniques for further improving the quality function, while not
suffering from a small signal-to-noise ratio due to a significant
difference between different semantic actions.

We now define our semantic action space. The main idea is to define
lateral and longitudinal goals, as well as the aggressiveness level of
achieving them. Lateral goals are desired positions in lane coordinate
system (e.g., ``my goal is to be in the center of lane number
2''). Longitudinal goals are of three types. The first is relative
position and speed w.r.t. other vehicles (e.g., ``my goal is to be
behind car number 3, at its same speed, and at a distance of $2$
seconds from it''). The second is a speed target (e.g., ``drive at the
allowed speed for this road times 110\%''). The third
is a speed constraint at a certain position (e.g., when approaching a
junction, ``speed of 0 at the stop line'', or when passing a sharp
curve, ``speed of at most 60kmh at a certain position on the
curve''). For the third option we can instead apply a ``speed
profile'' (few discrete points on the route and the desired speed at
each of them).  A reasonable number of lateral goals is bounded by
$16 = 4 \times 4$ ($4$ positions in at most $4$ relevant lanes). A
reasonable number of longitudinal goals of the first type is bounded
by $8 \times 2 \times 3 = 48$ ($8$ relevant cars, whether to be in
front or behind them, and $3$ relevant distances). A reasonable number
of absolute speed targets are $10$, and a reasonable upper bound on
the number of speed constraints is $2$. To implement a given lateral
or longitudinal goal, we need to apply acceleration and then
deceleration (or the other way around). The aggressiveness of
achieving the goal is a maximal (in absolute value)
acceleration/deceleration to achieve the goal. With the goal and
aggressivness defined, we have a closed form formula to implement the
goal, using kinematic calculations. The only remaining part is to determine the
combination between the lateral and
longitudinal goals (e.g., ``start with the lateral goal, and exactly
at the middle of it, start to apply also the longitudinal goal''). A
set of $5$ mixing times and $3$ aggressiveness levels seems more than
enough. All in all, we have obtained a semantic action space whose
size is $\approx 10^4$.

It is worth mentioning that the variable time required for fulfilling
these semantic actions is not the same as the frequency of the
decision making process. To be reactive to the dynamic world, we
should make decisions at a high frequency --- in our implementation,
every 100ms.  In contrast, each such decision is based on constructing
a trajectory that fulfills some semantic action, which will have a
much longer time horizon (say, 10 seconds). We use the longer time
horizon since it helps us to better evaluate the short term prefix of
the trajectory. In the next subsection we discuss the evaluation of
semantic actions, but before that, we argue that semantic actions
induce a sufficient search space.

\paragraph{Is this sufficient:} We have seen that a semantic action
space induces a subset of all possible geometrical curves, whose size
is exponentially smaller (in $T$) than enumerating all possible
geometrical curves. The first immediate question is whether the set of
short term prefixes of this smaller search space contains all
geometric commands that we will ever want to use. We argue that this
is indeed sufficient in the following sense. If the road is free
of other agents, then there is no reason to make changes except
setting a lateral goal and/or absolute acceleration commands and/or
speed constraints on certain positions. If the road contains other
agents, we may want to negotiate the right of way with the other
agents. In this case, it suffices to set longitudinal goals relatively
to the other agents. The exact implementation of these goals in the
long run may vary, but the short term prefixes will not change by
much. Hence, we obtain a very good cover of the relevant short term
geometrical commands.

\subsection{Constructing an evaluation function for semantic actions}

We have defined a semantic set of actions, denoted by $A^s$. Given
that we are currently in state $s$, we need a way to choose the best
$a^s \in A^s$. To tackle this problem, we follow a similar approach to
the \emph{options mechanism} of \cite{sutton1999between}. The basic
idea is to think of $a^s$ as a meta-action (or an option). For each
choice of a meta-action, we construct a geometrical trajectory
$(s_1,a_1),\ldots,(s_T,a_T)$ that represents an implementation of the
meta-action, $a^s$. To do so we of course need to know how other
agents will react to our actions, but for now we are still relying on
(the non-realistic) assumption that $s_{t+1} = f(s_t,a_t)$ for some
known deterministic function $f$. We can now use
$\frac{1}{T} \sum_{t=1}^T \rho(s_t,a_t)$ as a good approximation of
the quality of performing the semantic action $a^s$ when we are at
state $s_1$.

Most of the time, this simple approach yields a powerful driving
policy. However, in some situations a more sophisticated quality function is
required. For example, suppose that we are following a slow truck 
before an exit lane, where we need to take the exit lane. One
semantic option is to keep driving slowly behind the truck. Another
one is to overtake the truck, hoping that later we can get back to the
exit lane and make the exit on time. The quality measure described
previously does not consider what will happen after we will overtake the
truck, and hence we will not choose the second semantic action even if
there is enough time to make the overtake and return to the exit
lane. Machine learning can help us to construct a better evaluation of
semantic actions, that will take into account more than the immediate
semantic actions. Previously, we have argued that learning a $Q$
function over immediate geometric actions is problematic due to the
low signal-to-noise ratio (the lack of advantage). This is not
problematic when considering semantic actions, both because there is a
large difference between performing the different semantic actions and
because the semantic time horizon (how many semantic actions we take
into account) is very small (probably less than three in most cases). 

Another advantage of applying machine learning is for the sake of
\emph{generalization}: we can probably set an adequate evaluation function for
\emph{every} road, by a manual inspection of the properties of the road, and
maybe some trial and error. But, can we automatically generalize to
\emph{any} road? Here, a machine learning approach can be trained on a
large variety of road types so as to generalize to unseen roads as
well. 

To summarize, our semantic action space allows to enjoy the benefits
of both worlds: semantic actions contain information on a
long time horizon, hence we can obtain a very accurate evaluation of
their quality while being resource efficient.

\subsection{The dynamics of the other agents}

So far, we have relied on the assumption that $s_{t+1}$ is a
deterministic function of $s_t$ and $a_t$. As we have emphasized
previously, this assumption is completely not realistic as our actions
affect the behavior of other road users.  While we do take into
account some reactions of other agents to our actions (for example, we
assume that if we will perform a safe cut-in, then the car behind us
will adjust its speed so as not to hit us from behind), it is not
realistic to assume that we model all of the dynamics of other agents.

The solution to this problem is to re-apply our decision making at a
high frequency, and by doing this, we constantly adapt our policy to the parts of
the environment that are beyond our modeling. In a sense, one can
think of this as a Markovization of the world at every step. This is a
common technique that tends to work very good in practice as long as
the balance between modeling error and frequency of planning is adequate.

\section{Sensing} \label{sec:sensing}

In this section we describe the sensing state, which is a description
of the relevant information of the scene, and forms the input to the
driving policy module. By and large, the sensing state contains static
and dynamic objects. The static objects are lanes, physical road
delimiters, constraints on speed, constraints on the right of way, and
information on occluders (e.g. a fence that occludes relevant part of
a merging road). Dynamic objects are vehicles (bounding box, speed,
acceleration), pedestrians (bounding box, speed, acceleration),
traffic lights, dynamic road delimiters (e.g. cones at a construction
area), temporary traffic signs and police activity, and other
obstacles on the road (e.g. an animal, a mattress that fell from a
truck, etc.).

In any reasonable sensor setting, we cannot expect to obtain the exact
sensing state, $s$. Instead, we view raw sensor and mapping data,
which we denote by $x \in X$, and there is a sensing system that takes
$x$ and produces an approximate sensing state. Formally,
\begin{definition}[Sensing system]
Let $S$ denote the domain of sensing state and let $X$ be the domain
  of raw sensor and mapping data. A sensing system is a function $\hat{s} : X \to S$.
\end{definition} 

It is important to understand when we should accept $\hat{s}(x)$ as a reasonable approximation to $s$. The ultimate way to answer this question is by examining the implications of this approximation on the performance of our driving policy in general, and on the safety in particular.
Following our safety-comfort distinction, here again we distinguish between sensing mistakes that lead to non-safe behaviour and sensing mistakes that affect the comfort aspects of the ride. 

Before we dive into the details, let us first describe the type of errors a sensing system might make: 
\begin{itemize}
\item False negative: the sensing system misses an object
\item False positive: the sensing system indicates a ``ghost'' object
\item Inaccurate measurements: the sensing system correctly detects an
  object but incorrectly estimates its position or speed
\item Inaccurate semantic: the sensing system correctly detects an
  object but misinterpret its semantic meaning, for example, the color
  of a traffic light
\end{itemize}

\subsection{Comfort} \label{sec:sensing-comfort}

Recall that for a semantic action $a$, we have used $Q(s,a)$ to denote our evaluation of $a$ given that the current sensing state is $s$. Our policy picks the action $\pi(s) = \argmax_a Q(s,a)$. If we inject $\hat{s}(x)$ instead of $s$ then the selected semantic action would be $\pi(\hat{s}(x)) = \argmax_a Q(\hat{s}(x),a)$. 
Clearly, if $\pi(\hat{s}(x)) = \pi(s)$ then $\hat{s}(x)$ should be accepted as a good approximation to $s$. But, it is also not bad at all to pick $\pi(\hat{s}(x))$ as long as  the quality of  $\pi(\hat{s}(x))$ w.r.t. the true state, $s$, is almost optimal, namely, 
$Q(s, \pi(\hat{s}(x))) \ge Q(s, \pi(s)) - \epsilon$, for some parameter $\epsilon$. We say that $\hat{s}$ is $\epsilon$-accurate w.r.t. $Q$ in such case. Naturally, we cannot expect the sensing system to be $\epsilon$-accurate all the time. We therefore also allow the sensing system to fail with some small probability $\delta$. In such a case we say that $\hat{s}$ is Probably (w.p. of at least $1-\delta$), Approximately (up to $\epsilon$), Correct, or PAC for short (borrowing Valiant's PAC learning terminology~\cite{Valiant84}).  

We may use several $(\epsilon,\delta)$ pairs for evaluating different aspects of the system. For example, we can choose three thresholds, $\epsilon_1 < \epsilon_2 < \epsilon_3$ to represent mild, medium, and gross mistakes, and for each one of them set a different value of $\delta$. This leads to the following definition.

\begin{definition}[PAC sensing system]
Let $((\epsilon_1,\delta_1),\ldots,(\epsilon_k,\delta_k))$ be a set of (accuracy,confidence) pairs, let $S$ be the sensing state domain, let $X$ be the 
  raw sensor and mapping data domain, and let $D$ be a distribution over
  $X \times S$.  Let $A$ be an action space, $Q : S \times A \to \reals$ be a quality function, and $\pi : S \to A$ be such that $\pi(s) \in \argmax_a Q(s,a)$.  A
  sensing system, $\hat{s} : X \to S$, is
  Probably-Approximately-Correct (PAC) with respect to the above parameters if for every $i \in \{1,\ldots,k\}$ we have that
  $\prob_{(x,s) \sim D}[Q(s, \pi(\hat{s}(x))) \ge Q(s, \pi(s)) - \epsilon_i] \ge
  1-\delta_i$.
\end{definition}

Few remarks are in order:
\begin{itemize}
\item The definition depends on a distribution $D$ over $X \times S$. It is important to emphasize that we construct this distribution by recording data of many human drivers but not by following the particular policy of our autonomous vehicle. While the latter seems more adequate, it necessitates online validation, which makes the development of the sensing system impractical. Since the effect of any reasonable policy on $D$ is minor, by applying simple data augmentation techniques we can construct an adequate distribution and then perform offline validation after every major update of the sensing system.
\item The definition provides a sufficient, but not necessary, condition for comfort ride using $\hat{s}$. It is not necessary because it ignores the important fact that short term wrong decisions have little effect on the comfort of the ride. For example, suppose that there is a vehicle $100$ meters in front of us, and it is slower than the host vehicle. The best decision would be to start accelerating slightly now. If the sensing system misses this vehicle, but will detect it in the next time (after 100 mili-seconds), then the difference between the two rides will not be noticeable. To simplify the presentation, we have neglected this issue and required a stronger condition. The adaptation to a multi-frame PAC definition is conceptually straightforward, but involves more technicality and therefore we omit it. 
\end{itemize}

We next derive design principles that follow from the above PAC definition. 
Recall that we have described several types of sensing mistakes. For mistakes of types false negative, false positive, and inaccurate semantic, either the mistakes will be on non-relevant objects (e.g., a traffic light for left turn when we are proceeding straight), or they will be captured by the $\delta$ part of the definition. We therefore focus on the ``inaccurate measurements'' type of errors, which happens frequently. 

Somewhat surprisingly, we will show that the popular approach of measuring the accuracy of a sensing system via ego-accuracy (that is, by measuring the accuracy of position of every object with respect to the host vehicle) is not sufficient for ensuring PAC sensing system. We will then propose a different approach that ensures PAC sensing system, and will show how to obtain it efficiently. We start with some additional definitions. 

For every object $o$ in the scene, let $p(o), \hat{p}(o)$ be the positions of $o$ in the coordinate system of the host vehicle according to $s,\hat{s}(x)$, respectively. Note that the distance between $o$ and the host vehicle is $\|p\|$. The \emph{additive} error of $\hat{p}$ is $\|p(o)-\hat{p}(o)\|$.The \emph{relative} error of $\hat{p}(o)$, w.r.t. the distance between $o$ and the host vehicle, is the additive error divided by $\|p(o)\|$, namely $\frac{\|p(o)-\hat{p}(o)\|}{\|p(o)\|}$.  

We first argue that it is not realistic to require that the additive
error is small for far away objects. Indeed, consider $o$ to be a
vehicle at a distance of $150$ meters from the host vehicle, and let
$\epsilon$ be of moderate size, say $\epsilon = 0.1$. For additive
accuracy, it means that we should know the position of the vehicle up
to $10$cm of accuracy. This is not realistic for reasonably priced
sensors. On the other hand, for relative accuracy we need to estimate
the position up to 10\%, which amounts to $15$m of accuracy. This is
feasible to achieve (as we will describe later).

We say that a sensing system, $\hat{s}$, positions a set of objects, $O$, in an $\epsilon$-ego-accurate way, if for every $o \in O$, the (relative) error between $p(o)$ and $\hat{p}(o)$ is at most $\epsilon$. 
The following example shows that an $\epsilon$-ego-accurate sensing state does not guarantee PAC sensing system with respect to every reasonable $Q$. Indeed, consider a scenario in which the host vehicle drives at a speed of $30m/s$, and there is a stopped vehicle $150$ meters in front of it. If this vehicle is in the ego lane, and there is no option to change lanes in time, we must start decelerating now at a rate of at least $3m/s^2$ (otherwise, we will either not stop in time or we will need to decelerate strongly later). On the other hand, if the vehicle is on the side of the road, we don't need to apply a strong deceleration. Suppose that $p(o)$ is one of these cases while $\hat{p}(o)$ is the other case, and there is a $5$ meters difference between these two positions. Then, the relative error of $\hat{p}(o)$ is 
\[
\frac{\|\hat{p}(o) - p(o)\|}{\|p(o)\|} ~=~  \frac{5}{150} = \frac{1}{30} \le 0.034 ~.
\]
That is, our sensing system may be $\epsilon$-ego-accurate for a rather small value of $\epsilon$ (less than 3.5\% error), and yet, for any reasonable $Q$ function, the values of $Q$ are completely different since we are confusing between a situation in which we need to brake strongly and a situation in which we do not need to brake strongly.

The above example shows that $\epsilon$-ego-accuracy does not guarantee that our sensing system is PAC. Is there another property that is sufficient for PAC sensing system?
Naturally, the answer to this question depends on $Q$. We will describe a family of $Q$ functions for which there is a simple property of the positioning that guarantees PAC sensing system. Intuitively, the problem of $\epsilon$-ego-accuracy is that it might lead to semantic mistakes --- in the aforementioned example, even though $\hat{s}$ was $\epsilon$-ego-accurate with $\epsilon < 3.5\%$, it mis-assigned the vehicle to the correct lane. To solve this problem, we rely on \emph{semantic units} for lateral position.  
\begin{definition}[semantic units]
A lane center is a simple natural curve, namely, it is a differentiable, injective, mapping $\ell : [a,b] \to \reals^3$, where for every $a \le t_1 < t_2 \le b$ we have that the length $\mathrm{Length}(t_1,t_2) := \int_{\tau=t_1}^{t_2} |\ell'(\tau)| d\tau$ equals to $t_2-t_1$. 
The width of the lane is a function $w: [a,b] \to \reals_+$. The projection of a point $x \in \reals^3$ onto the curve is the point on the curve closest to $x$, namely, the point $\ell(t_x)$ for $t_x = \argmin_{t \in [a,b]} \|\ell(t)-x\|$. The semantic longitudinal position of $x$ w.r.t. the lane is $t_x$ and the semantic lateral position of $x$ w.r.t. the lane is $\|\ell(t_x)-x\|/w(t_x)$. Semantic speed and acceleration are defined as first and second derivatives of the above. 
\end{definition}

Similarly to geometrical units, for semantic longitudinal distance we use relative error: if $\hat{s}$ induces a semantic longitudinal distance of $\hat{p}(o)$ for some object, while the true distance is $p(o)$, then the relative error is $\frac{|\hat{p}(o)-p(o)|}{\max\{p(o),1\}}$ (where the maximum in the denominator deals with cases in which the object has almost the same longitudinal distance (e.g., a car next to us on another lane). Since semantic lateral distances are small we can use additive error for them. This leads to the following defintion:
\begin{definition}[error in semantic units]
Let $\ell$ be a lane and suppose that the semantic longitudinal distance of the host vehicle w.r.t. the lane is $0$. Let $x \in \reals^3$ be a point and let $p_{\mathrm{lat}}(x), p_{\mathrm{lon}}(x)$ be the semantic lateral and longitudinal distances to the point w.r.t. the lane. Let  $\hat{p}_{\mathrm{lat}}(x), \hat{p}_{\mathrm{lon}}(x)$ be approximated measurements. The distance between $\hat{p}$ and $p$ w.r.t. $x$ is defined as
\[
d(\hat{p},p;x) ~=~ 
\max\left\{  |\hat{p}_{\mathrm{lat}}(x) - p_{\mathrm{lat}}(x)|  ~,~ \frac{| \hat{p}_{\mathrm{lon}}(x) - p_{\mathrm{lon}}(x)|}{\max\{p_{\mathrm{lon}}(x),1\}} \right\}
\]
\end{definition}

The error of the lateral and longitudinal semantic velocities is defined analogously. 

Equipped with the above definition, we are ready to define the property of $Q$ and the corresponding sufficient condition for PAC sensing system.  \begin{definition}[Semantically-Lipschitz $Q$] 
A $Q$ function is $L$-semantically-Lipschitz if for every $a$, $s,\hat{s}$,  $|Q(s,a)-Q(\hat{s}(x),a)| \le L\, \max_o d(\hat{p},p;o)$, where $\hat{p},p$ are the measurements induced by $s,\hat{s}$ on an object $o$.  
\end{definition}

As an immediate corollary we obtain:
\begin{lemma}
If $Q$ is $L$-semantically-Lipschitz and 
a sensing system $\hat{s}$ produces semantic measurements such that with probability of at least $1-\delta$ we have $d(\hat{p},p;o) \le \epsilon/L$, then $\hat{s}$ is a PAC sensing system with parameters $\epsilon,\delta$.
\end{lemma}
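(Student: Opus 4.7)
The plan is a short, direct argument that combines the Lipschitz hypothesis with the standard near-optimality trick for approximate greedy policies. First I would name the "good" event $G$ on which the semantic distance bound holds, i.e. $G = \{x : \max_o d(\hat{p},p;o) \le \epsilon/L\}$; by hypothesis $\Pr_{(x,s)\sim D}[G] \ge 1-\delta$. Conditioning on $G$, the $L$-semantic-Lipschitz property immediately yields, for every action $a \in A$, the uniform bound $|Q(s,a) - Q(\hat{s}(x),a)| \le L \cdot (\epsilon/L) = \epsilon$.

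Next I would chain three inequalities in the usual way. Writing $a^\star = \pi(s) \in \argmax_a Q(s,a)$ and $\hat{a} = \pi(\hat{s}(x)) \in \argmax_a Q(\hat{s}(x),a)$, I would argue on $G$ that
\[
Q(s,\hat{a}) \;\ge\; Q(\hat{s}(x),\hat{a}) - \epsilon \;\ge\; Q(\hat{s}(x),a^\star) - \epsilon \;\ge\; Q(s,a^\star) - 2\epsilon,
\]
where the first and third steps are the Lipschitz bound applied at $\hat{a}$ and $a^\star$ respectively, and the middle step is the defining optimality of $\hat{a}$ under $Q(\hat{s}(x),\cdot)$. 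Since $G$ has probability at least $1-\delta$, this establishes the PAC property up to a factor of $2$ in the accuracy; to land exactly at $\epsilon$ as stated I would simply absorb the factor of $2$ into $L$ (equivalently, apply the hypothesis with tolerance $\epsilon/(2L)$, which the statement should be read as doing, or redefine the Lipschitz constant so the inequality becomes $|Q(s,a)-Q(\hat{s}(x),a)|\le (L/2)\max_o d$).

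There is essentially no obstacle beyond keeping the $\argmax$--sandwich clean and the quantifiers straight: the Lipschitz bound must hold for \emph{every} action $a$, not just for $a^\star$, because otherwise one cannot compare $Q(\hat{s}(x),\hat{a})$ back to $Q(s,\hat{a})$. The definition of $L$-semantically-Lipschitz supplies exactly this uniformity, so the argument closes. The whole proof is a handful of lines, with the only subtle point being the factor-of-two bookkeeping between the Lipschitz constant and the PAC accuracy parameter.
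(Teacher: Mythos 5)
Your proof is correct and is exactly the argument the paper has in mind; the paper in fact gives no proof at all, stating the lemma as ``an immediate corollary'' of the Lipschitz definition. Your factor-of-two observation is a legitimate catch: the standard $\argmax$--sandwich $Q(s,\hat{a}) \ge Q(\hat{s}(x),\hat{a}) - \epsilon \ge Q(\hat{s}(x),a^\star) - \epsilon \ge Q(s,a^\star) - 2\epsilon$ only yields accuracy $2\epsilon$, and this loss is unavoidable in general, so the lemma as literally stated requires either reading the hypothesis as $d(\hat{p},p;o) \le \epsilon/(2L)$ or accepting the constant $2$ in the conclusion --- a harmless but real imprecision in the paper that your write-up handles honestly.
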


\subsection{Safety} \label{sec:sensing-safety}

We now discuss sensing mistakes that lead to non-safe behavior.  As
mentioned before, our policy is provably safe, in the sense that it
won't lead to accidents of the autonomous vehicle's blame. Such accidents might still
occur due to hardware failure (e.g., a break down of all the sensors
or exploding tire on the highway), software failure (a significant bug
in some of the modules), or a sensing mistake. Our ultimate goal is
that the probability of such events will be extremely small --- a
probability of $10^{-9}$ for such an accident per hour. To appreciate
this number, the average number of hours an american driver spends on
the road is (as of 2016) less than $300$. So, in expectation, one
needs to live 3.3 million years to be in an accident.

Roughly speaking, there are two types of safety-critic sensing
mistake. The first type is a \emph{safety-critic miss}, meaning that a
dangerous situation is considered non-dangerous according to our
sensing system. The second type is a \emph{safety-critic ghost},
meaning that a non-dangerous situation is considered dangerous
according to our sensing system. Safety-critic misses are obviously
dangerous as we will not know that we should respond properly to the
danger. Safety-critic ghosts might be dangerous when our speed is
high, we brake hard for no reason, and there is a car behind us.

Usually, a safety-critic miss is caused by a false negative while a
safety-critic ghost is caused by a false positive. Such mistakes can
also be caused from significantly incorrect measurements, but in most
cases, our comfort objective ensures we are far away from the boundaries
of non-safe distances, and therefore reasonable measurement
errors are unlikely to lead to safety-critic mistakes.

How can we ensure that the probability of safety-critic mistakes will
be very small, say, smaller than $10^{-9}$ per hour? As followed from
\lemref{lem:statistical_validation_infeasible}, without making further
assumptions we need to check our system on more than $10^9$ hours of
driving. This is unrealistic (or at least extremely challenging) ---
it amounts to recording the driving of $3.3$ million cars over a
year. Furthermore, building a system that achieves such a high
accuracy is a great challenge. Our solution for both the system design
and validation challenges is to rely on several sub-systems, each of
which is engineered independently and depends on a different
technology, and the systems are fused together in a way that ensures
boosting of their individual accuracy.

Suppose we build $3$ sub-systems, denoted, $s_1,s_2,s_3$ (the
extension to more than $3$ is straightforward). Each sub-system should
decide if the current situation is dangerous or not. Situations which
are non-dangerous according to the majority of the sub-systems ($2$ in
our case) are considered safe.

Let us now analyze the performance of this fusion scheme. We rely on the following definition: 
\begin{definition}[One side $c$-approximate independent]
Two Bernoulli random variables $r_1,r_2$ are called one side $c$-approximate independent if
\[
\prob[r_1 \land r_2] \le c~ \prob[r_1] \, \prob[r_2] ~.
\]
\end{definition}
For $i \in \{1,2,3\}$, denote by $e^{m}_i,e^{g}_i$ the Bernoulli random variables that indicate if sub-system $i$ has a safety-critic miss/ghost respectively. Similarly, $e^m,e^g$ indicate a  safety-critic miss/ghost of the fusion system. 
We rely on the assumption that for any pair $i \neq j$, the random variables $e^{m}_i, e^{m}_j$ are one sided $c$-approximate independent, and the same holds for $e^{g}_i, e^{g}_j$. Before explaining why this assumption is reasonable, let us first analyze its implication. We can bound the probability of $e^m$ by:
\begin{align*}
\prob[e^m] &= \prob[e^m_1 \land e^m_2 \land e^m_3] + \sum_{j=1}^3 \prob[\lnot e^m_j \land \land_{i \neq j} e^m_i] \\
&\le 3\prob[e^m_1 \land e^m_2 \land e^m_3] + \sum_{j=1}^3 \prob[\lnot e^m_j \land \land_{i \neq j} e^m_i] \\
&= \sum_{j=1}^3 \prob[ \land_{i \neq j} e^m_i] \\
&\le c~ \sum_{j=1}^3 \prod_{i \neq j} \prob[ e^m_i] .
\end{align*} 
Therefore, if all sub-systems have $\prob[ e^m_i] \le p$ then $\prob[e^m] \le 3\,c\,p^2$. The exact same derivation holds for the safety-critic ghost mistakes. By applying a union bound we therefore conclude:
\begin{corollary}
Assume that for any pair $i \neq j$, the random variables $e^{m}_i, e^{m}_j$ are one sided $c$-approximate independent, and the same holds for $e^{g}_i, e^{g}_j$. Assume also that for every $i$, $\prob[e^m_i] \le p$ and $\prob[e^g_i] \le p$. Then, 
\[
\prob[e^m \lor e^g] \le 6\,c\,p^2 ~. 
\]
\end{corollary}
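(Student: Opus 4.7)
The plan is to observe that the body of the derivation immediately preceding the corollary already supplies essentially all the work: it computes an upper bound on $\prob[e^m]$ using the one-sided $c$-approximate independence of the pairs $(e^m_i, e^m_j)$, and shows $\prob[e^m] \le 3\,c\,p^2$. So I would first quote that bound verbatim and note that the whole derivation never uses any property specific to misses as opposed to ghosts: it only uses the $c$-approximate pairwise independence assumption and the uniform bound $\prob[e^m_i] \le p$. Consequently, running the identical argument with $e^g_i$ in place of $e^m_i$ yields $\prob[e^g] \le 3\,c\,p^2$.

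Second, I would combine the two bounds through a union bound: since $e^m$ and $e^g$ are indicator events of safety-critic miss and safety-critic ghost of the fused system respectively, we have
\[
\prob[e^m \lor e^g] \;\le\; \prob[e^m] + \prob[e^g] \;\le\; 3\,c\,p^2 + 3\,c\,p^2 \;=\; 6\,c\,p^2,
\]
which is exactly the claim. No additional assumptions relating $e^m$-type errors to $e^g$-type errors are needed, because the union bound is unconditional.

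There is essentially no technical obstacle here; the only subtlety worth flagging in the write-up is to justify the ``$3\,\prob[e^m_1\land e^m_2\land e^m_3] + \sum_j \prob[\lnot e^m_j \land \bigwedge_{i\neq j} e^m_i] \ge \sum_j \prob[\bigwedge_{i\neq j} e^m_i]$'' step in the preceding derivation, which uses that $\{e^m_1 \land e^m_2 \land e^m_3\}$ is contained in each of the three events $\{\bigwedge_{i \neq j} e^m_i\}$ so that the triple-intersection probability is counted three times on the left and exactly three times on the right after splitting each double-intersection into the disjoint union of the triple-intersection with the double-intersection-minus-the-third. Once this bookkeeping is accepted, the final application of the pairwise one-sided $c$-approximate independence, $\prob[\bigwedge_{i\neq j} e^m_i] \le c\,\prod_{i\neq j}\prob[e^m_i] \le c\,p^2$, completes the bound, and the symmetric argument together with the union bound finishes the corollary.
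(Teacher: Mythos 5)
Your proposal is correct and follows essentially the same route as the paper: the text immediately preceding the corollary establishes $\prob[e^m]\le 3\,c\,p^2$, the authors note that the identical derivation applies to the ghost events, and the corollary is then obtained by a union bound, exactly as you describe. The bookkeeping step you flag (splitting each pairwise intersection $\bigwedge_{i\neq j} e^m_i$ into the disjoint union of the triple intersection and the ``exactly that pair'' event) is indeed the content of the first three lines of the paper's displayed derivation, and your justification of it is accurate.
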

This corollary allows us to use significantly smaller data sets in order to validate the sensing system. For example, if we would like to achieve a safety-critic mistake probability of $10^{-9}$, instead of taking order of $10^9$ examples, it suffices to take order of $10^5$ examples and test each system separately. 

It is left to reason about the rational behind the one sided
independence assumption.  There are pairs of sensors that yield
completely non-correlated errors. For example, radar works well in bad
weather conditions but might fail due to non-relevant metallic
objects, as opposed to camera that is affected by bad weather but is
not likely to be affected by metallic objects. Seemingly, camera and
lidar have common sources of mistakes --- both are affected by foggy
weather, heavy rain or snow. However, the type of mistake for camera
and lidar would be different --- camera might miss objects due to bad
weather while lidar might detect a ghost due to reflections from
particles in the air. Since we have distinguished between the two
types of mistakes, the approximate independency is still likely to
hold.

\begin{remark} Our definition of safety-critic ghost requires that the
  situation is dangerous by at least two sensors. We argue that even
  in difficult conditions (e.g. heavy fog), this is unlikely to
  happen. The reason is that in such situations, systems that are
  affected by the difficult conditions (e.g. the lidar), will dictate
  a very defensive driving to the policy, as they can declare that
  high velocity and lateral maneuvers would lead to a dangerous
  situation. As a result, we will drive slowly, and then even if we
  require an emergency stop, it is not dangerous due to the low speed
  of driving. Therefore, an adaptation of the driving style to the
  conditions of the road will follow from the definitions.
\end{remark}

\subsection{Building a scalable sensing system}

We have described the requirements from a sensing system, both in
terms of comfort and safety. We now briefly suggest our approach for
building a sensing system that meets these requirements while being
scalable.

There are three main components of our sensing system. The first is
long range, $360$ degrees coverage, of the scene based on cameras. The
three main advantages of cameras are: (1) high resolution, (2)
texture, (3) price. The low price enables a scalable system. The
texture enables to understand the semantics of the scene, including
lane marks, traffic light, intentions of pedestrians, and more. The
high resolution enables a long range of detection. Furthermore,
detecting lane marks and objects in the same domain enables excellent
semantic lateral accuracy.  The two main disadvantages of cameras are:
(1) the information is 2D and estimating longitudinal distance is
difficult, (2) sensitivity to lighting conditions (low sun, bad
weather). We overcome these difficulties using the next two components
of our system.

The second component of our system is a semantic high-definition
mapping technology, called Road Experience Management (REM).  A common
geometrical approach to map creation is to record a cloud of 3D points
(obtained by a lidar) in the map creation process, and then,
localization on the map is obtained by matching the existing lidar
points to the ones in the map. There are several disadvantages of this
approach. First, it requires a large memory per kilometer of mapping
data, as we need to save many points. This necessitates an expensive
communication infrastructure. Second, only few cars are equipped with
lidar sensors, and therefore, the map is updated very
infrequently. This is problematic as changes in the road can occur
(construction zones, hazards), and the ``time-to-reflect-reality'' of
lidar-based mapping solutions is large. In contrast, REM follows a
semantic-based approach. The idea is to leverage the large number of
vehicles that are equipped with cameras and with software that detects
semantically meaningful objects in the scene (lane marks, curbs,
poles, traffic lights, etc.). Nowadays, many new cars are equipped
with ADAS systems which can be leveraged for crowd source based
creation of the map. Since the processing is done on the vehicle side,
only a small amount of semantic data should be communicated to the
cloud. This allows a very frequent update of the map in a scalable
way. In addition, the autonomous vehicles can receive the small sized
mapping data over existing communication platforms (the cellular
network). Finally, highly accurate localization on the map can be
obtained based on cameras, without the need for expensive lidars.

REM is used for three purposes. First, it gives us a foresight on the
static structure of the road (we can plan for a highway exit way in
advance). Second, it gives us another source of accurate information
of all of the static information, which together with the camera
detections yields a robust view of the static part of the
world. Third, it solves the problem of lifting the 2D information from
the image plane into the 3D world as follows. The map describes all of
the lanes as curves in the 3D world. Localization of the ego vehicle
on the map enables to trivially lift every object on the road from the
image plane to its 3D position. This yields a positioning system that
adheres to the accuracy in semantic units described in
\secref{sec:sensing-comfort}.

The third component of our system is a complementary radar and lidar
system. This system serves two purposes. First, they enable to yield
an extremely high accuracy for the sake of safety (as described in
\secref{sec:sensing-safety}). Second, they give direct measurements on
speed and distances, which further improves the comfort of the ride.

\bibliographystyle{plain}
\bibliography{bib}

\appendix

\section{Technical Lemmas}
\subsection{Technical Lemma 1}\label{lem:1-x}
\begin{lemma}
For all $x\in[0,0.1]$, it holds that $1-x\geq e^{-2x}$.
\end{lemma}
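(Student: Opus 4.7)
The plan is to reduce the inequality to a monotonicity statement about an auxiliary function and verify it by a one-line derivative calculation. Set
\[
f(x) \;=\; 1 - x - e^{-2x}.
\]
The goal is to show $f(x) \ge 0$ on $[0, 0.1]$.

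First I would check the boundary condition $f(0) = 1 - 0 - e^{0} = 0$, so equality holds at the left endpoint. Next I would differentiate: $f'(x) = -1 + 2 e^{-2x}$. The sign of $f'$ is controlled by the comparison $e^{-2x} \ge 1/2$, which is equivalent to $x \le \tfrac{1}{2}\ln 2$. Since $\tfrac{1}{2}\ln 2 \approx 0.3466 > 0.1$, we have $f'(x) \ge 0$ throughout $[0, 0.1]$ (and in fact strictly positive on the interior).

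Therefore $f$ is non-decreasing on $[0, 0.1]$, and combined with $f(0) = 0$ this yields $f(x) \ge 0$ for every $x \in [0, 0.1]$, which is exactly the claim $1 - x \ge e^{-2x}$. There is no real obstacle here; the only thing to be careful about is noting that the critical point of $f$ (namely $x = \tfrac{1}{2}\ln 2$) lies well to the right of the interval of interest, so monotonicity holds on the entire interval $[0, 0.1]$ without needing any further case analysis.
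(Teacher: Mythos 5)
Your proof is correct and follows essentially the same route as the paper: both define $f(x)=1-x-e^{-2x}$, note $f(0)=0$, and show $f'(x)=-1+2e^{-2x}\ge 0$ on $[0,0.1]$. The only (cosmetic) difference is that you locate the exact critical point $x=\tfrac{1}{2}\ln 2$ while the paper verifies $f'(0.1)>0$ numerically.
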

\begin{proof}
Let $f(x)=1-x-e^{-2x}$. Our goal is to show $f(x)\geq 0$ for
$x\in[0,0.1]$. Note that $f(0)=0$, and it is therefore sufficient to
have that $f'(x)\geq 0$ in the aforementioned range. Explicitly,
$f'(x)=-1+2e^{-2x}$. Clearly, $f'(0)=1$, and it is monotonically
decreasing, hence it is sufficient to verify that $f'(0.1)>0$, which
is easy to do numerically, $f'(0.1)\approx 0.637$.
\end{proof}

\section*{Acknowledgments}
We thank Marc Lavabre from Renault for fruitful discussions. We thank Jack Weast for assisting with the write-up of rev.4 of this manuscript.

\end{document}